\newcommand{\pr}{\mathrm{\mathbf{P}}} 
\newcommand{\ex}{\mathrm{\mathbf{E}}} 
\newcommand{\mc}[1]{\mathcal{#1}}%
\newcommand{\mf}[1]{\mathfrak{#1}}%
\newcommand{\KL}[2]{\mathrm{KL}\left(#1\,||\,#2\right)} 
\newcommand{\KLH}[2]{\mathrm{KL}_h \left(#1\,||\,#2\right)} 
\DeclareMathOperator*{\R}{\mathbb{R}}%
\DeclareMathOperator*{\N}{\mathbb{N}}%
\DeclareMathOperator*{\E}{{\ex}}%
\DeclareMathOperator*{\argmax}{arg\,max\,} 
\DeclareMathOperator*{\argmin}{arg\,min\,} 
\DeclareMathOperator*{\normdot}{\lVert\,\cdot\,\rVert}
\newtheorem{theorem}{Theorem}
\newtheorem{lemma}[theorem]{Lemma} 
\newtheorem{proposition}[theorem]{Proposition} 
\newtheorem{remark}[theorem]{Remark}
\newtheorem{corollary}[theorem]{Corollary}
\newtheorem{definition}[theorem]{Definition}
\definecolor{forestgreen}{rgb}{0.13, 0.55, 0.13}
\definecolor{frenchblue}{rgb}{0.0, 0.45, 0.73}
\definecolor{cherryblossompink}{rgb}{1.0, 0.72, 0.77}
\definecolor{bittersweet}{rgb}{1.0, 0.44, 0.37}
\definecolor{navyblue}{rgb}{0.0, 0.0, 0.5}
\newcommand{\cC}{\mathcal{C}}
\title{Risk Bounds for Mixture Density Estimation on Compact Domains via the $h$-Lifted Kullback--Leibler Divergence}
\author{\name Mark Chiu Chong \email \href{mailto:mark.chiuchong@gmail.com}{mark.chiuchong@gmail.com}\\
\addr School of Mathematics and Physics\\
      The University of Queensland\\
      St Lucia, QLD 4072, Australia
\AND
\name Hien Duy Nguyen \email \href{mailto:hien@imi.kyushu-u.ac.jp}{hien@imi.kyushu-u.ac.jp}\\
\addr Institute of Mathematics for Industry\\
      Kyushu University\\
      Nishi Ward, Fukuoka 819-0395, Japan;\vspace{0.5em}\\
\addr School of Computing, Engineering, and Mathematical Sciences\\
      La Trobe University\\
      Bundoora, VIC 3086, Australia;
\AND
\name TrungTin Nguyen \email \href{mailto:trungtin.nguyen@uq.edu.au}{trungtin.nguyen@uq.edu.au}\\
\addr School of Mathematics and Physics\\
      The University of Queensland\\
      St Lucia, QLD 4072, Australia
}
\begin{document}
\maketitle
\begin{abstract}
    We consider the problem of estimating probability density functions based on sample data, using a finite mixture of densities from some component class. To this end, we introduce the $h$-lifted Kullback--Leibler~(KL) divergence as a generalization of the standard KL divergence and a criterion for conducting risk minimization. Under a compact support assumption, we prove an $\mathcal{O}(1/{\sqrt{n}})$ bound on the expected estimation error when using the $h$-lifted KL divergence, which extends the results of \citet[ESAIM: Probability and Statistics, Vol. 9]{rakhlin_risk_2005} and \citet[Advances in Neural Information Processing Systems, Vol. 12]{li_mixture_1999} to permit the risk bounding of density functions that are not strictly positive. We develop a procedure for the computation of the corresponding maximum $h$-lifted likelihood estimators ($h$-MLLEs) using the Majorization-Maximization framework and provide experimental results in support of our theoretical bounds.
\end{abstract}
\section{Introduction}\label{sec_Intro}%
Let $\left(\Omega,\mathfrak{A},{\pr}\right)$ be an abstract probability space and let $X:\Omega\to\mathcal{X}$ be a random variable taking values in the measurable space $\left(\mathcal{X},\mathfrak{F}\right)$, where $\mathcal{X}$ is a compact metric space equipped with its Borel $\sigma$-algebra $\mathfrak{F}$. Suppose that we observe an independent and identically distributed (i.i.d.) sample of random variables $\mathbf{X}_{n}=\left(X_{i}\right)_{i\in\left[n\right]}$, where $\left[n\right] = \left\{1,\ldots,n\right\}$, and that each $X_{i}$ arises from the same data generating process as $X$, characterized by the probability measure $F\ll\mu$ on $\left(\mathcal{X},\mathfrak{F}\right)$, with density function $f=\mathrm{d}F/\mathrm{d}\mu$, for some $\sigma$-finite $\mu$.
In this work, we are concerned with the estimating $f$ via a data dependent double-index sequence of estimators $\left(f_{k,n}\right)_{k,n\in\mathbb{N}}$, where
\begin{align*}
    f_{k,n}\in\mathcal{C}_{k}
    =\mathrm{co}_{k}\left(\mathcal{P}\right)=\left\{ f_{k}\left(\cdot;\psi_{k}\right)=\sum_{j=1}^{k}\pi_{j}\varphi\left(\cdot;\theta_{j}\right)\mid\varphi\left(\cdot;\theta_{j}\right)\in\mathcal{P},\, \pi_{j}\ge0, \, j\in\left[k\right],\sum_{j=1}^{k}\pi_{j}=1\right\},
\end{align*}
for each $k,n\in\mathbb{N}$, and where 
\begin{equation}
    \mathcal{P}=\left\{ \varphi\left(\cdot;\theta\right):\mathcal{X}\rightarrow\mathbb{R}_{\ge0}\mid\theta\in\Theta\subset\mathbb{R}^{d}\right\},\label{P_def}
\end{equation}
$\psi_{k}=\left(\pi_{1},\dots,\pi_{k},\theta_{1},\dots,\theta_{k}\right)$, and $d\in\mathbb{N}$. To ensure the measurability and existence of various optima, we shall assume that $\varphi$ is \textit{Carathéodory} in the sense that $\varphi\left(\cdot;\theta\right)$ is $\left(\mathcal{X},\mathfrak{F}\right)$-measurable for each $\theta\in\Theta$, and $\varphi\left(X;\cdot\right)$ is continuous for each $X\in\mathcal{X}$.

In the definition above, we can identify the set $\mathcal{C}_{k}=\mathrm{co}_{k}\left(\mathcal{P}\right)$ as the set of density functions that can be written as a convex combination of $k$ elements of $\mathcal{P}$, where $\mathcal{P}$ is often called the space of component density functions. We then interpret $\mathcal{C}_{k}$ as the class of $k$-component finite mixtures of densities of class $\mathcal{P}$, as studied, for example, by \cite{mclachlan2004finite,nguyen_approximation_2020,nguyen_approximation_2022}.
\subsection{Risk bounds for mixture density estimation}\label{subsec_risk_bounds_for_mde}
We are particularly interested in oracle bounds of the form
\begin{equation}
    {\ex}\left\{ \ell\left(f,f_{k,n}\right)\right\} -\ell\left(f,\mathcal{C}\right)\le\rho\left(k,n\right), \label{eq: generic risk bound}
\end{equation}
where $\left(p,q\right)\mapsto\ell\left(p,q\right)\in\mathbb{R}_{\ge0}$ is a loss function on pairs of density functions. We define the density-to-class loss
\begin{equation*}
    \ell\left(f,\mathcal{C}\right)=\inf_{q\in\mathcal{C}}\ell\left(f,q\right),\ \mathcal{C}=\mathrm{cl}\left(\bigcup_{k\in\mathbb{N}}\mathrm{co}_{k}\left(\mathcal{P}\right)\right),
\end{equation*}
where $\mathrm{cl}(\cdot)$ is the closure. Here, we identify $\left(k,n\right)\mapsto\rho\left(k,n\right)$ as a characterization of the rate at which the left-hand side of~\eqref{eq: generic risk bound} converges to zero as $k$ and $n$ increase.
Our present work follows the research of \cite{li_mixture_1999}, \cite{rakhlin_risk_2005} and \cite{klemela2007density} (see also \citealt[Ch. 19]{klemela2009smoothing}). In \cite{li_mixture_1999} and \cite{rakhlin_risk_2005}, the authors consider the case where $\ell\left(p,q\right)$ is taken to be the Kullback--Leibler (KL) divergence
\begin{equation*}
    \KL{p}{q}=\int p\log\frac{p}{q} \mathrm{d}\mu
\end{equation*}
and $f_{k,n}=f_{k}\left(\cdot;{\psi}_{k,n}\right)$ is a maximum likelihood estimator (MLE), where
\begin{equation*}
    {\psi}_{k,n}\in\underset{\psi_{k}\in\mathcal{S}_{k}\times\Theta^{k}}\argmax \frac{1}{n}\sum_{i=1}^{n}\log f_{k}\left(X_{i};\psi_{k}\right),
\end{equation*}
is a function of $\mathbf{X}_n$, with $\mathcal{S}_{k}$ denoting the probability simplex in $\mathbb{R}^{k}$. 

Under the assumption that $f,f_{k}\ge a$, for some $a>0$ and each $k\in\left[n\right]$ (i.e., strict positivity), \cite{li_mixture_1999} obtained the bound
\begin{equation*}
    {\ex}\left\{\KL{f}{f_{k,n}}\right\} -\KL{f}{\mc{C}}\le c_{1}\frac{1}{k}+c_{2}\frac{k\log\left(c_{3}n\right)}{n},
\end{equation*}
for constants $c_{1},c_{2},c_{3}>0$, which was then improved by \cite{rakhlin_risk_2005} who obtained the bound
\begin{equation*}
    {\ex}\left\{\KL{f}{f_{k,n}}\right\} -\KL{f}{\mc{C}}\le c_{1}\frac{1}{k}+c_{2}\frac{1}{\sqrt{n}},
\end{equation*}
for constants $c_{1},c_{2}>0$ (constants $(c_{j})_{j\in\mathbb{N}}$ are typically different between expressions).

Alternatively, \cite{klemela2007density} takes $\ell\left(p,q\right)$ to be the squared $L_{2}\left(\mu\right)$ norm  distance (i.e., the least-squares loss):
\begin{equation*}
    \ell\left(p,q\right)=\lVert p-q\rVert _{2,\mu}^2,
\end{equation*}
where $\lVert p\rVert_{2,\mu}^{2}=\int_{\mathcal{X}}\lvert p\rvert^{2}\mathrm{d}\mu$, for each $p\in L_{2}\left(\mu\right)$, and choose $f_{k,n}$ as minimizers of the $L_{2}\left(\mu\right)$ empirical risk, i.e., $f_{k,n}=f_{k}\left(\cdot;{\psi}_{k,n}\right)$, where
\begin{equation}
    \psi_{k,n}\in\underset{\psi_{k}\in\mathcal{S}_{k}\times\Theta^{k}}\argmin -\frac{2}{n}\sum_{i=1}^{n}f_{k}\left(\cdot;\psi_{k}\right)+\left\Vert f_{k}\left(\cdot;\psi_{k}\right)\right\Vert_{2,\mu}^{2}. \label{eq_LS_density_estimator}
\end{equation}
Here, \cite{klemela2007density} establish the bound
\begin{equation*}
    {\ex}\left\Vert f-{f}_{k,n}\right\Vert _{2,\mu}^2-\inf_{q\in\mathcal{C}}\left\Vert f-q\right\Vert _{2,\mu}^2\le c_{1}\frac{1}{k}+c_{2}\frac{1}{\sqrt{n}},
\end{equation*}
$c_{1},c_{2}>0$, without the lower bound assumption on $f,f_{k}$ above, even permitting $\mathcal{X}$ to be unbounded. Via the main results of \cite{naito_density_2013}, the bound above can be generalized to the $U$-divergences, which includes the special $L_2(\mu)$ norm distance as a special case.

On the one hand, the sequence of MLEs required for the results of \cite{li_mixture_1999} and \cite{rakhlin_risk_2005} are typically computable, for example, via the usual expectation--maximization approach (cf. \citealt[Ch. 2]{mclachlan2004finite}). This contrasts with the computation of least-squares density estimators of form \eqref{eq_LS_density_estimator}, which requires evaluations of the typically intractable integral expressions $\left\Vert f_{k}\left(\cdot;\psi_{k}\right)\right\Vert _{2}^{2}$. However, the least-squares approach of \cite{klemela2007density} permits the analysis using families $\mathcal{P}$ of usual interest, such as normal distributions and beta distributions, the latter of which being compactly supported but having densities that cannot be bounded away from zero without restrictions, and thus do not satisfy the regularity conditions of \cite{li_mixture_1999} and \cite{rakhlin_risk_2005}.
\subsection{Main contributions}\label{subsec: main_contrib}%
We propose the following $h$-lifted KL divergence, as a generalization of the standard KL divergence to address the computationally tractable estimation of density functions which do not satisfy the regularity conditions of \cite{li_mixture_1999} and \cite{rakhlin_risk_2005}. 
The use of the $h$-lifted KL divergence has the possibility to advance theories based on the standard KL divergence in statistical machine learning.
To this end, let $h:\mathcal{X}\rightarrow \mathbb{R}_{\ge 0}$ be a function in $L_1(\mu)$, and define the $h$-lifted KL divergence by:
\begin{equation}
    \KLH{p}{q}= \int_{\mathcal{X}} \left\{p+h\right\} \log \frac{p+h}{q + h} \mathrm{d}\mu. \label{eq_hKL}
\end{equation}
In the sequel, we shall show that $\mathrm{KL}_h$ is a Bregman divergence on the space of probability density functions, as per~\cite{csiszar1995generalized}. 

Assume that $h$ is a probability density function, and let $\mathbf{Y}_{n}=\left(Y_{i}\right)_{i\in\left[n\right]}$ be a an i.i.d. sample, independent of $\mathbf{X}_{n}$, where each $Y_{i}:\Omega\rightarrow\mathcal{X}$ is a random variable with probability measure on $(\mathcal{X},\mathfrak{F})$, characterized by the density $h$ with respect to $\mu$. Then, for each $k$ and $n$, let $f_{k,n}$ be defined via the maximum $h$-lifted likelihood estimator ($h$-MLLE; see Appendix~\ref{sec_origin_hliftedLE} for further discussion) $f_{k,n} = f_{k}\left(\cdot; {\psi}_{k,n}\right)$, where
\begin{equation}
    {\psi}_{k,n}\in\underset{\psi_{k}\in\mathcal{S}_{k}\times\Theta^{k}}\argmax~ \frac{1}{n}\sum_{i=1}^{n}\left(\log\left\{ f_{k}\left(X_{i};\psi_{k}\right)+h\left(X_{i}\right)\right\} +\log\left\{ f_{k}\left(Y_{i};\psi_{k}\right)+h\left(Y_{i}\right)\right\} \right).\label{eq_hMLLE}
\end{equation}
The primary aim of this work is to show that
\begin{equation}
    {\ex}\left\{\KLH{f}{f_{k,n}}\right\} - \KLH{f}{\mc{C}} \le c_{1}\frac{1}{k}+c_{2}\frac{1}{\sqrt{n}} \label{eq: the_aim}
\end{equation}
for some constants $c_{1},c_{2}>0$, without requiring the strict positivity assumption that $f,f_{k}\ge a > 0$. 

This result is a compromise between the works of \cite{li_mixture_1999} and \cite{rakhlin_risk_2005}, and \cite{klemela2007density}, as it applies to a broader space of component densities $\mathcal{P}$, and because the required $h$-MLLEs \eqref{eq_hMLLE} can be efficiently computed via minorization--maximization (MM) algorithms (see e.g., \citealt{lange2016mm}). We shall discuss this assertion in Section~\ref{sec_numerical_experiments}.
\subsection{Relevant literature}\label{sec_literature}
Our work largely follows the approach of \cite{li_mixture_1999}, which was extended upon by \cite{rakhlin_risk_2005} and \cite{klemela2007density}. All three texts use approaches based on the availability of greedy algorithms for maximizing convex functions with convex functional domains. In this work, we shall make use of the proof techniques of \cite{zhang_sequential_2003}. Related results in this direction can be found in \cite{devore2016convex} and \cite{temlyakov2016convergence}. Making the same boundedness assumption as \cite{rakhlin_risk_2005}, \cite{dalalyan2018optimal} obtain refined oracle inequalities under the additional assumption that the class $\mathcal{P}$ is finite. Numerical implementations of greedy algorithms for estimating finite mixtures of Gaussian densities were studied by \cite{vlassis2002greedy} and \cite{verbeek2003efficient}.

The $h$-MLLE as an optimization objective can be compared to other similar modified likelihood estimators, such as the $L_q$ likelihood of \cite{ferrari2010maximum} and \cite{qin2013maximum}, the $\beta$-likelihood of \cite{basu1998robust} and \cite{fujisawa2006robust}, penalized likelihood estimators, such as maximum a posteriori estimators of Bayesian models, or $f$-separable Bregman distortion measures of \cite{kobayashi_unbiased_2024,kobayashi_generalized_2021}.

The practical computation of the $h$-MLLEs, \eqref{eq_hMLLE}, is made possible via the MM algorithm framework of \cite{lange2016mm}, see also \cite{hunter2004tutorial}, \cite{wu2010mm}, and \cite{nguyen2017introduction} for further details. Such algorithms have  well-studied global convergence properties and can be modified for mini-batch and stochastic settings (see, e.g., \citealp{razaviyayn_unified_2013} and \citealp{nguyen2022online}).

A related and popular setting of investigations is that of model selection, where the objects of interest are single-index sequences $\left(f_{k_{n},n}\right)_{n\in\mathbb{N}}$, and where the aim is to obtain finite-sample bounds for losses of the form $\ell\left(f_{k_{n},n},f\right)$, where each $k_{n}\in\mathbb{N}$ is a data dependent function, often obtained by optimizing some penalized loss criterion, as described in \cite{massart2007concentration}, \citet[Ch. 6]{koltchinskii2011oracle}, and \citet[Ch. 2]{giraud2021introduction}. In the context of finite mixtures, examples of such analyses can be found in the works of \cite{maugis2011non} and \cite{maugis2013adaptive}. A comprehensive bibliography of model selection results for finite mixtures and related statistical models can be found in \cite{nguyen2022non}.
\subsection{Organization of paper}\label{subsec: organization}
The manuscript is organized as follows. In Section~\ref{sec_hliftedKL}, we formally define the $h$-lifted KL divergence as a Bregman divergence and establish several of its properties. In Section~\ref{sec: main results}, we present new risk bounds for the $h$-lifted KL divergence of the form~\eqref{eq: generic risk bound}. In Section~\ref{sec_numerical_experiments}, we discuss the computation of the $h$-lifted likelihood estimator in the form of~\eqref{eq_hMLLE}, followed by empirical results illustrating the convergence of~\eqref{eq: generic risk bound} with respect to both $k$ and $n$. Additional insights and technical results are provided in the Appendices at the end of the manuscript.
\section{The \texorpdfstring{$h$}{h}-lifted KL divergence and its properties}\label{sec_hliftedKL}
In this section we formally define the $h$-lifted KL divergence on the space of density functions and establish some of its properties. 
\begin{definition}
    [$h$-lifted KL divergence]
    Let $f, g$, and $h$ be probability density functions on the space $\mc{X}$, where $h > 0$. The $h$-lifted \emph{KL} divergence from $g$ to $f$ is defined as follows:
    \begin{equation*}
        \KLH{f}{g} = \int_{\mc{X}}\left\{f+h\right\} \log \frac{f+h}{g+h}\mathrm{d}\mu = {\E}_{f} \left\{\log\frac{f+h}{g+h}\right\} + {\E}_{h}\left\{\log \frac{f+h}{g+h}\right\}.
    \end{equation*}
\end{definition}
\subsection{\texorpdfstring{$\textrm{KL}_h$}{KLh} as a Bregman divergence}%
Let $\phi: \mathcal{I}\to\R$, $\mathcal{I}=(0,\infty)$ be a strictly convex function that is continuously differentiable. The Bregman divergence between scalars $d_\phi: \mathcal{I}\times \mathcal{I}\to \mathbb{R}_{\ge0}$ generated by the function $\phi$ is given by:
\begin{equation*}
    d_\phi(p, q) = \phi(p) - \phi(q) - \phi^{\prime}(q)(p - q),
\end{equation*}
where $\phi^{\prime}(q)$ denotes the derivative of $\phi$ at $q$. 

Bregman divergences possess several useful properties, including the following list:
\begin{enumerate}[itemsep=3pt]
    \item Non-negativity: $d_\phi(p, q) \geq 0$ for all $p,q \in \mathcal{I}$ with equality if and only if $p=q$;
    \item Asymmetry: $d_\phi(p,q)\neq d_\phi(q,p)$ in general;
    \item Convexity: $d_\phi(p, q)$ is convex in $p$ for every fixed $q\in \mathcal{I}$. 
    \item Linearity: $d_{c_1\phi_1+c_2\phi_2}(p,q) = c_1 d_{\phi_1}(p,q) + c_2 d_{\phi_2}(p,q)$ for $c_1, c_2 \geq 0$.
\end{enumerate}
The properties for Bregman divergences between scalars can be extended to density functions and other functional spaces, as established in \cite{frigyik2008functional} and \cite{stummer2012bregman}, for example. We also direct the interested reader to the works of \cite{pardo2006statistical}, \cite{basu2011statistical}, and \cite{amari2016information}. 

The class of $h$-lifted KL divergences constitute a generalization of the usual KL divergence and are a subset of the Bregman divergences over the space of density functions that are considered by \cite{csiszar1995generalized}. Namely, let $\mc{P}$ be a convex set of probability densities with respect to the measure $\mu$ on $\mc{X}$. The Bregman divergence $D_\phi: \mc{P}\times\mc{P}\to[0,\infty)$ between densities $p,q \in \mc{P}$ can be constructed as follows:
\begin{equation*}
    D_\phi(p\, ||\, q) = \int_\mathcal{X} d_\phi\left(p(x), q(x)\right)\mathrm{d}\mu(x).
\end{equation*}
The $h$-lifted KL divergence $\mathrm{KL}_h$ as a Bregman divergence is generated by the function $\phi(u)=(u+h)\log(u+h)-(u+h)+1$. This assertion is demonstrated in Appendix~\ref{App1proof_prop_hlifted}.
\subsection{Advantages of the \texorpdfstring{$h$}{h}-lifted KL divergence}\label{sec_advantage_hlifted}
When the standard KL divergence is employed in the density estimation problem, it is common to restrict consideration of density functions to those bounded away from zero by some positive constant. That is, one typically considers the smaller class of so-called \textit{admissible} target densities $\mathcal{P}_\alpha \subset \mathcal{P}$ (cf. \citealp{zeevi97}), where 
\begin{equation*}
    \mathcal{P}_\alpha = \left\{\varphi(\cdot; \theta)\in\mathcal{P} \mid \varphi(\cdot; \theta) \geq\alpha >0\right\}.
\end{equation*}
Without this restriction, the standard KL divergence can be unbounded, even for functions with bounded $L_1$ norms. For example, let $p$ and $q$ be densities of beta distributions on the support $\mathcal{X}=\left[0,1\right]$. That is, suppose that $p,q\in\mathcal{P}_\mathrm{beta}$, respectively characterized by parameters $\theta_{p}=\left(a_{p},b_{p}\right)$ and $\theta_{q}=\left(a_{q},b_{q}\right)$, where
\begin{equation}
    \mathcal{P}_\mathrm{beta}=\left\{ x\mapsto\beta\left(x;\theta\right)=\frac{\Gamma\left(a+b\right)}{\Gamma\left(a\right)\Gamma\left(b\right)}x^{a-1}\left(1-x\right)^{b-1},\theta=\left(a,b\right)\in\mathbb{R}_{>0}^{2}\right\}.\label{eq: beta densities}
\end{equation}
Then, from \citet{gil2013renyi}, the KL divergence between $p$ and $q$ is given by:
\begin{align*}
    \KL{p}{q}  &= \log\left\{ \frac{\Gamma\left(a_{q}\right)\Gamma\left(b_{q}\right)}{\Gamma\left(a_{q}+b_{q}\right)}\right\} -\log\left\{ \frac{\Gamma\left(a_{p}\right)\Gamma\left(b_{p}\right)}{\Gamma\left(a_{p}+b_{p}\right)}\right\}  \\
     &\quad +\left(a_{p}-a_{q}\right)\left\{ \psi\left(a_{p}\right)-\psi\left(a_{p}+b_{p}\right)\right\} 
      +\left(b_{p}-b_{q}\right)\left\{ \psi\left(b_{p}\right)-\psi\left(a_{p}+b_{p}\right)\right\},
\end{align*}
where $\psi:\mathbb{R}_{>0}\rightarrow\mathbb{R}$ is the digamma function. Next, suppose that $a_{p}=b_{q}$ and $a_{q}=b_{p}=1$, which leads to the simplification
\begin{equation*}
    \KL{p}{q}=\left(a_{p}-1\right)\left\{ \psi\left(a_{p}\right)-\psi(1)\right\}.
\end{equation*}
Since $\psi$ is strictly increasing, we observe that the right-hand side diverges as $a_{p}\to\infty$. Thus, the KL divergence between beta distributions is unbounded.
The $h$-lifted KL divergence in contrast does not suffer from this problem, and does not require the restriction to $\mathcal{P}_\alpha$. This allows us to consider cases where $p,q \in \mc{P}$ are not bounded away from $0$, as per the following result.
\begin{proposition}\label{prof_klh_bounded}
    Let $\mc{P}$ be defined as in~\eqref{P_def}. $\KLH{f}{g}$ is bounded for all continuous densities $f,g \in \mc{P}$.
\end{proposition}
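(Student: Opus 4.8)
The plan is a direct pointwise estimate of the integrand $(f+h)\log\frac{f+h}{g+h}$. Since $\mathrm{KL}_h$ was just identified as a Bregman divergence on the space of densities, $\KLH{f}{g}$ is a well-defined element of $[0,\infty]$ and is in particular non-negative, so it suffices to exhibit a finite \emph{upper} bound. Two consequences of compactness will do the work: (i) $f$ is continuous on the compact $\mathcal{X}$, so $M_f := \sup_{x\in\mathcal{X}} f(x) < \infty$; and (ii) $h$ is positive and continuous on the compact $\mathcal{X}$ (or, in the applications, simply the uniform density), so $h_{\min} := \inf_{x\in\mathcal{X}} h(x) > 0$, and hence also $\mu(\mathcal{X}) = \int_{\mathcal{X}} 1\,\mathrm{d}\mu \le h_{\min}^{-1}\int_{\mathcal{X}} h\,\mathrm{d}\mu = h_{\min}^{-1} < \infty$. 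I will also use that $g \ge 0$ pointwise.

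Then I would chain elementary inequalities. Because $g \ge 0$, we have $g+h \ge h$, so $\frac{f+h}{g+h} \le \frac{f+h}{h} = 1 + \frac{f}{h}$; multiplying by $f+h>0$ gives the pointwise bound $(f+h)\log\frac{f+h}{g+h} \le (f+h)\log\!\left(1+\tfrac{f}{h}\right)$. Applying $\log(1+t)\le t$ with $t = f/h \ge 0$ yields $(f+h)\log\frac{f+h}{g+h} \le (f+h)\tfrac{f}{h} = \tfrac{f^{2}}{h} + f$. Integrating over $\mathcal{X}$ and using $\int_{\mathcal{X}} f\,\mathrm{d}\mu = 1$, $f\le M_f$, and $h\ge h_{\min}$,
\begin{equation*}
    0 \;\le\; \KLH{f}{g} \;\le\; \int_{\mathcal{X}} \frac{f^{2}}{h}\,\mathrm{d}\mu + 1 \;\le\; \frac{M_f}{h_{\min}}\int_{\mathcal{X}} f\,\mathrm{d}\mu + 1 \;=\; \frac{M_f}{h_{\min}} + 1 \;<\; \infty,
\end{equation*}
which is the claim. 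An essentially equivalent route bounds the integrand two-sidedly, $0\le f+h\le M_f+\sup_{\mathcal X}h$ and $\frac{h_{\min}}{M_g+\sup_{\mathcal X}h}\le\frac{f+h}{g+h}\le\frac{M_f+\sup_{\mathcal X}h}{h_{\min}}$ with $M_g:=\sup_{\mathcal X}g<\infty$, so that $\bigl|(f+h)\log\frac{f+h}{g+h}\bigr|$ is bounded by an explicit constant and is integrated against the finite measure $\mu$ restricted to $\mathcal{X}$.

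I expect the only delicate point — and the reason the statement is restricted to \emph{continuous} $f,g$ — to be securing the two boundedness facts $M_f<\infty$ and $h_{\min}>0$; the inequality chain itself is routine. Continuity of $f$ on the compact $\mathcal{X}$ is precisely what rules out densities like the beta densities of~\eqref{eq: beta densities} with a shape parameter below $1$, which are unbounded near the boundary and for which $\KLH{f}{g}$ would itself diverge — the natural analogue of the classical admissibility restriction to $\mathcal{P}_\alpha$. And $h_{\min}>0$ is genuinely needed: if $h$ were merely a positive $L_1(\mu)$ density permitted to approach $0$, one can choose a continuous $g$ vanishing (fast enough) where $h$ does so that $\int_{\mathcal{X}}(f+h)\bigl|\log(g+h)\bigr|\,\mathrm{d}\mu=\infty$.
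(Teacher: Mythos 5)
Your proof is correct and rests on the same two compactness facts the paper uses ($\sup_{\mathcal X} f<\infty$ and $\inf_{\mathcal X}(g+h)>0$, the latter via $h$ bounded below); your ``essentially equivalent route'' with the two-sided bound on the log-ratio is in fact exactly the paper's argument, which bounds $\int\tilde f\log(\tilde f/\tilde g)\,\mathrm d\mu$ by $\sup_{x}\log(\tilde f/\tilde g)\cdot\int\tilde f\,\mathrm d\mu=2M$. The substitution of $\log(1+t)\le t$ in your primary chain is only a cosmetic variation, so this counts as the same approach.
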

\begin{proof}
    See Appendix~\ref{sec_prof_klh_bounded}.
\end{proof}
Let $L_p(f,g)$ denote the standard $L_{p}$-norm, $L_p(f,g) = \left\{\int_{\mc{X}} \left\lvert f(x) - g(x)\right\rvert^p\mathrm{d}\mu(x)\right\}^{1/p}$. As remarked previously, \cite{klemela2007density} established empirical risk bounds in terms of the $L_2$-norm distance. Following results from \cite{zeevi97} characterizing the relationship between the KL divergence in terms of the $L_2$-norm distance, in Proposition~\ref{prop_klh_l2} we establish the corresponding relationship between the $h$-lifted KL divergence and the $L_2$-norm distance, along with a relationship between the $h$-lifted KL divergence and the $L_1$-norm distance.
\begin{proposition}\label{prop_klh_l2}%
    For probability density functions $f,\, g,\,$and $h$, where $h$ is such that $h(x) \geq \gamma> 0$ for all $x \in \mc{X}$, the following inequalities hold:
    \begin{equation*}
       \frac{1}{4}L_{1}^{2}\left(f,g\right)\le\KLH{f}{g}\le\gamma^{-1}L_2^2\left(f,g\right).
    \end{equation*}
\end{proposition}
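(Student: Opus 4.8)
The plan is to prove the two inequalities separately, each by a pointwise comparison argument integrated over $\mathcal{X}$.

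For the lower bound $\tfrac14 L_1^2(f,g) \le \KLH{f}{g}$, I would invoke a Pinsker-type inequality. Observe that $\KLH{f}{g} = \KL{(f+h)/2}{(g+h)/2}$ up to the normalization: indeed $f+h$ and $g+h$ are both nonnegative functions integrating to $2$, so $(f+h)/2$ and $(g+h)/2$ are bona fide probability densities with respect to $\mu$, and
\begin{equation*}
    \KLH{f}{g} = \int_{\mathcal{X}} (f+h)\log\frac{f+h}{g+h}\,\mathrm{d}\mu = 2\int_{\mathcal{X}}\frac{f+h}{2}\log\frac{(f+h)/2}{(g+h)/2}\,\mathrm{d}\mu.
\end{equation*}
Applying the classical Pinsker inequality $\KL{P}{Q} \ge \tfrac12 \lVert P - Q\rVert_{\mathrm{TV}}^2 \cdot (\text{const})$ — more precisely $\KL{p}{q} \ge \tfrac12\left(\int|p-q|\,\mathrm{d}\mu\right)^2$ for densities $p,q$ — to the pair $(f+h)/2, (g+h)/2$ gives
\begin{equation*}
    \KLH{f}{g} \ge 2\cdot\frac12\left(\int_{\mathcal{X}}\left\lvert\frac{f+h}{2}-\frac{g+h}{2}\right\rvert\mathrm{d}\mu\right)^2 = \left(\frac12\int_{\mathcal{X}}|f-g|\,\mathrm{d}\mu\right)^2 = \frac14 L_1^2(f,g),
\end{equation*}
since $(f+h)-(g+h) = f-g$. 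This is clean; the only thing to be careful about is confirming Pinsker's inequality applies to densities against a general $\sigma$-finite $\mu$, which is standard.

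For the upper bound $\KLH{f}{g} \le \gamma^{-1} L_2^2(f,g)$, the plan is a pointwise bound on the Bregman integrand. Using the elementary inequality $\log t \le t - 1$ for $t > 0$, write, at each $x$,
\begin{equation*}
    (f+h)\log\frac{f+h}{g+h} = -(f+h)\log\frac{g+h}{f+h} \le -(f+h)\left(\frac{g+h}{f+h}-1\right)\cdot(-1)?
\end{equation*}
That sign-chase is the delicate part, so instead I would use the sharper two-sided estimate: for $u,v>0$, $u\log(u/v) \le u\log(u/v) - u + v + (u-v)^2/(2\min(u,v))$ is not quite it either — the cleanest route is the standard bound $d_\phi(p,q) = p\log(p/q) - p + q \le (p-q)^2/q$, valid because $x\log x$ has second derivative $1/x$ and a Taylor remainder argument gives $d_\phi(p,q) = (p-q)^2/(2\xi)$ for some $\xi$ between $q$ and $p$; when $p \ge q$ this is $\le (p-q)^2/(2q)$, and when $p < q$ one bounds $\xi \ge$ ... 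Actually the uniformly valid statement I want is: for the generator $\phi(u) = u\log u - u + 1$, one has $d_\phi(p,q) \le (p-q)^2/q$ whenever... hmm. The honest approach: since $\KLH{f}{g} = D_\phi(f\,\|\,g)$ with $\phi(u) = (u+h)\log(u+h) - (u+h) + 1$ (established in the paper), and $d_\phi(f(x),g(x))$ is the scalar Bregman divergence of $t\mapsto t\log t$ evaluated at the shifted points $f(x)+h(x)$ and $g(x)+h(x)$, Taylor's theorem with integral remainder gives $d_\phi(f,g) = \tfrac12 (f-g)^2 / \xi(x)$ for some $\xi(x)$ between $f(x)+h(x)$ and $g(x)+h(x)$; since both shifted values are $\ge h(x) \ge \gamma$, we get $\xi(x) \ge \gamma$, hence $d_\phi(f(x),g(x)) \le (f(x)-g(x))^2/(2\gamma) \le (f(x)-g(x))^2/\gamma$. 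Integrating over $\mathcal{X}$ yields $\KLH{f}{g} \le \gamma^{-1}\int_{\mathcal{X}}(f-g)^2\,\mathrm{d}\mu = \gamma^{-1}L_2^2(f,g)$.

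The main obstacle is getting the constant and the direction of the Taylor-remainder bound right in the upper-bound half: one must verify that the intermediate point $\xi(x)$ genuinely lies between the two shifted arguments (so that $\xi(x)\ge\gamma$) regardless of whether $f(x) \gtrless g(x)$, and track the factor of $2$ so that the stated $\gamma^{-1}$ (rather than $(2\gamma)^{-1}$) is what comes out — which it does, with room to spare. The lower-bound half via Pinsker is routine once the $\tfrac12(f+h)$ reparametrization is spotted.
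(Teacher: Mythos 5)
Your proof is correct. The lower bound is argued exactly as in the paper: both rewrite $\KLH{f}{g}=2\,\mathrm{KL}\left(\tfrac{f+h}{2}\,\|\,\tfrac{g+h}{2}\right)$ and apply Pinsker's inequality to the half-sum densities, noting that their difference is $\tfrac12(f-g)$. For the upper bound you take a slightly different route from the paper: you bound the scalar Bregman integrand pointwise via Taylor's theorem with Lagrange remainder, $d_\phi\bigl(f(x),g(x)\bigr)=\tfrac{(f(x)-g(x))^2}{2\xi(x)}$ with $\xi(x)$ between $f(x)+h(x)$ and $g(x)+h(x)$, hence $\xi(x)\ge\gamma$; the paper instead applies $\log t\le t-1$ under the integral and then uses the normalization identity $\int(f+h)\,\mathrm{d}\mu=\int(g+h)\,\mathrm{d}\mu=2$ to collapse $\int(f+h)\bigl(\tfrac{f+h}{g+h}-1\bigr)\mathrm{d}\mu$ into $\int\tfrac{(f-g)^2}{g+h}\mathrm{d}\mu\le\gamma^{-1}L_2^2(f,g)$. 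Your pointwise argument is self-contained (it needs no cancellation of the cross terms under the integral) and in fact delivers the sharper constant $(2\gamma)^{-1}$, which of course implies the stated $\gamma^{-1}$; your concerns about the location of the intermediate point are resolved exactly as you suspected, since $f,g\ge0$ forces both shifted arguments, and hence $\xi(x)$, to be at least $h(x)\ge\gamma$.
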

\begin{proof}
    See Appendix~\ref{sec_prop_klh_l2}.
\end{proof}
\begin{remark}
    Proposition~\ref{prof_klh_bounded} highlights the benefit of the $h$-lifted KL divergence being bounded for all continuous densities, unlike the standard KL divergence, while satisfying a relationship similar to that between the KL divergence and the $L_2$ norm distance. Moreover, the first inequality of Proposition~\ref{prop_klh_l2} is a Pinsker-like relationship between the $h$-lifted KL divergence and the total variation distance $\mathrm{TV}(f,g)=\frac{1}{2}L_{1}(f,g)$.
\end{remark}
\section{Main results}\label{sec: main results}%
Here we provide explicit statements regarding the convergence rates claimed in~\eqref{eq: the_aim} via Theorem~\ref{thm_main_result_2} and Corollary~\ref{cor_finiteness}, which are proved in Appendix~\ref{sec_proof_main_results}. We assume that $f$ is bounded above by some constant $c$ and that the lifting function $h$ is bounded above and below by constants $a$ and $b$, respectively.
\begin{theorem}\label{thm_main_result_2}
    Let $h$ be a positive density satisfying $0 < a \leq h(x) \leq b$, for all $x \in \mc{X}$. For any target density $f$ satisfying $0 \leq f(x) \leq c$, for all $x \in \mc{X}$ and where $f_{k,n}$ is the minimizer of $\mathrm{KL}_h$ over $k$-component mixtures, the following inequality holds:
    \begin{equation*} 
        \E\left\{\KLH{f}{f_{k,n}}\right\} - \KLH{f}{\mc{C}} \leq\frac{u_1}{k+2} + \frac{u_2}{\sqrt{n}}\int_{0}^c \log^{1/2} N(\mc{P}, \varepsilon/2,{\normdot}_{\infty}) \mathrm{d}\varepsilon + \frac{u_3}{\sqrt{n}},
    \end{equation*}
    where $u_1$, $u_2$, and $u_3$ are positive constants that depend on some or all of $a$, $b$, and $c$.
\end{theorem}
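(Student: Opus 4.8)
The plan is to follow the greedy-approximation-plus-uniform-deviation scheme of \cite{li_mixture_1999} and \cite{rakhlin_risk_2005}, but carried out entirely inside the $h$-lifted geometry so that the strict positivity assumption can be dropped. The starting observation is that, because $h \geq a > 0$, every quantity $\log(f_k+h)$, $\log(f+h)$, etc., is uniformly bounded (above by $\log(c+b)$ and below by $\log a$) on the compact domain $\mc{X}$, so the $h$-lifted log-likelihood is a bounded loss; this is exactly what replaces the admissibility restriction to $\mc{P}_\alpha$. I would split the left-hand side into an \emph{approximation} term and an \emph{estimation} term by inserting the population $h$-MLLE risk: writing $L_h(q) = \E\{\KLH{f}{q}\}$ with empirical counterpart $\widehat{L}_{h,n}$ built from the objective in \eqref{eq_hMLLE}, I decompose
\begin{equation*}
    \E\{\KLH{f}{f_{k,n}}\} - \KLH{f}{\mc{C}} \;=\; \underbrace{\bigl(\E\{\KLH{f}{f_{k,n}}\} - \inf_{q\in\mc{C}_k}\KLH{f}{q}\bigr)}_{\text{estimation}} \;+\; \underbrace{\bigl(\inf_{q\in\mc{C}_k}\KLH{f}{q} - \KLH{f}{\mc{C}}\bigr)}_{\text{approximation}}.
\end{equation*}

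For the approximation term, I would invoke the convexity of $\mc{C}$ together with the fact that $\KLH{f}{\cdot}$ is a Bregman divergence generated by a function with bounded second derivative on the relevant range (here the second derivative of $\phi(u)=(u+h)\log(u+h)-(u+h)+1$ is $1/(u+h)\leq 1/a$), which is precisely the setting of the greedy / sequential-approximation lemmas of \cite{zhang_sequential_2003}. This yields a bound of the form $\inf_{q\in\mc{C}_k}\KLH{f}{q} - \KLH{f}{\mc{C}} \leq u_1/(k+2)$, where $u_1$ absorbs the diameter of $\mc{P}$ in the $h$-lifted metric (finite by Proposition~\ref{prof_klh_bounded}) and the curvature constant $1/a$; the $k+2$ rather than $k$ comes from the standard refinement in the greedy step analysis. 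For the estimation term, the $h$-MLLE maximizes the empirical $h$-log-likelihood over the (non-convex in parameters, but convex in the mixture) class $\mc{C}_k$, so a symmetrization / uniform-deviation argument over $\mc{C}_k$ is needed; since an element of $\mc{C}_k$ is a $k$-fold convex combination and the bounded transform $q\mapsto \log(q+h)$ is $1/a$-Lipschitz, the covering number of the induced loss class in $\|\cdot\|_\infty$ is controlled by $N(\mc{P},\varepsilon/2,\|\cdot\|_\infty)$ (convex hulls do not inflate the metric entropy integral at the scale we need, by the standard chaining bound for convex hulls), which after a Dudley-type entropy integral produces the term $\tfrac{u_2}{\sqrt{n}}\int_0^c \log^{1/2} N(\mc{P},\varepsilon/2,\|\cdot\|_\infty)\,\mathrm{d}\varepsilon$, with the residual $u_3/\sqrt{n}$ coming from the bounded-differences (McDiarmid) concentration of the empirical process around its mean, the bound on increments being $O(\log((c+b)/a))/n$.

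The main obstacle I anticipate is the estimation term, specifically making the passage from the empirical maximizer to the population minimizer rigorous \emph{without} positivity: one must verify that replacing $\KL{\cdot}{\cdot}$ by $\KLH{\cdot}{\cdot}$ keeps the ``basic inequality'' $\widehat{L}_{h,n}(f_{k,n}) \leq \widehat{L}_{h,n}(q)$ usable, i.e. that the excess $h$-lifted risk is controlled by twice the supremum of the centered empirical process over $\mc{C}_k$ — this needs the boundedness of the $h$-lifted loss (which we have) and the identity, special to the KL-type Bregman divergence, that the population risk gap equals the $h$-lifted divergence itself, so no variance-to-mean (Bernstein-type localization) trickery is required and a crude $\sqrt{n}$ rate suffices. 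A secondary technical point is justifying that the two-sample form of \eqref{eq_hMLLE} (using both $\mathbf{X}_n$ and the auxiliary $\mathbf{Y}_n$ drawn from $h$) is the correct unbiased empirical surrogate for $\E_f\{\cdot\}+\E_h\{\cdot\}$ in the definition of $\KLH{f}{\cdot}$; this is bookkeeping but must be done carefully so that the concentration is applied to the right object. Combining the two displayed terms gives the stated inequality with $u_1,u_2,u_3$ depending only on $a,b,c$.
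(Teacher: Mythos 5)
Your overall route is the paper's route: the same approximation/estimation decomposition, Zhang's sequential greedy bound with curvature constant of order $c^{2}/a^{2}$ for the $1/(k+2)$ term (Propositions~\ref{prop: Zhang 1} and~\ref{prop_Zhang_2}), and a McDiarmid--symmetrization--contraction--Dudley chain for the $1/\sqrt{n}$ terms (Theorem~\ref{thm_main_result_1}), applied separately to the $\mathbf{X}_n$ and $\mathbf{Y}_n$ halves of the two-sample objective exactly as you anticipate. Your observation that the basic inequality $\widehat{L}_{h,n}(f_{k,n})\le\widehat{L}_{h,n}(q)$ together with boundedness of the lifted loss suffices, with no localization, is also how the paper argues; in fact the paper bounds the empirical excess term even more loosely, by a second application of the greedy bound to the empirical objective with target $f_k$ and $\inf_{p\in\mc{C}}\KLH{f_k}{p}=0$.

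The one step whose stated justification would fail is the parenthetical claim that ``convex hulls do not inflate the metric entropy integral,'' used to control the uniform deviation over $\mc{C}_k$ by $N(\mc{P},\varepsilon/2,{\normdot}_{\infty})$. As a covering-number statement this is false in general: $\mathrm{co}(\mc{P})$ typically has much larger covering numbers than $\mc{P}$, and chaining bounds for convex hulls degrade the entropy exponent rather than preserve it. The paper avoids this entirely by exploiting linearity: after the Lipschitz comparison inequality is applied twice (once to remove the logarithm, once to remove the division by $\tilde{f}$), what remains is the supremum of the Rademacher process --- a linear functional --- over $\mc{C}$, and by Lemma~\ref{lem: convex_hull_result} (equivalently Lemma~\ref{lem: rademacher_complexity_equality}) this supremum equals the supremum over the extreme points $\mc{P}$, so Dudley's entropy integral is applied to $\mc{P}$ alone and only then converted to the ${\normdot}_{\infty}$ covering number via bracketing. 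The order matters: contraction first, reduction to $\mc{P}$ second, chaining last. With that substitution your argument closes and coincides with the paper's proof.
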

\begin{corollary}\label{cor_finiteness}
    Let $\mathcal{X}$ and $\Theta$ be compact and assume the following Lipschitz condition holds: for each $x\in\mathcal{X}$, and for each $\theta,\tau\in\Theta$,
    \begin{equation}
        \left|\varphi\left(x;\theta\right)-\varphi\left(x;\tau\right)\right|\le\varPhi\left(x\right)\left\Vert \theta-\tau\right\Vert_{1}, \label{eq: lipschitz cond}
    \end{equation}
    for some function $\varPhi:\mathcal{X}\rightarrow\mathbb{R}_{\ge0}$, where $\lVert \varPhi\rVert _{\infty}=\sup_{x\in\mathcal{X}}\lvert\varPhi(x)\rvert<\infty$. Then the bound in Theorem~\ref{thm_main_result_2} becomes
    \begin{equation*} 
        \E\left\{\KLH{f}{f_{k,n}}\right\} - \KLH{f}{\mc{C}} \leq\frac{c_1}{k+2} + \frac{c_2}{\sqrt{n}},
    \end{equation*}
    where $c_1$ and $c_2$ are positive constants.
\end{corollary}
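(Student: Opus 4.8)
The plan is to show that the Lipschitz condition~\eqref{eq: lipschitz cond}, together with compactness of $\mathcal{X}$ and $\Theta$, forces the covering-number integral $\int_0^c \log^{1/2} N(\mathcal{P},\varepsilon/2,\normdot_\infty)\,\mathrm{d}\varepsilon$ appearing in Theorem~\ref{thm_main_result_2} to be finite; the stated bound then follows immediately by absorbing that finite integral into the constant $c_2$. So the whole corollary reduces to a metric-entropy estimate for the parametric class $\mathcal{P}$.

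First I would translate covering numbers in the sup-norm on $\mathcal{P}$ into covering numbers of the parameter set $\Theta$. Given $\theta,\tau\in\Theta$, the Lipschitz hypothesis gives $\lVert\varphi(\cdot;\theta)-\varphi(\cdot;\tau)\rVert_\infty \le \lVert\varPhi\rVert_\infty\,\lVert\theta-\tau\rVert_1$. Hence any $\delta$-net of $(\Theta,\lVert\cdot\rVert_1)$ induces a $(\lVert\varPhi\rVert_\infty\delta)$-net of $(\mathcal{P},\normdot_\infty)$, so
\[
    N(\mathcal{P},\varepsilon,\normdot_\infty) \le N\!\left(\Theta,\tfrac{\varepsilon}{\lVert\varPhi\rVert_\infty},\lVert\cdot\rVert_1\right).
\]
Since $\Theta\subset\mathbb{R}^d$ is compact, it sits inside an $\ell_1$-ball of some finite radius $R$, and a standard volumetric argument bounds $N(\Theta,\eta,\lVert\cdot\rVert_1)\le (1+2R/\eta)^d \le (3R/\eta)^d$ for $\eta\le R$. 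Combining, for $\varepsilon$ small one gets $\log N(\mathcal{P},\varepsilon/2,\normdot_\infty)\le d\log(C/\varepsilon)$ for a constant $C$ depending on $R$, $d$, and $\lVert\varPhi\rVert_\infty$.

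Next I would plug this into the entropy integral. We have
\[
    \int_0^c \log^{1/2} N(\mathcal{P},\varepsilon/2,\normdot_\infty)\,\mathrm{d}\varepsilon
    \;\le\; \int_0^c \sqrt{d}\,\bigl(\log (C/\varepsilon)\bigr)_{+}^{1/2}\,\mathrm{d}\varepsilon,
\]
and since $\int_0^c (\log(C/\varepsilon))^{1/2}\,\mathrm{d}\varepsilon < \infty$ (the integrand has only a mild logarithmic singularity at $0$, integrable against $\mathrm{d}\varepsilon$), the right-hand side is a finite constant, call it $\kappa$, depending only on $d$, $\lVert\varPhi\rVert_\infty$, $R$, and $c$. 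Setting $c_1 = u_1$ and $c_2 = u_2\kappa + u_3$ in Theorem~\ref{thm_main_result_2} yields exactly the claimed inequality. One small bookkeeping point: if $C/\varepsilon < 1$ for some $\varepsilon$ in the range (i.e.\ $C < c$) the crude bound $N\ge 1$ makes the log nonnegative trivially, so splitting the integral at $\varepsilon = \min(c,C)$ and using $N=1$ (hence $\log N = 0$) on the remainder keeps everything clean; I would phrase the entropy bound with a $(\cdot)_+$ truncation to avoid this case distinction.

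The routine parts are the volumetric covering bound for $\Theta$ and the convergence of the logarithmic integral. The only place that requires a moment of care is making sure the reduction from $\normdot_\infty$-covering of $\mathcal{P}$ to $\lVert\cdot\rVert_1$-covering of $\Theta$ is valid uniformly — i.e.\ that $\lVert\varPhi\rVert_\infty<\infty$ is genuinely used and that the net on $\Theta$ can be taken inside $\Theta$ (or, if one takes centers in the ambient ball, that this only costs a constant factor). This is where the compactness of $\Theta$ does real work, and it is the one step I would write out carefully rather than wave through; everything else is a direct substitution into Theorem~\ref{thm_main_result_2}.
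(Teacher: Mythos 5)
Your proposal is correct and follows essentially the same route as the paper: reduce the entropy of $\mathcal{P}$ to that of $\Theta$ via the Lipschitz condition, apply the volumetric bound $N(\Theta,\eta,\lVert\cdot\rVert_1)\le(3R/\eta)^d$, and observe that $\int_0^c\log^{1/2}(C/\varepsilon)\,\mathrm{d}\varepsilon<\infty$, so the entropy integral is absorbed into $c_2$. The only difference is cosmetic — the paper detours through bracketing numbers (Kosorok's Lemmas 9.18 and Theorem 9.23) to relate $N(\mathcal{P},\cdot,{\normdot}_\infty)$ to $N(\Theta,\cdot,\lVert\cdot\rVert_1)$, whereas you push a $\delta$-net of $\Theta$ forward directly, which is slightly more elementary and equally valid.
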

\begin{remark}
    Our results are applicable to any compact metric space $\mathcal{X}$, with $\left[0,1\right]$ used in the experimental setup in Section~\ref{subsec_experimental_setup} as a simple and tractable example to illustrate key aspects of our theory. There is no issue in generalizing to $\mathcal{X} = \left[-m,m\right]^d$ for $m > 0$ and $d \in \mathbb{N}$, or more abstractly, to any compact subset $\mathcal{X} \subset \mathbb{R}^d$. Additionally, $\mathcal{X}$ could even be taken as a functional compact space, though establishing compactness and constructing appropriate component classes $\mathcal{P}$ over such spaces to achieve small approximation errors $\KLH{f}{\mathcal{C}}$ is an approximation theoretic task that falls outside the scope of our work.
    
    From the proof of Theorem~\ref{thm_main_result_2} in Appendix~\ref{sec_proof_main_results}, it is clear that the dimensionality of $\mathcal{X}$ only influences our bound through the complexity of the class $\mathcal{P}$, specifically, the constant $\int_{0}^{c} \log^{1/2} N(\mathcal{P}, \varepsilon/2, {\normdot}_{\infty})\mathrm{d}\varepsilon$, which remains independent of both $n$ and $k$. Here, $N(\mc{P}, \varepsilon, \normdot)$ is the $\varepsilon$-covering number of $\mc{P}$. In fact, the constant with respect to $k$ ($u_{1}$ in Theorem~\ref{thm_main_result_2}) is entirely unaffected by the dimensionality of $\mathcal{X}$. Thus, the rates of our bound on the expected $h$-lifted KL divergence are dimension-independent and hold even when $\mathcal{X}$ is infinite-dimensional, as long as there exists a class $\mathcal{P}$ such that $\int_{0}^{c} \log^{1/2} N(\mathcal{P}, \varepsilon/2, {\normdot}_{\infty})\mathrm{d}\varepsilon$ is finite. 
    
    Corollary~\ref{cor_finiteness} provides a method for obtaining such a bound when the elements of $\mathcal{P}$ satisfy a Lipschitz condition.
\end{remark}
\section{Numerical experiments}\label{sec_numerical_experiments}%
Here we discuss the computability and computation of $\mathrm{KL}_h$ estimation problems and provide empirical evidence towards the rates obtained in Theorem~\ref{thm_main_result_2}. Specifically, we seek to develop a methodology for computing $h$-MLLEs, and to use numerical experiments to demonstrate that the sequence of expected $h$-lifted KL divergences between some density $f$ and a sequence of $k$-component mixture densities from a suitable class $\mathcal{P}$, estimated using $n$ observations does indeed decrease at rates proportional to $1/k$ and $1/\sqrt{n}$, as $k$ and $n$ increase.

The code for all simulations and analyses in Experiments 1 and 2 is available in both the \texttt{R} and \texttt{Python} programming languages. The code repository is available here: \url{https://github.com/hiendn/LiftedLikelihood}.
\subsection{Minorization--Maximization algorithm}\label{subsec_MM_algorithm}%
One solution for computing \eqref{eq_hMLLE} is to employ an MM algorithm. To do so, we first write the objective of~\eqref{eq_hMLLE} as
\begin{equation*}
    L_{h,n}\left(\psi_{k}\right)=\frac{1}{n}\sum_{i=1}^{n}\left(\log\left\{ \sum_{j=1}^{k}\pi_{j}\varphi\left(X_{i};\theta_{j}\right)+h\left(X_{i}\right)\right\} +\log\left\{ \sum_{j=1}^{k}\pi_{j}\varphi\left(Y_{i};\theta_{j}\right)+h\left(Y_{i}\right)\right\} \right),
\end{equation*}
where $\psi_{k}\in\Psi_{k}=\mathcal{S}_{k}\times\Theta^{k}$. We then require the definition of a minorizer $Q_{n}$ for $L_{h,n}$ on the space $\Psi_{k}$, where $Q_{n}:\Psi_{k}\times\Psi_{k}\rightarrow\mathbb{R}$ is a function with the properties: 
\begin{enumerate}[label=(\roman*), itemsep=3pt, topsep=3pt]
    \item $Q_{n}\left(\psi_{k},\psi_{k}\right)=L_{h,n}\left(\psi_{k}\right)$, and
    \item $Q_{n}\left(\psi_{k},\chi_{k}\right)\le L_{h,n}\left(\psi_{k}\right)$,
\end{enumerate}
for each $\psi_{k},\chi_{k}\in\Psi_{k}$. In this context, given a fixed $\chi_{k}$, the minorizer $Q_{n}\left(\cdot,\chi_{k}\right)$ should possess properties that simplify it compared to the original objective $L_{h,n}$. These properties should make the minorizer more tractable and might include features such as parametric separability, differentiability, convexity, among others.

In order to build an appropriate minorizer for $L_{h,n}$, we make use of the so-called Jensen's inequality minorizer, as detailed in \citet[Sec. 4.3]{lange2016mm}, applied to the logarithm function. This construction results in a minorizer of the form
\begin{eqnarray*}
    Q_{n}\left(\psi_{k},\chi_{k}\right) & = & \frac{1}{n}\sum_{i=1}^{n}\sum_{j=1}^{k}\left\{ \tau_{j}\left(X_{i};\chi_{k}\right)\log\pi_{j}+\tau_{j}\left(X_{i};\chi_{k}\right)\log\varphi\left(X_{i};\theta_{j}\right)\right\} \\
    &  & +\frac{1}{n}\sum_{i=1}^{n}\sum_{j=1}^{k}\left\{ \tau_{j}\left(Y_{i};\chi_{k}\right)\log\pi_{j}+\tau_{j}\left(Y_{i};\chi_{k}\right)\log\varphi\left(Y_{i};\theta_{j}\right)\right\} \\
    &  & +\frac{1}{n}\sum_{i=1}^{n}\left\{ \gamma\left(X_{i};\chi_{k}\right)\log h\left(X_{i}\right)+\gamma\left(Y_{i};\chi_{k}\right)\log h\left(Y_{i}\right)\right\} \\
    &  & -\frac{1}{n}\sum_{i=1}^{n}\sum_{j=1}^{k}\left\{ \tau_{j}\left(X_{i};\chi_{k}\right)\log\tau_{j}\left(X_{i};\chi_{k}\right)+\tau_{j}\left(Y_{i};\chi_{k}\right)\log\tau_{j}\left(Y_{i};\chi_{k}\right)\right\} \\
    &  & -\frac{1}{n}\sum_{i=1}^{n}\left\{ \gamma\left(X_{i};\chi_{k}\right)\log\gamma\left(X_{i};\chi_{k}\right)+\gamma\left(Y_{i};\chi_{k}\right)\log\gamma\left(Y_{i};\chi_{k}\right)\right\} 
\end{eqnarray*}
where
\begin{equation*}
    \gamma\left(X_{i};\psi_{k}\right)=h\left(X_{i}\right)/\left\{ \sum_{j=1}^{k}\pi_{j}\varphi\left(X_{i};\theta_{j}\right)+h\left(X_{i}\right)\right\} 
\end{equation*}
and
\begin{equation*}
    \tau_{j}\left(X_{i};\psi_{k}\right)=\pi_{j}\varphi\left(X_{i};\theta_{j}\right)/\left\{ \sum_{j=1}^{k}\pi_{j}\varphi\left(X_{i};\theta_{j}\right)+h\left(X_{i}\right)\right\}.
\end{equation*}
Observe that $Q_{n}\left(\cdot,\chi_{k}\right)$ now takes the form of a sum-of-logarithms, as opposed to the more challenging log-of-sum form of $L_{h,n}$. This change produces a functional separation of the elements of $\psi_{k}$.

Using $Q_{n}$, we then define the MM algorithm via the parameter sequence $\left(\psi_{k}^{\left(s\right)}\right)_{s\in\mathbb{N}}$, where 
\begin{equation}
    \psi_{k}^{\left(s\right)}=\underset{\psi_{k}\in\Psi_{k}}\argmax ~Q_{n}\left(\psi_{k},\psi_{k}^{\left(s-1\right)}\right), \label{eq_MM_algo_generic}
\end{equation}
for each $s>0$, and where $\psi_{k}^{\left(0\right)}$ is user chosen and is typically referred to as the initialization of the algorithm. Notice that for each $s$, \eqref{eq_MM_algo_generic} is a simpler optimization problem than~\eqref{eq_hMLLE}. Writing $\psi_{k}^{\left(s\right)}=\left(\pi_{1}^{\left(s\right)},\dots,\pi_{k}^{\left(s\right)},\theta_{1}^{\left(s\right)},\dots,\theta_{k}^{\left(s\right)}\right)$, we observe that~\eqref{eq_MM_algo_generic} simplifies to the separated expressions:
\begin{equation*}
    \pi_{j}^{\left(s\right)}=\frac{\sum_{i=1}^{n}\left\{ \tau_{j}\left(X_{i};\psi_{k}^{\left(s-1\right)}\right)+\tau_{j}\left(Y_{i};\psi_{k}^{\left(s-1\right)}\right)\right\} }{\sum_{i=1}^{n}\sum_{l=1}^{k}\left\{ \tau_{l}\left(X_{i};\psi_{k}^{\left(s-1\right)}\right)+\tau_{l}\left(Y_{i};\psi_{k}^{\left(s-1\right)}\right)\right\} }
\end{equation*}
and
\begin{equation*}
    \theta_{j}^{\left(s\right)}=\underset{\theta_{j}\in\Theta}\argmax \frac{1}{n}\sum_{i=1}^{n}\left\{ \tau_{j}\left(X_{i};\psi_{k}^{\left(s-1\right)}\right)\log\varphi\left(X_{i};\theta_{j}\right)+\tau_{j}\left(Y_{i};\psi_{k}^{\left(s-1\right)}\right)\log\varphi\left(Y_{i};\theta_{j}\right)\right\},
\end{equation*}
for each $j\in\left[k\right]$.

A noteworthy property of the MM sequence $\left(\psi_{k}^{\left(s\right)}\right)_{s\in\mathbb{N}}$ is that it generates an increasing sequence of objective values, due to the chain of inequalities
\begin{equation*}
    L_{h,n}\left(\psi_{k}^{\left(s-1\right)}\right)=Q_{n}\left(\psi_{k}^{\left(s-1\right)},\psi_{k}^{\left(s-1\right)}\right)\le Q_{n}\left(\psi_{k}^{\left(s\right)},\psi_{k}^{\left(s-1\right)}\right)\le L_{h,n}\left(\psi_{k}^{\left(s\right)}\right),
\end{equation*}
where the equality is due to property (i) of $Q_{n}$, the first in equality is due to the definition of $\psi_{k}^{\left(s\right)}$, and the second inequality is due to property (ii) of $Q_{n}$. This provides a kind of stability and regularity to the sequence $\left(L_{h,n}\left(\psi_{k}^{\left(s\right)}\right)\right)_{s\in\mathbb{N}}$.

Of course, we can provide stronger guarantees under additional assumptions. Namely, assume that (iii) $\Psi_{k}\subset\varPsi_{k}$, where $\varPsi_{k}$ is an open set in a finite dimensional Euclidean space on which $L_{h,n}$ and $Q_{n}\left(\cdot,\chi_{k}\right)$ is differentiable, for each $\chi_{k}\in\Psi_{k}$. Then, under assumptions (i)--(iii) regarding $L_{h,n}$ and $Q_{n}$, and due to the compactness of $\Psi_k$ and the continuity of $Q_{n}$ on $\Psi_{k}\times\Psi_{k}$, \citet[Cor. 1]{razaviyayn_unified_2013} implies that $\left(\psi_{k}^{\left(s\right)}\right)_{s\in\mathbb{N}}$ converges to the set of stationary points of $L_{h,n}$ in the sense that
\begin{equation*}
    \lim_{s\rightarrow\infty}\inf_{\psi_{k}^{*}\in\Psi_{k}^{*}}\left\Vert \psi_{k}^{\left(s\right)}-\psi_{k}^{*}\right\Vert_2 =0, \text{ where }
    \Psi_{k}^{*}=\left\{ \psi_{k}^{*}\in\Psi_{k}:\left.\frac{\partial L_{h,n}}{\partial\psi_{k}}\right|_{\psi_{k}=\psi_{k}^{*}}=0\right\}.
\end{equation*}
More concisely, we say that the sequence $\left(\psi_{k}^{\left(s\right)}\right)_{s\in\mathbb{N}}$ globally converges to the set of stationary points $\Psi_{k}^{*}$.
\subsection{Experimental setup}\label{subsec_experimental_setup}%
Towards the task of demonstrating empirical evidence of the rates in Theorem~\ref{thm_main_result_2}, we consider the family of beta distributions on the unit interval $\mathcal{X}=\left[0,1\right]$ as our base class (i.e., \eqref{eq: beta densities})
to estimate a pair of target densities
\begin{equation*}
    f_{1}\left(x\right)=\frac{1}{2}\chi_{\left[0,2/5\right]}\left(x\right)+\frac{1}{2}\chi_{\left[3/5,1\right]}\left(x\right),
\end{equation*}
and
\begin{equation*}
    f_{2}\left(x\right)=\chi_{\left[0,1\right]}\left(x\right)
    \begin{cases}
        2-4x & \text{if }x\le1/2,\\
        -2+4x & \text{if }x>1/2,
    \end{cases}
\end{equation*}
where $\chi_{\mathcal{A}}$ is the characteristic function that takes value $1$ if $x\in\mathcal{A}$ and $0$, otherwise. Note that neither $f_{1}$ nor $f_{2}$ are in $\mathcal{C}$. In particular, $f_{1}\left(x\right)=0$ when $x\in\left(\frac{2}{5},\frac{3}{5}\right)$, and $f_{2}(x)=0$ when $x=1/2$, and hence neither densities are bounded away from 0, on $\mathcal{X}$. Thus, the theory of \cite{rakhlin_risk_2005} cannot be applied to provide bounds for the expected KL divergence between MLEs of beta mixtures and the pair of targets. We visualize $f_1$ and $f_2$ in Figure~\ref{fig: Densities}.

\begin{figure}[h!]
    \centering
    \includegraphics[width=12.5cm]{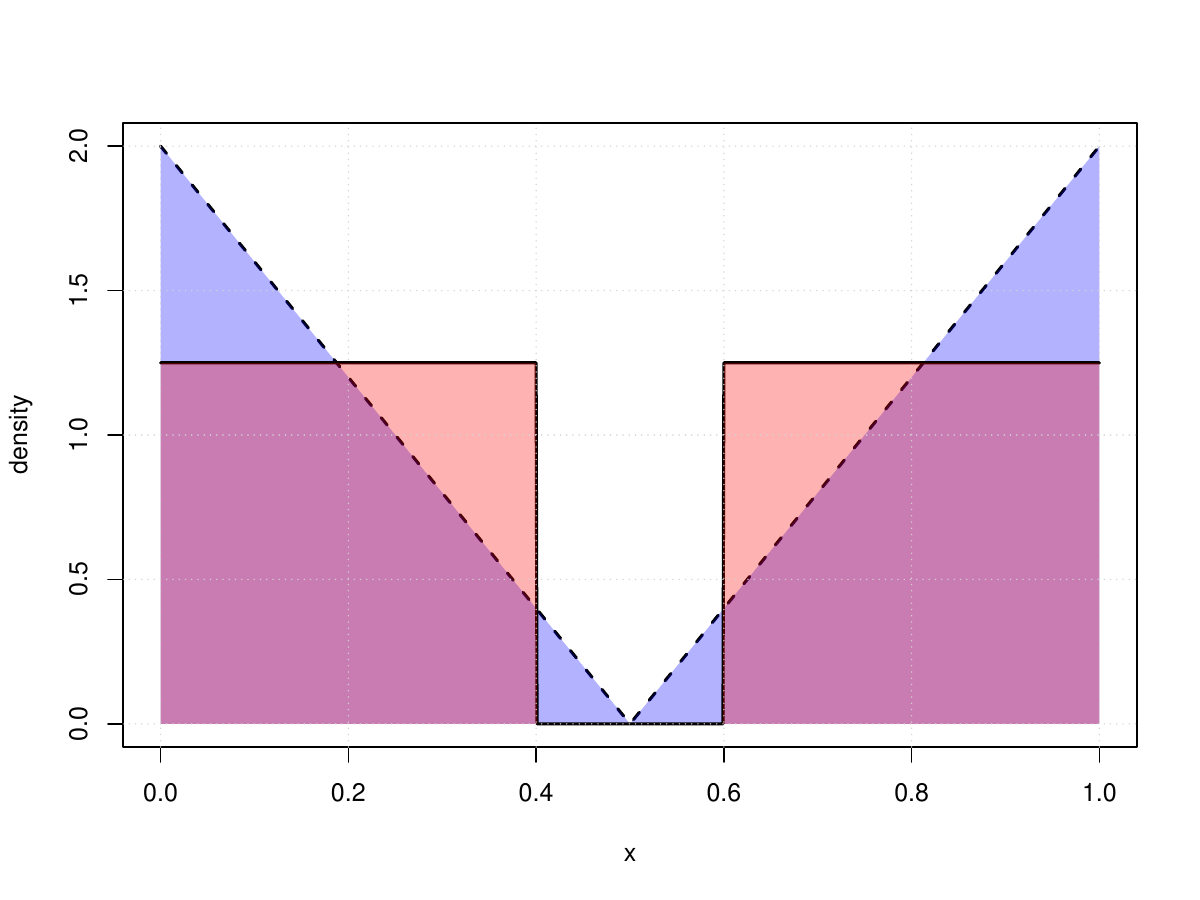}
    \caption{Simulation target densities $f_{1}$ (solid line) and $f_{2}$ (dashed line).} \label{fig: Densities}
\end{figure}
To observe the rate of decrease of the $h$-lifted KL divergence between the targets and respective sequences of $h$-MLLEs, we conduct two experiments \textbf{E1} and \textbf{E2}. In \textbf{E1}, our target density is set to $f_{1}$ and $h_{1}=\beta\left(\cdot;1/2,1/2\right)$. For each $n\in\left\{ 2^{10},\dots,2^{15}\right\}$ and $k\in\left\{ 2,\dots,8\right\}$, we independently simulate $\mathbf{X}_{n}$ and $\mathbf{Y}_{n}$ with each $X_{i}$ and $Y_{i}$ ($i\in\left[n\right]$), i.i.d., from the distributions characterized by $f_{1}$ and $h_{1}$, respectively. In \textbf{E2}, we target $f_{2}$ with $h$-MLLEs over the same ranges of $k$ and $n$, but with $h_{2}=\beta\left(\cdot;1,1\right)$--the density of the uniform distribution. For each $k$ and $n$, we simulate $\mathbf{X}_{n}$ and $\mathbf{Y}_{n}$, respectively, from distributions characterized by $f_{2}$ and $h_{2}$.

In both experiments, we simulate $r=50$ replicates of each $\left(k,n\right)$-scenario and compute the corresponding $h$-MLLEs, $\left(f_{k,n,l}\right)_{l\in[r]}$, using the previously described MM algorithm. For each $l\in\left[r\right]$, we compute the corresponding negative log $h$-lifted likelihood between the target $f$ and $f_{k,n,l}$:
\begin{equation*}
    K_{k,n,l}=-\int_{\mathcal{X}}\left(f+h\right)\log\left(f_{k,n,l}+h\right)\mathrm{d}\mu
\end{equation*}
to assess the rates, and note that
\begin{equation*}
    \KLH{f}{f_{k,n,l}}=\int_{\mathcal{X}}\left(f+h\right)\log\left(f+h\right)\mathrm{d}\mu+K_{k,n,l},
\end{equation*}
where the prior term is a constant with respect to $k$ and $n$.

To analyze the sample of $7\times6\times50=2100$ observations of relationship between the values $\left(K_{k,n,l}\right)_{l\in[r]}$ and the corresponding values of $k$ and $n$, we use non-linear least squares \citep[Sec. 4.3]{amemiya1985advanced} to fit the regression relationship
\begin{equation}
    {\E}\left[K_{k,n,l}\right]=a_{0}+\frac{a_{1}}{\left(k+2\right)^{b_{1}}}+\frac{a_{2}}{n^{b_{2}}}. \label{eq: Regression for Sim}
\end{equation}
We obtain $95\%$ asymptotic confidence intervals for the estimates of the regression parameters $a_{0}$, $a_1$, $a_2$, $b_1$, $b_2\in\mathbb{R}$, under the assumption of potential mis-specification of~\eqref{eq: Regression for Sim}, by using the sandwich estimator for the asymptotic covariance matrix (cf.~\citealt{white1982maximum}).
\subsection{Results}\label{subsec: results}%
We report the estimates along with $95\%$ asymptotic confidence intervals for the parameters of~\eqref{eq: Regression for Sim} for \textbf{E1} and \textbf{E2} in Table~\ref{tab_Parameter_estimates}. Plots of the average negative log $h$-lifted likelihood values by sample sizes $n$ and numbers of components $k$ are provided in Figure~\ref{fig: Average h likes}.

\begin{table}[h!]
    \caption{\label{tab_Parameter_estimates}
    Estimates of parameters for fitted relationships (with $95\%$ confidence intervals) between negative log $h$-lifted likelihood values, sample size and number of mixture components for experiments \textbf{E1} and \textbf{E2}.} 
    \centering
    \begin{tabular}{cccccc}
        \hline 
        \textbf{E1} & $a_{0}$ & $a_{1}$ & $a_{2}$ & $b_{1}$ & $b_{2}$\tabularnewline
        \hline 
        Est. & $-1.68$ & $0.73$ & $6.80$ & $1.87$ & $0.99$\tabularnewline
        95\% CI & $\left(-1.68,-1.67\right)$ & $\left(0.68,0.78\right)$ & $\left(1.24,12.36\right)$ & $\left(1.81,1.93\right)$ & $\left(0.87,1.11\right)$\tabularnewline
        \hline 
        \textbf{E2} & $a_{0}$ & $a_{1}$ & $a_{2}$ & $b_{1}$ & $b_{2}$\tabularnewline
        \hline 
        Est. & $-1.47$ & $1.49$ & $6.75$ & $4.36$ & $1.07$\tabularnewline
        95\% CI & $\left(-1.48,-1.47\right)$ & $\left(0.58,2.41\right)$ & $\left(2.17,11.32\right)$ & $\left(3.91,4.81\right)$ & $\left(0.97,1.16\right)$\tabularnewline
        \hline 
    \end{tabular}
\end{table}

\begin{figure}[h!]
    \centering
    \includegraphics[width=12cm]{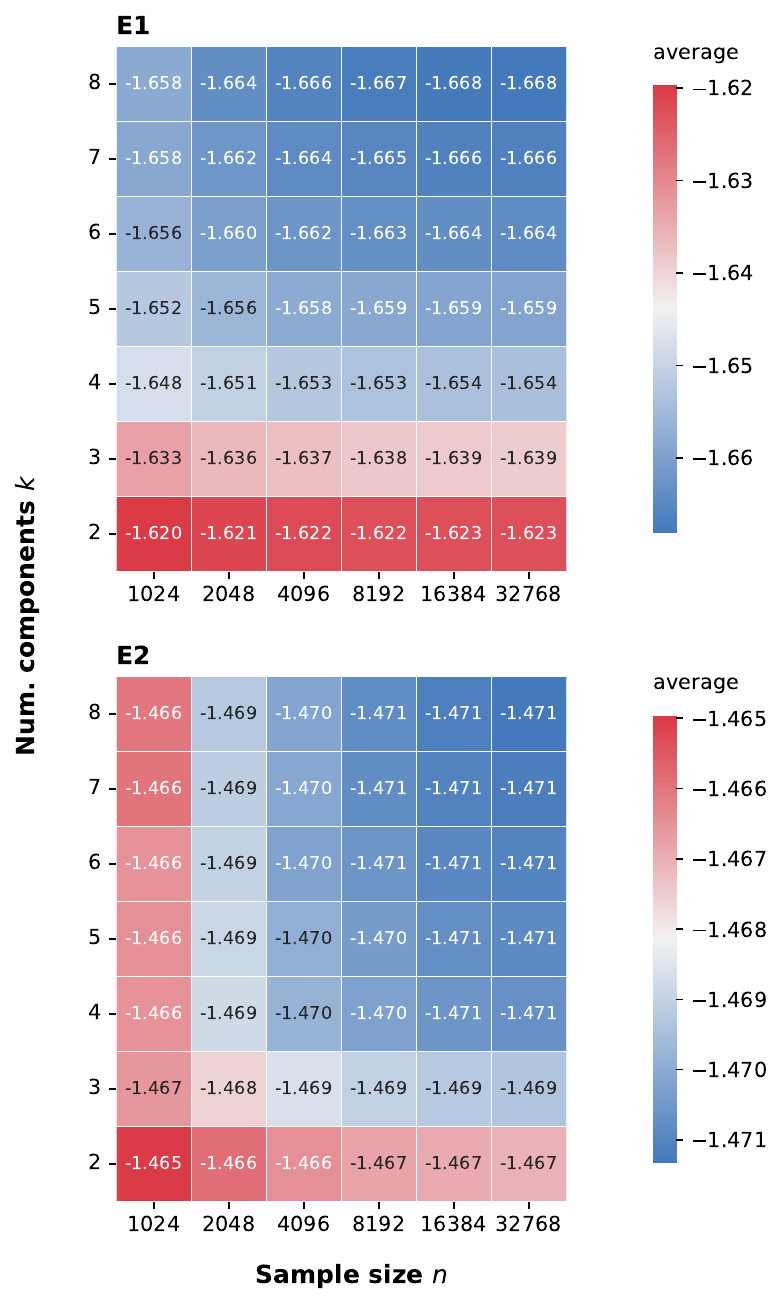}
    \par
    \caption{\label{fig: Average h likes}Average negative log $h$-lifted likelihood
    values by sample sizes $n$ and numbers of components $k$ for experiments
    \textbf{E1} and \textbf{E2}.}
\end{figure}

From Table~\ref{tab_Parameter_estimates}, we observe that $\E\left[K_{k,n,l}\right]$ decreases with both $n$ and $k$ in both simulations, and that the rates at which the averages decrease are faster than anticipated by Theorem~\ref{thm_main_result_2}, with respect to both $n$ and $k$. We can visually confirm the decreases in the estimate of $\E\left[K_{k,n,l}\right]$ via Figure~\ref{fig: Average h likes}. In both \textbf{E1} and \textbf{E2}, the rate of decrease over the assessed range of $n$ is approximately proportional to $1/n$, as opposed to the anticipated rate of $1/\sqrt{n}$, whereas the rate of decrease in $k$ is far larger, at approximately $1/k^{1.87}$ for \textbf{E1} and $1/k^{4.36}$ for \textbf{E2}.

These observations provide empirical evidence towards the fact that the rate of decrease of $\E \left[K_{k,n,l}\right]$ is at least $1/k$ and $1/\sqrt{n}$, respectively, for $k$ and $n$, at least over the simulation scenarios. These fast rates of fit over small values of $n$ and $k$ may be indicative of a diminishing returns of fit phenomenon, as discussed in \cite{cadez2000model} or the so-called elbow phenomenon (see, e.g., \citealt[Sec. 4.2]{ritter2014robust}), whereupon the rate of decrease in average loss for small values of $k$ is fast and becomes slower as $k$ increases, converging to some asymptotic rate. This is also the reason why we do not include the outcomes when $k=1$, as the drop in $\E \left[K_{k,n,l}\right]$ between $k=1$ and $k=2$ is so dramatic that it makes our simulated data ill-fitted by any model of form~\eqref{eq: Regression for Sim}. As such, we do not view Theorem~\ref{thm_main_result_2} as being pessimistic in light of these phenomena, as it applies uniformly over all values of $k$ and $n$.
\newpage
\section{Conclusion}\label{sec: conclusion}%
The estimation of probability densities using finite mixtures from some base class 
$\mathcal{P}$ appears often in machine learning and statistical inference as a natural method for modelling underlying data generating processes. In this work, we pursue novel generalization bounds for such mixture estimators. To this end, we introduce the family of $h$-lifted KL divergences for densities on compact supports, within the family of Bregman divergences, which correspond to risk functions that can be bounded, even when densities in the class $\mathcal{P}$ are not bounded away from zero, unlike the standard KL divergence. 

Unlike the least-squares loss, the corresponding maximum $h$-likelihood estimation problem can be computed via an MM algorithm, mirroring the availability of EM algorithms for the maximum likelihood problem corresponding to the KL divergence. Along with our derivations of generalization bounds that achieve the same rates as the best-known bounds for the KL divergence and least square loss, we also provide numerical evidence towards the correctness of these bounds in the case when $\mathcal{P}$ corresponds to beta densities.

Aside from beta distributions, mixture densities on compact supports that can be analysed under our framework appear frequently in the literature. For supports on compact Euclidean subset, examples include mixtures of Dirichlet distributions (\citealp{fan2012variational}) and bivariate binomial distributions (\citealp{papageorgiou1994countable}). Alternatively, one can consider distributions on compact Euclidean manifolds, such as mixtures of Kent (\citealp{peel2001fitting}) distributions and von Mises--Fisher distributions (\citealp{banerjee2005clustering}, \citealp{ng2022universal}). We defer investigating the practical performance of the maximum $h$-lifted likelihood estimators and accompanying theory for such models to future work.

\subsubsection*{Acknowledgments}
We express sincere gratitude to the Reviewers and Action Editor for their valuable feedback, which has helped to improve the quality of this paper. Hien Duy Nguyen and TrungTin Nguyen acknowledge funding from the Australian Research Council grant DP230100905.

\bibliographystyle{tmlr}
\bibliography{bib}

\begin{thebibliography}{68}
\providecommand{\natexlab}[1]{#1}
\providecommand{\url}[1]{\texttt{#1}}
\expandafter\ifx\csname urlstyle\endcsname\relax
  \providecommand{\doi}[1]{doi: #1}\else
  \providecommand{\doi}{doi: \begingroup \urlstyle{rm}\Url}\fi

\bibitem[Amari(2016)]{amari2016information}
Shun-ichi Amari.
\newblock \emph{Information Geometry and Its Applications}, volume 194.
\newblock Springer, New York, 2016.

\bibitem[Amemiya(1985)]{amemiya1985advanced}
Takeshi Amemiya.
\newblock \emph{Advanced econometrics}.
\newblock Harvard University Press, 1985.

\bibitem[Banerjee et~al.(2005)Banerjee, Dhillon, Ghosh, Sra, and Ridgeway]{banerjee2005clustering}
Arindam Banerjee, Inderjit~S Dhillon, Joydeep Ghosh, Suvrit Sra, and Greg Ridgeway.
\newblock Clustering on the unit hypersphere using von {Mises-Fisher} distributions.
\newblock \emph{Journal of Machine Learning Research}, 6\penalty0 (9), 2005.

\bibitem[Bartlett \& Mendelson(2002)Bartlett and Mendelson]{bartlett2002rademacher}
Peter~L Bartlett and Shahar Mendelson.
\newblock Rademacher and gaussian complexities: Risk bounds and structural results.
\newblock \emph{Journal of Machine Learning Research}, 3\penalty0 (Nov):\penalty0 463--482, 2002.

\bibitem[Basu et~al.(1998)Basu, Harris, Hjort, and Jones]{basu1998robust}
Ayanendranath Basu, Ian~R Harris, Nils~L Hjort, and MC~Jones.
\newblock Robust and efficient estimation by minimising a density power divergence.
\newblock \emph{Biometrika}, 85\penalty0 (3):\penalty0 549--559, 1998.

\bibitem[Basu et~al.(2011)Basu, Shioya, and Park]{basu2011statistical}
Ayanendranath Basu, Hiroyuki Shioya, and Chanseok Park.
\newblock \emph{Statistical Inference: The Minimum Distance Approach}.
\newblock CRC press, Boca Raton, 2011.

\bibitem[Cadez \& Smyth(2000)Cadez and Smyth]{cadez2000model}
Igor Cadez and Padhraic Smyth.
\newblock Model complexity, goodness of fit and diminishing returns.
\newblock \emph{Advances in Neural Information Processing Systems}, 13, 2000.

\bibitem[Csisz{\'a}r(1995)]{csiszar1995generalized}
Imre Csisz{\'a}r.
\newblock Generalized projections for non-negative functions.
\newblock In \emph{Proceedings of 1995 IEEE International Symposium on Information Theory}, pp.\ ~6. IEEE, 1995.

\bibitem[Dalalyan \& Sebbar(2018)Dalalyan and Sebbar]{dalalyan2018optimal}
Arnak~S Dalalyan and Mehdi Sebbar.
\newblock Optimal kullback--leibler aggregation in mixture density estimation by maximum likelihood.
\newblock \emph{Mathematical Statistics and Learning}, 1\penalty0 (1):\penalty0 1--35, 2018.

\bibitem[DeVore \& Temlyakov(2016)DeVore and Temlyakov]{devore2016convex}
Ronald~A DeVore and Vladimir~N Temlyakov.
\newblock Convex optimization on banach spaces.
\newblock \emph{Foundations of Computational Mathematics}, 16\penalty0 (2):\penalty0 369--394, 2016.

\bibitem[Devroye \& Györfi(1985)Devroye and Györfi]{devroye_nonparametric_1985}
Luc Devroye and László Györfi.
\newblock \emph{Nonparametric {Density} {Estimation}: {The} {L1} {View}}.
\newblock Wiley {Interscience} {Series} in {Discrete} {Mathematics}. Wiley, 1985.

\bibitem[Devroye \& Lugosi(2001)Devroye and Lugosi]{devroye_combinatorial_2001}
Luc Devroye and Gabor Lugosi.
\newblock \emph{Combinatorial {Methods} in {Density} {Estimation}}.
\newblock Springer Science \& Business Media, 2001.

\bibitem[Fan et~al.(2012)Fan, Bouguila, and Ziou]{fan2012variational}
Wentao Fan, Nizar Bouguila, and Djemel Ziou.
\newblock Variational learning for finite dirichlet mixture models and applications.
\newblock \emph{IEEE transactions on neural networks and learning systems}, 23:\penalty0 762--774, 2012.

\bibitem[Ferrari \& Yang(2010)Ferrari and Yang]{ferrari2010maximum}
Davide Ferrari and Yuhong Yang.
\newblock Maximum lq-likelihood method.
\newblock \emph{Annals of Statistics}, 38:\penalty0 573--583, 2010.

\bibitem[Frigyik et~al.(2008)Frigyik, Srivastava, and Gupta]{frigyik2008functional}
B{\'e}la~A Frigyik, Santosh Srivastava, and Maya~R Gupta.
\newblock Functional bregman divergence and bayesian estimation of distributions.
\newblock \emph{IEEE Transactions on Information Theory}, 54\penalty0 (11):\penalty0 5130--5139, 2008.

\bibitem[Fujisawa \& Eguchi(2006)Fujisawa and Eguchi]{fujisawa2006robust}
Hironori Fujisawa and Shinto Eguchi.
\newblock Robust estimation in the normal mixture model.
\newblock \emph{Journal of Statistical Planning and Inference}, 136\penalty0 (11):\penalty0 3989--4011, 2006.

\bibitem[Ghosal(2001)]{ghosal_convergence_2001}
Subhashis Ghosal.
\newblock Convergence rates for density estimation with {Bernstein} polynomials.
\newblock \emph{The Annals of Statistics}, 29\penalty0 (5):\penalty0 1264 -- 1280, 2001.
\newblock Publisher: Institute of Mathematical Statistics.

\bibitem[Gil et~al.(2013)Gil, Alajaji, and Linder]{gil2013renyi}
Manuel Gil, Fady Alajaji, and Tamas Linder.
\newblock R{\'e}nyi divergence measures for commonly used univariate continuous distributions.
\newblock \emph{Information Sciences}, 249:\penalty0 124--131, 2013.

\bibitem[Giraud(2021)]{giraud2021introduction}
Christophe Giraud.
\newblock \emph{Introduction to high-dimensional statistics}.
\newblock CRC Press, 2021.

\bibitem[Haagerup(1981)]{Haagerup1981}
Uffe Haagerup.
\newblock The best constants in the {Khintchine} inequality.
\newblock \emph{Studia Mathematica}, 70\penalty0 (3):\penalty0 231--283, 1981.

\bibitem[Hunter \& Lange(2004)Hunter and Lange]{hunter2004tutorial}
David~R Hunter and Kenneth Lange.
\newblock A tutorial on mm algorithms.
\newblock \emph{The American Statistician}, 58\penalty0 (1):\penalty0 30--37, 2004.

\bibitem[Klemel{\"a}(2007)]{klemela2007density}
Jussi~S Klemel{\"a}.
\newblock Density estimation with stagewise optimization of the empirical risk.
\newblock \emph{Machine Learning}, 67:\penalty0 169--195, 2007.

\bibitem[Klemel{\"a}(2009)]{klemela2009smoothing}
Jussi~S Klemel{\"a}.
\newblock \emph{Smoothing of multivariate data: density estimation and visualization}.
\newblock John Wiley \& Sons, New York, 2009.

\bibitem[Kobayashi \& Watanabe(2021)Kobayashi and Watanabe]{kobayashi_generalized_2021}
Masahiro Kobayashi and Kazuho Watanabe.
\newblock Generalized {Dirichlet}-process-means for f-separable distortion measures.
\newblock \emph{Neurocomputing}, 458:\penalty0 667--689, 2021.

\bibitem[Kobayashi \& Watanabe(2024)Kobayashi and Watanabe]{kobayashi_unbiased_2024}
Masahiro Kobayashi and Kazuho Watanabe.
\newblock Unbiased {Estimating} {Equation} and {Latent} {Bias} under f-{Separable} {Bregman} {Distortion} {Measures}.
\newblock \emph{IEEE Transactions on Information Theory}, 2024.

\bibitem[Koltchinskii(2011)]{koltchinskii2011oracle}
Vladimir Koltchinskii.
\newblock \emph{Oracle Inequalities in Empirical Risk Minimization and Sparse Recovery Problems: {\'E}cole D'{\'E}t{\'e} de Probabilit{\'e}s de Saint-Flour XXXVIII-2008}.
\newblock Lecture Notes in Mathematics. Springer, 2011.

\bibitem[Koltchinskii \& Panchenko(2004)Koltchinskii and Panchenko]{Koltchinskii2004RademacherPA}
Vladimir Koltchinskii and Dmitry Panchenko.
\newblock Rademacher processes and bounding the risk of function learning.
\newblock \emph{arXiv: Probability}, pp.\  443--457, 2004.

\bibitem[Kosorok(2007)]{kosorok2007introduction}
Michael~R. Kosorok.
\newblock \emph{Introduction to Empirical Processes and Semiparametric Inference}.
\newblock Springer Series in Statistics. Springer New York, 2007.

\bibitem[Lange(2013)]{lange_optimization_2013}
Kenneth Lange.
\newblock \emph{Optimization}, volume~95.
\newblock Springer Science \& Business Media, New York, 2013.

\bibitem[Lange(2016)]{lange2016mm}
Kenneth Lange.
\newblock \emph{MM optimization algorithms}.
\newblock SIAM, Philadelphia, 2016.

\bibitem[Li \& Barron(1999)Li and Barron]{li_mixture_1999}
Jonathan Li and Andrew Barron.
\newblock Mixture {Density} {Estimation}.
\newblock In S.~Solla, T.~Leen, and K.~Müller (eds.), \emph{Advances in {Neural} {Information} {Processing} {Systems}}, volume~12. MIT Press, 1999.

\bibitem[Massart(2007)]{massart2007concentration}
Pascal Massart.
\newblock \emph{Concentration inequalities and model selection: Ecole d'Et{\'e} de Probabilit{\'e}s de Saint-Flour XXXIII-2003}.
\newblock Springer, 2007.

\bibitem[Maugis \& Michel(2011)Maugis and Michel]{maugis2011non}
Cathy Maugis and Bertrand Michel.
\newblock A non asymptotic penalized criterion for gaussian mixture model selection.
\newblock \emph{ESAIM: Probability and Statistics}, 15:\penalty0 41--68, 2011.

\bibitem[Maugis-Rabusseau \& Michel(2013)Maugis-Rabusseau and Michel]{maugis2013adaptive}
Cathy Maugis-Rabusseau and Bertrand Michel.
\newblock Adaptive density estimation for clustering with gaussian mixtures.
\newblock \emph{ESAIM: Probability and Statistics}, 17:\penalty0 698--724, 2013.

\bibitem[McDiarmid(1989)]{mcdiarmid_method_1989}
Colin McDiarmid.
\newblock On the method of bounded differences.
\newblock In J.Editor Siemons (ed.), \emph{Surveys in {Combinatorics}, 1989: {Invited} {Papers} at the {Twelfth} {British} {Combinatorial} {Conference}}, London {Mathematical} {Society} {Lecture} {Note} {Series}, pp.\  148--188. Cambridge University Press, 1989.

\bibitem[McDiarmid(1998)]{mcdiarmid_concentration_1998}
Colin McDiarmid.
\newblock Concentration.
\newblock In Michel Habib, Colin McDiarmid, Jorge Ramirez-Alfonsin, and Bruce Reed (eds.), \emph{Probabilistic {Methods} for {Algorithmic} {Discrete} {Mathematics}}, pp.\  195--248. Springer Berlin Heidelberg, Berlin, Heidelberg, 1998.

\bibitem[McLachlan \& Peel(2004)McLachlan and Peel]{mclachlan2004finite}
Geoffrey~J McLachlan and David Peel.
\newblock \emph{Finite Mixture Models}.
\newblock John Wiley \& Sons, 2004.

\bibitem[Meir \& Zeevi(1997)Meir and Zeevi]{zeevi97}
Ronny Meir and Assaf Zeevi.
\newblock Density estimation through convex combinations of densities: Approximation and estimation bounds.
\newblock \emph{Neural Networks}, 10:\penalty0 99--109, 02 1997.

\bibitem[Naito \& Eguchi(2013)Naito and Eguchi]{naito_density_2013}
Kanta Naito and Shinto Eguchi.
\newblock Density estimation with minimization of {U}-divergence.
\newblock \emph{Machine Learning}, 90\penalty0 (1):\penalty0 29--57, January 2013.

\bibitem[Ng \& Kwong(2022)Ng and Kwong]{ng2022universal}
Tin Lok~James Ng and Kwok-Kun Kwong.
\newblock Universal approximation on the hypersphere.
\newblock \emph{Communications in Statistics-Theory and Methods}, 51:\penalty0 8694--8704, 2022.

\bibitem[Nguyen(2017)]{nguyen2017introduction}
Hien~D Nguyen.
\newblock An introduction to majorization-minimization algorithms for machine learning and statistical estimation.
\newblock \emph{Wiley Interdisciplinary Reviews: Data Mining and Knowledge Discovery}, 7\penalty0 (2):\penalty0 e1198, 2017.

\bibitem[Nguyen et~al.(2021)Nguyen, Nguyen, Chamroukhi, and McLachlan]{nguyen_approximations_2021}
Hien~D Nguyen, TrungTin Nguyen, Faicel Chamroukhi, and Geoffrey~J McLachlan.
\newblock Approximations of conditional probability density functions in {Lebesgue} spaces via mixture of experts models.
\newblock \emph{Journal of Statistical Distributions and Applications}, 8\penalty0 (1):\penalty0 13, August 2021.

\bibitem[Nguyen et~al.(2022{\natexlab{a}})Nguyen, Forbes, Fort, and Capp{\'e}]{nguyen2022online}
Hien~D Nguyen, Florence Forbes, Gersende Fort, and Olivier Capp{\'e}.
\newblock An online minorization-maximization algorithm.
\newblock In \emph{Proceedings of the 17th Conference of the International Federation of Classification Societies}, 2022{\natexlab{a}}.

\bibitem[Nguyen et~al.(2020)Nguyen, Nguyen, Chamroukhi, and McLachlan]{nguyen_approximation_2020}
TrungTin Nguyen, Hien~D Nguyen, Faicel Chamroukhi, and Geoffrey~J McLachlan.
\newblock Approximation by finite mixtures of continuous density functions that vanish at infinity.
\newblock \emph{Cogent Mathematics \& Statistics}, 7:\penalty0 1750861, 2020.

\bibitem[Nguyen et~al.(2022{\natexlab{b}})Nguyen, Chamroukhi, Nguyen, and McLachlan]{nguyen_approximation_2022}
TrungTin Nguyen, Faicel Chamroukhi, Hien~D Nguyen, and Geoffrey~J McLachlan.
\newblock Approximation of probability density functions via location-scale finite mixtures in {Lebesgue} spaces.
\newblock \emph{Communications in Statistics - Theory and Methods}, pp.\  1--12, 2022{\natexlab{b}}.

\bibitem[Nguyen et~al.(2022{\natexlab{c}})Nguyen, Nguyen, Chamroukhi, and Forbes]{nguyen2022non}
TrungTin Nguyen, Hien~D Nguyen, Faicel Chamroukhi, and Florence Forbes.
\newblock A non-asymptotic approach for model selection via penalization in high-dimensional mixture of experts models.
\newblock \emph{Electronic Journal of Statistics}, 16\penalty0 (2):\penalty0 4742--4822, 2022{\natexlab{c}}.

\bibitem[Papageorgiou \& David(1994)Papageorgiou and David]{papageorgiou1994countable}
Haris Papageorgiou and Katerina~M David.
\newblock On countable mixtures of bivariate binomial distributions.
\newblock \emph{Biometrical journal}, 36\penalty0 (5):\penalty0 581--601, 1994.

\bibitem[Pardo(2006)]{pardo2006statistical}
Leandro Pardo.
\newblock \emph{Statistical Inference Based on Divergence Measures}.
\newblock CRC Press, Boca Raton, 2006.

\bibitem[Peel et~al.(2001)Peel, Whiten, and McLachlan]{peel2001fitting}
David Peel, William~J Whiten, and Geoffrey~J McLachlan.
\newblock Fitting mixtures of kent distributions to aid in joint set identification.
\newblock \emph{Journal of the American Statistical Association}, 96:\penalty0 56--63, 2001.

\bibitem[Petrone(1999)]{petrone_random_1999}
Sonia Petrone.
\newblock Random {Bernstein} {Polynomials}.
\newblock \emph{Scandinavian Journal of Statistics}, 26\penalty0 (3):\penalty0 373--393, 1999.

\bibitem[Petrone \& Wasserman(2002)Petrone and Wasserman]{petrone_consistency_2002}
Sonia Petrone and Larry Wasserman.
\newblock Consistency of {Bernstein} {Polynomial} {Posteriors}.
\newblock \emph{Journal of the Royal Statistical Society Series B: Statistical Methodology}, 64\penalty0 (1):\penalty0 79--100, March 2002.

\bibitem[Qin \& Priebe(2013)Qin and Priebe]{qin2013maximum}
Yichen Qin and Carey~E Priebe.
\newblock Maximum {$L_q$}-likelihood estimation via the expectation-maximization algorithm: a robust estimation of mixture models.
\newblock \emph{Journal of the American Statistical Association}, 108\penalty0 (503):\penalty0 914--928, 2013.

\bibitem[Rakhlin et~al.(2005)Rakhlin, Panchenko, and Mukherjee]{rakhlin_risk_2005}
Alexander Rakhlin, Dmitry Panchenko, and Sayan Mukherjee.
\newblock Risk bounds for mixture density estimation.
\newblock \emph{ESAIM: PS}, 9:\penalty0 220--229, 2005.

\bibitem[Razaviyayn et~al.(2013)Razaviyayn, Hong, and Luo]{razaviyayn_unified_2013}
Meisam Razaviyayn, Mingyi Hong, and Zhi-Quan Luo.
\newblock A {Unified} {Convergence} {Analysis} of {Block} {Successive} {Minimization} {Methods} for {Nonsmooth} {Optimization}.
\newblock \emph{SIAM Journal on Optimization}, 23\penalty0 (2):\penalty0 1126--1153, 2013.

\bibitem[Ritter(2014)]{ritter2014robust}
Gunter Ritter.
\newblock \emph{Robust cluster analysis and variable selection}.
\newblock CRC Press, Boca Raton, 2014.

\bibitem[Rockafellar(1997)]{rockafellar1997convex}
Ralph~Tyrrell Rockafellar.
\newblock \emph{Convex analysis}, volume~11.
\newblock Princeton University Press, Princeton, 1997.

\bibitem[Shalev-Shwartz \& Ben-David(2014)Shalev-Shwartz and Ben-David]{shalev2014understanding}
Shai Shalev-Shwartz and Shai Ben-David.
\newblock \emph{Understanding Machine Learning: from Theory to Algorithms}.
\newblock Cambridge University Press, Cambridge, 2014.

\bibitem[Shapiro et~al.(2021)Shapiro, Dentcheva, and Ruszczynski]{shapiro_lectures_2021}
Alexander Shapiro, Darinka Dentcheva, and Andrzej Ruszczynski.
\newblock \emph{Lectures on {Stochastic} {Programming}: {Modeling} and {Theory}, {Third} {Edition}}.
\newblock Society for Industrial and Applied Mathematics, Philadelphia, PA, July 2021.

\bibitem[Stummer \& Vajda(2012)Stummer and Vajda]{stummer2012bregman}
Wolfgang Stummer and Igor Vajda.
\newblock On bregman distances and divergences of probability measures.
\newblock \emph{IEEE Transactions on Information Theory}, 58\penalty0 (3):\penalty0 1277--1288, 2012.

\bibitem[Sundberg(2019)]{sundberg_statistical_2019}
Rolf Sundberg.
\newblock \emph{Statistical {Modelling} by {Exponential} {Families}}.
\newblock Cambridge University Press, Cambridge, 2019.

\bibitem[Temlyakov(2016)]{temlyakov2016convergence}
Vladimir~N Temlyakov.
\newblock Convergence and rate of convergence of some greedy algorithms in convex optimization.
\newblock \emph{Proceedings of the Steklov Institute of Mathematics}, 293:\penalty0 325--337, 2016.

\bibitem[van~de Geer(2016)]{van2016estimation}
Sara van~de Geer.
\newblock \emph{Estimation and Testing Under Sparsity: {\'E}cole d'{\'E}t{\'e} de Probabilit{\'e}s de Saint-Flour XLV -- 2015}.
\newblock Springer, 2016.

\bibitem[van~der Vaart \& Wellner(1996)van~der Vaart and Wellner]{van1996weak}
Aad~W van~der Vaart and Jon~A Wellner.
\newblock \emph{Weak Convergence and Empirical Processes: With Applications to Statistics}.
\newblock Springer, 1996.

\bibitem[Verbeek et~al.(2003)Verbeek, Vlassis, and Kr{\"o}se]{verbeek2003efficient}
Jakob~J Verbeek, Nikos Vlassis, and Ben Kr{\"o}se.
\newblock Efficient greedy learning of {Gaussian} mixture models.
\newblock \emph{Neural computation}, 15\penalty0 (2):\penalty0 469--485, 2003.

\bibitem[Vlassis \& Likas(2002)Vlassis and Likas]{vlassis2002greedy}
Nikos Vlassis and Aristidis Likas.
\newblock A greedy {EM} algorithm for gaussian mixture learning.
\newblock \emph{Neural Processing Letters}, 15:\penalty0 77--87, 2002.

\bibitem[White(1982)]{white1982maximum}
Halbert White.
\newblock Maximum likelihood estimation of misspecified models.
\newblock \emph{Econometrica}, 50\penalty0 (1):\penalty0 1--25, 1982.

\bibitem[Wu \& Lange(2010)Wu and Lange]{wu2010mm}
Tong~Tong Wu and Kenneth Lange.
\newblock The {MM} alternative to {EM}.
\newblock \emph{Statistical Science}, 25\penalty0 (4):\penalty0 492--505, 2010.

\bibitem[Zhang(2003)]{zhang_sequential_2003}
Tong Zhang.
\newblock Sequential greedy approximation for certain convex optimization problems.
\newblock \emph{IEEE Transactions on Information Theory}, 49\penalty0 (3):\penalty0 682--691, 2003.

\end{thebibliography}

\newpage
\appendix
\section{Proofs of main results}\label{sec: proofs}%

The following section is devoted to establishing some technical definitions and instrumental results which are used to prove Theorem~\ref{thm_main_result_2} and Corollary~\ref{cor_finiteness}, and also includes the proofs of these results themselves.
\subsection{Preliminaries}\label{sec: prelim}
Recall that we are interested in bounds of the form (\ref{eq: generic risk bound}). Note that $\mathcal{P}$ is a subset of the linear space
\begin{equation*}
    \mathcal{V}=\mathrm{cl}\left(\bigcup_{k\in\mathbb{N}}\left\{ \sum_{j=1}^{k}\varpi_{j}\varphi\left(\cdot;\theta_{j}\right)\mid\varphi\left(\cdot;\theta_{j}\right)\in\mathcal{P},\varpi_{j}\in\mathbb{R},j\in\left[k\right]\right\} \right),
\end{equation*}
and hence we can apply the following result, paraphrased from \citet[Thm. II.1]{zhang_sequential_2003}.
\begin{lemma}\label{lem: Zhang}
    Let $\kappa$ be a differentiable and convex function on $\mathcal{V}$, and let $\left(\bar{f}_{k}\right)_{k\in\mathbb{N}}$ be a sequence of functions obtained by Algorithm \ref{alg1: greedy approx sequence}.
    If
    \begin{equation*}
        \sup_{p,q\in\mathcal{C},\pi\in\left(0,1\right)}\frac{\mathrm{d}^{2}}{\mathrm{d}\pi^{2}}\kappa\left(\left(1-\pi\right)p+\pi q\right)\le\mathfrak{M}<\infty,
    \end{equation*}
    then, for each $k\in\mathbb{N}$,
    \begin{equation*}
        \kappa\left(\bar{f}_{k}\right)-\inf_{p\in\mathcal{C}}\kappa\left(p\right)\le\frac{2\mathfrak{M}}{k+2}.
    \end{equation*}
\end{lemma}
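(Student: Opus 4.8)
The plan is to run the standard greedy / Frank--Wolfe style analysis. Write $\kappa^{\ast}=\inf_{p\in\mathcal{C}}\kappa(p)$ and $\varepsilon_{k}=\kappa(\bar f_{k})-\kappa^{\ast}\ge 0$; the target is $\varepsilon_{k}\le 2\mathfrak{M}/(k+2)$. The first ingredient I would establish is a quadratic upper bound along segments: for any $\bar f_{k-1}\in\mathcal{C}$, $g\in\mathcal{P}$ and $\alpha\in[0,1]$, the whole segment $\{(1-\pi)\bar f_{k-1}+\pi g:\pi\in[0,1]\}$ lies in the convex set $\mathcal{C}$, so the hypothesis $\sup_{p,q\in\mathcal{C},\pi\in(0,1)}\frac{\mathrm{d}^{2}}{\mathrm{d}\pi^{2}}\kappa((1-\pi)p+\pi q)\le\mathfrak{M}$ together with a one-dimensional Taylor expansion with integral remainder gives
\[
    \kappa\big((1-\alpha)\bar f_{k-1}+\alpha g\big)\le \kappa(\bar f_{k-1})+\alpha\,\mathrm{D}\kappa(\bar f_{k-1})\big[g-\bar f_{k-1}\big]+\tfrac{\alpha^{2}}{2}\mathfrak{M},
\]
where $\mathrm{D}\kappa(\bar f_{k-1})[\,\cdot\,]$ denotes the directional derivative of $\kappa$ at $\bar f_{k-1}$, which exists since $\kappa$ is differentiable on $\mathcal{V}$.

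Next, I would use the greedy construction of Algorithm~\ref{alg1: greedy approx sequence}: $\bar f_{k}$ is obtained by (approximately) minimizing $\kappa$ over the next one-parameter segment, i.e.\ over the new component $g\in\mathcal{P}$ with the prescribed weight $\alpha_{k}=2/(k+1)$ (so that $\alpha_{1}=1$, meaning the first iterate is the best single component relative to a fixed starting point $\bar f_{0}\in\mathcal{C}$; note that the $\bar f_{0}$ contribution has weight $0$ from the second step onward, so $\bar f_{k}\in\mathrm{co}_{k}(\mathcal{P})$). Hence $\kappa(\bar f_{k})$ is bounded by the right-hand side of the display evaluated at $\alpha=\alpha_{k}$ and at $g=g_{k}$, a minimizer over $\mathcal{P}$ of the affine functional $g\mapsto\mathrm{D}\kappa(\bar f_{k-1})[g-\bar f_{k-1}]$. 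Because this functional is affine in $g$ and $\mathcal{C}=\mathrm{cl}(\mathrm{co}(\mathcal{P}))$, its infimum over $\mathcal{P}$ coincides with its infimum over $\mathcal{C}$; and by convexity of $\kappa$ we have $\mathrm{D}\kappa(\bar f_{k-1})[p-\bar f_{k-1}]\le\kappa(p)-\kappa(\bar f_{k-1})$ for every $p\in\mathcal{C}$, so taking the infimum over $p\in\mathcal{C}$ yields $\mathrm{D}\kappa(\bar f_{k-1})[g_{k}-\bar f_{k-1}]\le\kappa^{\ast}-\kappa(\bar f_{k-1})=-\varepsilon_{k-1}$. Substituting into the quadratic bound and subtracting $\kappa^{\ast}$ gives the contraction recursion
\[
    \varepsilon_{k}\le(1-\alpha_{k})\,\varepsilon_{k-1}+\tfrac{\alpha_{k}^{2}}{2}\mathfrak{M}.
\]

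Finally, I would close by induction on $k$. For $k=1$, $\alpha_{1}=1$ gives $\varepsilon_{1}\le\tfrac{1}{2}\mathfrak{M}\le\tfrac{2}{3}\mathfrak{M}$; assuming $\varepsilon_{k-1}\le 2\mathfrak{M}/(k+1)$ and plugging $\alpha_{k}=2/(k+1)$ into the recursion gives $\varepsilon_{k}\le\tfrac{2\mathfrak{M}k}{(k+1)^{2}}\le\tfrac{2\mathfrak{M}}{k+2}$, the last inequality being $k(k+2)\le(k+1)^{2}$. If Algorithm~\ref{alg1: greedy approx sequence} performs an exact line search over the weight rather than using the explicit $\alpha_{k}$, the same bound follows a fortiori. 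The main obstacle is not the recursion, which is routine, but the functional-analytic bookkeeping in the infinite-dimensional span $\mathcal{V}$: making the directional derivative $\mathrm{D}\kappa$ and the integral-remainder estimate rigorous from the stated one-dimensional second-derivative hypothesis alone, verifying that each iterate stays in $\mathcal{C}$ (in fact in $\mathrm{co}_{k}(\mathcal{P})$) so that $\mathfrak{M}$ controls the relevant segments, and securing existence of the greedy minimizer $g_{k}$ over $\mathcal{P}$ --- via a compactness/Carath\'eodory argument, or by working with $\eta$-approximate minimizers and letting $\eta\downarrow 0$.
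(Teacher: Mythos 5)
The paper does not prove this lemma itself; it is quoted as a paraphrase of Theorem~II.1 of \citet{zhang_sequential_2003}, and your proposal is a correct reconstruction of exactly the argument behind that cited result: the quadratic upper bound along segments from the second-derivative hypothesis, the gradient inequality $\mathrm{D}\kappa(\bar f_{k-1})[p-\bar f_{k-1}]\le\kappa(p)-\kappa(\bar f_{k-1})$ combined with the reduction of the linearized minimization from $\mathcal{C}$ to $\mathcal{P}$, the recursion $\varepsilon_{k}\le(1-\alpha_k)\varepsilon_{k-1}+\tfrac{\alpha_k^2}{2}\mathfrak{M}$ with $\alpha_k=2/(k+1)$, and the induction via $k(k+2)\le(k+1)^2$, all of which check out. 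Your ``a fortiori'' remark is the right way to reconcile the prescribed-weight analysis with Algorithm~\ref{alg1: greedy approx sequence} as stated, since its exact joint minimization over $(\pi,\theta)$ can only do better than the particular pair $(\alpha_k,g_k)$; the only residual caveats are the ones you already flag (existence or $\eta$-approximation of the greedy minimizer, and continuity of the derivative functional on $\mathrm{cl}(\mathrm{co}(\mathcal{P}))$), which are handled the same way in the original source.
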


\begin{algorithm}[h!]
    \caption{Algorithm for computing a greedy approximation sequence.}\label{alg1: greedy approx sequence}
    \begin{algorithmic}[1]
        \Require $\bar{f}_{0}\in\mathcal{P}$
        \For{$k \in \mathbb{N}$}
            \State Compute $\left(\bar{\pi}_{k},\bar{\theta}_{k}\right) = \argmin\limits_{\left(\pi,\theta\right)\in[0,1]\times\Theta} \kappa\left(\left(1-\pi\right)\bar{f}_{k-1}+\pi\varphi\left(\cdot;\theta\right)\right)$
            \State Define $\bar{f}_{k} = \left(1-\bar{\pi}_{k}\right)\bar{f}_{k-1} + \bar{\pi}_{k}\varphi\left(\cdot;\bar{\theta}_{k}\right)$
        \EndFor
    \end{algorithmic}
\end{algorithm}
We are interested in two choices for $\kappa$:
\begin{equation}
    \kappa\left(p\right)=\KLH{f}{p}\label{eq: Zhang KLh}
\end{equation}
and its sample counterpart,
\begin{equation}
    \kappa_n\left(p\right)=\frac{1}{n}\sum_{i=1}^{n}\log\frac{f\left(X_{i}\right)+h\left(X_{i}\right)}{p\left(X_{i}\right)+h\left(X_{i}\right)}+\frac{1}{n}\sum_{i=1}^{n}\log\frac{f\left(Y_{i}\right)+h\left(Y_{i}\right)}{p\left(Y_{i}\right)+h\left(Y_{i}\right)}, \label{eq: Zhang sample KLh}
\end{equation}
where $(X_i)_{i\in[n]}$ and $(Y_i)_{i\in[n]}$ are realisations of $X$ and $Y$, respectively.
We obtain the following important results.
\begin{proposition}\label{prop: Zhang 1}
    Let $\varkappa$ denote either $\kappa$, the $\mathrm{KL}_{h}$ divergence~\eqref{eq: Zhang KLh}, or $\kappa_n$, the sample $\mathrm{KL}_{h}$ divergence~\eqref{eq: Zhang sample KLh}, and assume that $h\ge a$ and $\varphi\left(\cdot;\theta\right)\le c$, for each $\theta\in\Theta$. Then, 
    \begin{equation*}
        \varkappa\left(\bar{f}_{k}\right)-\inf_{p\in\mathcal{C}} \varkappa\left(p\right)\le\frac{4a^{-2}c^{2}}{k+2},
    \end{equation*}
    for each $k\in\mathbb{N}$, where $\left(\bar{f}_{k}\right)_{k\in\mathbb{N}}$ is obtained as per Algorithm~\ref{alg1: greedy approx sequence}.
\end{proposition}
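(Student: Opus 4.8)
The plan is to apply Lemma~\ref{lem: Zhang} to each of the two choices $\varkappa \in \{\kappa, \kappa_n\}$, so the only real task is to verify the hypothesis of that lemma, namely the uniform bound
\begin{equation*}
    \sup_{p,q\in\mathcal{C},\,\pi\in(0,1)}\frac{\mathrm{d}^2}{\mathrm{d}\pi^2}\,\varkappa\bigl((1-\pi)p+\pi q\bigr)\le \mathfrak{M}
\end{equation*}
with $\mathfrak{M} = 2a^{-2}c^2$; then Lemma~\ref{lem: Zhang} immediately gives the claimed bound $\varkappa(\bar f_k)-\inf_{p\in\mathcal C}\varkappa(p)\le 2\mathfrak{M}/(k+2) = 4a^{-2}c^2/(k+2)$.

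First I would treat the population case $\varkappa=\kappa$, where $\kappa(p)=\KLH{f}{p}={\E}_f\{\log\frac{f+h}{p+h}\}+{\E}_h\{\log\frac{f+h}{p+h}\}$. Writing $g_\pi = (1-\pi)p+\pi q$, the only $\pi$-dependence sits in the denominator $g_\pi + h$, and $\frac{\mathrm{d}}{\mathrm{d}\pi}(g_\pi+h) = q - p$ is constant in $\pi$. Differentiating $-\log(g_\pi+h)$ twice in $\pi$ gives $(p-q)^2/(g_\pi+h)^2$, and justifying the interchange of differentiation and expectation via dominated convergence (using $h\ge a>0$ to bound the integrand uniformly) yields
\begin{equation*}
    \frac{\mathrm{d}^2}{\mathrm{d}\pi^2}\kappa(g_\pi) = {\E}_f\!\left\{\frac{(p-q)^2}{(g_\pi+h)^2}\right\} + {\E}_h\!\left\{\frac{(p-q)^2}{(g_\pi+h)^2}\right\}.
\end{equation*}
Since $p,q\in\mathcal C$ are nonnegative and each lies in the convex hull of $\mathcal P$ whose elements are bounded above by $c$, we have $0\le p,q\le c$, hence $(p-q)^2\le c^2$; and since $g_\pi\ge 0$ and $h\ge a$, we get $(g_\pi+h)^2\ge a^2$. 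Each expectation is therefore at most $c^2/a^2$, and their sum is at most $2c^2/a^2=\mathfrak M$, uniformly over $p,q,\pi$.

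For the sample counterpart $\varkappa=\kappa_n$, the computation is identical but without expectations: $\kappa_n(g_\pi)$ is a finite average of terms $-\log(g_\pi(z_i)+h(z_i))$ plus a $\pi$-independent constant, where $z_i$ ranges over the realisations $(X_i)_{i\in[n]}$ and $(Y_i)_{i\in[n]}$; differentiating twice in $\pi$ gives $\frac1n\sum_i \frac{(p(X_i)-q(X_i))^2}{(g_\pi(X_i)+h(X_i))^2} + \frac1n\sum_i \frac{(p(Y_i)-q(Y_i))^2}{(g_\pi(Y_i)+h(Y_i))^2}$. The same pointwise bounds $(p(x)-q(x))^2\le c^2$ and $(g_\pi(x)+h(x))^2\ge a^2$ apply for every $x\in\mathcal X$, so each average is at most $c^2/a^2$ and the total is again at most $2c^2/a^2$. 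Applying Lemma~\ref{lem: Zhang} in both cases completes the proof. I do not anticipate a genuine obstacle here: the argument is essentially the (currently commented-out) computation in the preliminaries, and the only points requiring a little care are the dominated-convergence justification for interchanging $\frac{\mathrm d^2}{\mathrm d\pi^2}$ with ${\E}_f$ and ${\E}_h$ (handled by the uniform bound $h\ge a$), the observation that $\mathcal C$ inherits the upper bound $c$ from $\mathcal P$ under convex combination, and checking that $\kappa_n$ as defined in~\eqref{eq: Zhang sample KLh} is indeed convex and differentiable on $\mathcal V$ so that Lemma~\ref{lem: Zhang} applies to it as well.
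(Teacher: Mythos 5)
Your proposal is correct and follows essentially the same route as the paper's proof: compute $\frac{\mathrm{d}^2}{\mathrm{d}\pi^2}\varkappa((1-\pi)p+\pi q)$, bound the numerator by $c^2$ and the denominator by $a^2$ to get $\mathfrak{M}=2a^{-2}c^2$, and invoke Lemma~\ref{lem: Zhang}. The paper's Appendix proof is exactly the computation you reproduce, for both the population and sample cases.
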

\begin{proof}
    See Appendix~\ref{sec_prop: Zhang 1}.
\end{proof}
Notice that sequences $\left(\bar{f}_{k}\right)_{k\in\mathbb{N}}$ obtained via Algorithm~\ref{alg1: greedy approx sequence} are greedy approximation sequences, and that $\bar{f}_{k}\in\mathcal{C}_{k}$, for each $k\in\mathbb{N}$. Let $\left(f_{k}\right)_{k\in\mathbb{N}}$ be the sequence of minimizers defined by
\begin{equation}
    f_{k}=\underset{\psi_{k}\in\mathcal{S}_{k}\times\Theta^{k}}{\argmin}\KLH{f}{f_{k}\left(\cdot;\psi_{k}\right)}, \label{eq: minimum KLh sequence}
\end{equation}
and let $\left(f_{k,n}\right)_{k\in\mathbb{N}}$ be the sequence of $h$-MLLEs, as per~\eqref{eq_hMLLE}. Then, by definition, we have the fact that $\kappa\left(f_{k}\right)\le \kappa\left(\bar{f}_{k}\right)$ and $\kappa\left(f_{k,n}\right)\le \kappa\left(\bar{f}_{k}\right)$, for $\kappa$ set as~\eqref{eq: Zhang KLh} or~\eqref{eq: Zhang sample KLh}, respectively. Thus, we have the following result.
\begin{proposition}\label{prop_Zhang_2}
    For the $\mathrm{KL}_{h}$ divergence~\eqref{eq: Zhang KLh}, under the assumption that $h\ge a$ and $\varphi\left(\cdot;\theta\right)\le c$, for each $\theta\in\Theta$, we have 
    \begin{equation}
        \kappa\left(f_{k}\right)-\inf_{p\in\mathcal{C}}\kappa\left(p\right)\le\frac{4a^{-2}c^{2}}{k+2} \label{eq: zhang KLh bound1}
    \end{equation}
    for each $k\in\mathbb{N}$, where $\left(f_{k}\right)_{k\in\mathbb{N}}$ is the sequence of minimizers defined via~\eqref{eq: minimum KLh sequence}. Furthermore, for the sample $\mathrm{KL}_{h}$ divergence~\eqref{eq: Zhang sample KLh}, under the same assumptions as above, we have
    \begin{equation}
        \kappa_n\left(f_{k,n}\right)-\inf_{p\in\mathcal{C}}\kappa_n\left(p\right)\le\frac{4a^{-2}c^{2}}{k+2}, \label{eq_zhang_KLh_bound2}
    \end{equation}
    for each $k\in\mathbb{N}$, where $\left(f_{k,n}\right)_{k\in\mathbb{N}}$ are $h$-MLLEs defined via~\eqref{eq_hMLLE}.
\end{proposition}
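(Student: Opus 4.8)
The plan is to deduce both inequalities directly from Proposition~\ref{prop: Zhang 1} by exhibiting, in each case, a greedy approximation sequence that is feasible for the minimization problem defining the estimator.

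First I would check that $f_{k}$ and $f_{k,n}$ are genuinely minimizers, over the $k$-component class $\mathcal{C}_{k}=\mathrm{co}_{k}(\mathcal{P})$, of $\kappa$ and $\kappa_{n}$ respectively. For $f_{k}$ this is precisely its defining relation~\eqref{eq: minimum KLh sequence}. For $f_{k,n}$, I would observe that the $h$-MLLE objective in~\eqref{eq_hMLLE} equals $-\kappa_{n}\left(f_{k}(\cdot;\psi_{k})\right)$ plus the quantity $\frac{1}{n}\sum_{i=1}^{n}\left(\log\{f(X_{i})+h(X_{i})\}+\log\{f(Y_{i})+h(Y_{i})\}\right)$, which does not involve $\psi_{k}$; hence maximizing the $h$-lifted likelihood over $\mathcal{S}_{k}\times\Theta^{k}$ is the same as minimizing $\kappa_{n}$ over $\mathcal{C}_{k}$, so $f_{k,n}\in\argmin_{p\in\mathcal{C}_{k}}\kappa_{n}(p)$. (The existence of both minimizers follows from the Carathéodory assumption on $\varphi$ together with compactness of the simplex and, where relevant, of $\Theta$; if one prefers not to assume attainment, the argument below works verbatim with near-minimizers and a vanishing slack.)

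Next, writing $\varkappa$ for either $\kappa$ or $\kappa_{n}$, I would run Algorithm~\ref{alg1: greedy approx sequence} on the objective $\varkappa$ to obtain the greedy sequence $(\bar{f}_{k})_{k\in\mathbb{N}}$, which, as already observed in the text, satisfies $\bar{f}_{k}\in\mathcal{C}_{k}$ for every $k$. Since $f_{k}$ (respectively $f_{k,n}$) minimizes $\varkappa$ over $\mathcal{C}_{k}$, while $\bar{f}_{k}$ is a particular element of $\mathcal{C}_{k}$, we get $\kappa(f_{k})\le\kappa(\bar{f}_{k})$ and $\kappa_{n}(f_{k,n})\le\kappa_{n}(\bar{f}_{k})$, the latter holding for each realization of the sample since $\kappa_{n}$ and its associated greedy sequence are sample-dependent. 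Subtracting $\inf_{p\in\mathcal{C}}\varkappa(p)$ from both sides and invoking Proposition~\ref{prop: Zhang 1}, whose curvature hypothesis holds for both $\kappa$ and $\kappa_{n}$ with $\mathfrak{M}\le 2a^{-2}c^{2}$ under $h\ge a$ and $\varphi(\cdot;\theta)\le c$, yields $\varkappa(f_{k})-\inf_{p\in\mathcal{C}}\varkappa(p)\le 4a^{-2}c^{2}/(k+2)$, which is exactly~\eqref{eq: zhang KLh bound1} and~\eqref{eq_zhang_KLh_bound2}.

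I do not anticipate a serious obstacle. The two points requiring care are (i) the bookkeeping that discarding the $\psi_{k}$-independent terms does not alter the $\argmax$, so that the $h$-MLLE really is the $\kappa_{n}$-minimizer over $\mathcal{C}_{k}$; and (ii) that, because $\kappa_{n}$ is random, both the comparison $\kappa_{n}(f_{k,n})\le\kappa_{n}(\bar{f}_{k})$ and the deterministic curvature bound underlying Proposition~\ref{prop: Zhang 1} must be read as holding pointwise in $(\mathbf{X}_{n},\mathbf{Y}_{n})$ --- no expectation or concentration enters at this stage, those appearing only later when $\E\{\KLH{f}{f_{k,n}}\}$ is bounded.
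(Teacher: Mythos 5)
Your proposal is correct and follows essentially the same route as the paper: the paper likewise observes that $\bar{f}_{k}\in\mathcal{C}_{k}$, that $f_{k}$ and $f_{k,n}$ minimize $\kappa$ and $\kappa_{n}$ over $\mathcal{C}_{k}$ (so $\kappa(f_{k})\le\kappa(\bar{f}_{k})$ and $\kappa_{n}(f_{k,n})\le\kappa_{n}(\bar{f}_{k})$), and then invokes Proposition~\ref{prop: Zhang 1}. Your additional bookkeeping --- that dropping the $\psi_{k}$-independent terms identifies the $h$-MLLE with the $\kappa_{n}$-minimizer, and that the sample-level comparison holds pointwise in $(\mathbf{X}_{n},\mathbf{Y}_{n})$ --- is left implicit in the paper but is exactly the right justification.
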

As is common in many statistical learning/uniform convergence results (e.g., \citealp{bartlett2002rademacher}, \citealp{Koltchinskii2004RademacherPA}), we employ the use of Rademacher processes and associated bounds. Let $(\varepsilon_i)_{i\in[n]}$ be i.i.d. Rademacher random variables, that is $\pr(\varepsilon_i = -1) = \pr(\varepsilon_i = 1) = \nicefrac{1}{2}$, that are independent of $(X_i)_{i\in[n]}$. The Rademacher process, indexed by a class of real measurable functions $\mc{S}$, is defined as the quantity
\begin{equation*}
    R_n(s) = \frac{1}{n} \sum_{i=1}^{n} s(X_i) \varepsilon_i,
\end{equation*}
for $s \in \mc{S}$. The Rademacher complexity of the class $\mc{S}$ is given by $\mathcal{R}_n(\mc{S}) = \E\sup_{s\in\mc{S}}\lvert R_n(s)\rvert$.

In the subsequent section, we make use of the following result regarding the supremum of convex functions:
\begin{lemma}
    [\citealp{rockafellar1997convex}, Thm.~32.2]\label{lem: convex_hull_result}
    Let $\eta$ be a convex function on a linear space $\mathcal{T}$, and let $\mathcal{S}\subset\mathcal{T}$ be an arbitrary subset. Then, 
    \begin{equation*}
            \sup_{p\in\mathcal{S}}\eta\left(p\right)=\sup_{p\in\mathrm{co}\left(\mathcal{S}\right)}\eta\left(p\right).
    \end{equation*}
\end{lemma}
In particular, we use the fact that since a linear functional of convex combinations achieves its maximum value at vertices, the Rademacher complexity of $\mc{S}$ is equal to the Rademacher complexity of $\mathrm{co}(\mc{S})$ (see Lemma~\ref{lem: rademacher_complexity_equality}). We consequently obtain the following result.
\begin{lemma}\label{lem: rachemacher_applied_to_main_result}
    Let $(\varepsilon_i)_{i\in[n]}$ be i.i.d. Rademacher random variables, independent of $(X_i)_{i\in[n]}$ and $\mc{P}$ be defined as above. The sets $\mc{C}$ and $\mc{P}$ will have equal complexity, $\mc{R}_n(\mc{C}) = \mc{R}_n(\mc{P})$, and the supremum of the Rademacher process indexed by $\mc{C}$ is equal to the supremum on the basis functions of $\mc{P}$:
    \begin{equation*}
         {\E}_{\varepsilon} \sup_{g \in \mc{C}} \left\lvert \frac{1}{n} \sum_{i=1}^{n}  g(X_i) \varepsilon_i \right\rvert ={\E}_{\varepsilon} \sup_{\theta \in \Theta} \left\lvert \frac{1}{n} \sum_{i=1}^{n}\varphi(X_i; \theta) \varepsilon_i \right\rvert.
    \end{equation*}
\end{lemma}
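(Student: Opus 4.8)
The plan is to prove the two claimed equalities in Lemma~\ref{lem: rachemacher_applied_to_main_result} by a single application of Lemma~\ref{lem: convex_hull_result} (Rockafellar's Theorem~32.2), conditionally on the sample $(X_i)_{i\in[n]}$ and the signs $(\varepsilon_i)_{i\in[n]}$. Fix a realization of $(X_i, \varepsilon_i)_{i\in[n]}$ and define the functional $\eta: \mathcal{V} \to \mathbb{R}$ by $\eta(g) = \left\lvert \frac{1}{n}\sum_{i=1}^n g(X_i)\varepsilon_i \right\rvert$. This $\eta$ is a composition of the linear map $g \mapsto \frac{1}{n}\sum_i g(X_i)\varepsilon_i$ with the absolute value, hence convex on the linear space $\mathcal{V}$ (and in particular on any subset we restrict to). First I would record that $\mathcal{C} = \mathrm{cl}\left(\bigcup_{k\in\mathbb{N}}\mathrm{co}_k(\mathcal{P})\right) = \mathrm{cl}(\mathrm{co}(\mathcal{P}))$, so that $\mathrm{co}(\mathcal{P}) \subseteq \mathcal{C} \subseteq \mathrm{cl}(\mathrm{co}(\mathcal{P}))$.

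Next I would apply Lemma~\ref{lem: convex_hull_result} with $\mathcal{S} = \mathcal{P}$ and $\mathcal{T} = \mathcal{V}$, giving
\begin{equation*}
    \sup_{g \in \mathcal{P}} \eta(g) = \sup_{g \in \mathrm{co}(\mathcal{P})} \eta(g).
\end{equation*}
Since $\mathcal{P}$ is parametrized by $\theta \in \Theta$ via $\varphi(\cdot;\theta)$, the left-hand side is exactly $\sup_{\theta \in \Theta} \left\lvert \frac{1}{n}\sum_{i=1}^n \varphi(X_i;\theta)\varepsilon_i \right\rvert$. For the right-hand side, I would note that $\mathrm{co}(\mathcal{P}) \subseteq \mathcal{C}$ gives $\sup_{g\in\mathrm{co}(\mathcal{P})}\eta(g) \le \sup_{g\in\mathcal{C}}\eta(g)$, while the reverse inequality follows from continuity of $\eta$ together with $\mathcal{C} \subseteq \mathrm{cl}(\mathrm{co}(\mathcal{P}))$: a supremum of a continuous function over a set equals its supremum over the closure of that set. (Alternatively, one can invoke Lemma~\ref{lem: convex_hull_result} a second time, or simply observe $\sup_{g\in\mathcal{C}}\eta(g) \le \sup_{g\in\mathrm{cl}(\mathrm{co}(\mathcal{P}))}\eta(g) = \sup_{g\in\mathrm{co}(\mathcal{P})}\eta(g)$ by continuity.) Hence $\sup_{g\in\mathcal{C}}\eta(g) = \sup_{\theta\in\Theta}\left\lvert \frac{1}{n}\sum_i \varphi(X_i;\theta)\varepsilon_i\right\rvert$ pointwise in $(X_i,\varepsilon_i)_{i\in[n]}$. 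Taking $\mathbb{E}_\varepsilon$ of both sides (and then, if one wants $\mathcal{R}_n(\mathcal{C}) = \mathcal{R}_n(\mathcal{P})$ as stated, also $\mathbb{E}$ over $(X_i)$) yields the displayed identity and the complexity equality; measurability of the suprema is guaranteed by the Carathéodory assumption on $\varphi$ and compactness of $\Theta$ invoked in Corollary~\ref{cor_finiteness}'s setting, or more generally can be taken as part of the standing measurability conventions.

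The only genuinely delicate point — and the one I would be careful to state cleanly — is the passage through the closure: Lemma~\ref{lem: convex_hull_result} as quoted is purely algebraic and says nothing about $\mathrm{cl}(\cdot)$, so the equality $\sup_{\mathcal{C}}\eta = \sup_{\mathrm{co}(\mathcal{P})}\eta$ must be justified separately by the continuity of the linear functional $g \mapsto \frac{1}{n}\sum_i g(X_i)\varepsilon_i$ on $\mathcal{V}$ (with respect to whatever topology makes $\mathcal{C}$ the closure of $\bigcup_k \mathrm{co}_k(\mathcal{P})$; pointwise evaluation at the finite set $\{X_1,\dots,X_n\}$ is continuous for, e.g., the $L_\infty$ or $L_p$ topologies used elsewhere in the paper). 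I expect this to be the main obstacle only in the sense of requiring a clean statement; the substantive content is the one-line convexity observation plus Rockafellar's theorem. I would also remark, for completeness, that since $\mathbb{E}_\varepsilon \sup$ commutes with the identity established pointwise, no exchange-of-limits subtlety arises beyond the closure argument itself.
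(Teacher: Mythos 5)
Your proposal is correct and follows essentially the same route as the paper, whose entire proof is the single line ``Follows immediately from Lemma~\ref{lem: convex_hull_result}''; you simply make explicit the convexity of $g \mapsto \bigl\lvert \tfrac{1}{n}\sum_{i} g(X_i)\varepsilon_i \bigr\rvert$ and the passage through the closure that the paper leaves implicit. The only small inaccuracy is the parenthetical claim that point evaluation at $\{X_1,\dots,X_n\}$ is continuous for the $L_p$ topologies with $p<\infty$ (it is not in general, since $L_p$-convergence does not control values at individual points), but your primary justification via the uniform topology is sound and your treatment of the closure step is in fact more careful than the paper's.
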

\begin{proof}
    Follows immediately from Lemma~\ref{lem: convex_hull_result}.   
\end{proof}

\subsection{Proofs}\label{sec_proof_main_results}

We first present a result establishing a uniform concentration bound for the $h$-lifted log-likelihood ratios, which is instrumental in the proof of Theorem~\ref{thm_main_result_2}. Our proofs broadly follow the structure of \cite{rakhlin_risk_2005}, modified as needed for the use of $\mathrm{KL}_h$.

Assume that $0\le\varphi(\cdot; \theta)< {c}$ for some $c\in\R_{>0}$. For brevity, we adopt the notation: $\left\lVert T(g) \right\rVert_{\mc{C}} = \sup_{g\in\mc{C}}\lvert T(g)\rvert$.
\begin{theorem}\label{thm_main_result_1}%
    Let $X_1,\ldots,X_n$ be an i.i.d. sample of size $n$ drawn from a fixed density $f$ such that $0\leq f(x)\leq c$ for all $x\in\mc{X}$, and let $h$ be a positive density with $0 < a \leq h(x) \leq b$ for all $x \in\mc{X}$. Then, for each $t>0$, with probability at least $1-\mathrm{e}^{-t}$,
    \begin{equation*}
        \left\lVert\frac{1}{n}\sum_{i=1}^{n}\log\frac{g(X_i)+h(X_i)}{f(X_i)+h(X_i)} - {\E}_f \log \frac{g+h}{f+h}\right\rVert_{\mc{C}}\leq\frac{w_1}{\sqrt{n}}\E\int_{0}^c\log^{1/2} N(\mc{P},\varepsilon,d_{n,x})\mathrm{d}\varepsilon+\frac{w_2}{\sqrt{n}}+w_3\sqrt{\frac{t}{n}},    
    \end{equation*}
    where $w_1$, $w_2$, and $w_3$ are constants that each depend on some or all of $a$, $b$, and $c$, and $N(\mc{P}, \varepsilon, d_{n,x})$ is the $\varepsilon$-covering number of $\mathcal{P}$ with respect to the following empirical $L_2$ metric
    \begin{equation*}
        d_{n,x}^2(\varphi_1, \varphi_2) = \frac{1}{n}\sum_{i=1}^{n} (\varphi_1(X_i) - \varphi_2(X_i))^2.
    \end{equation*}
\end{theorem}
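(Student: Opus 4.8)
The plan is to establish this as a standard bounded-differences concentration argument combined with symmetrization and a chaining (Dudley entropy integral) bound on the resulting Rademacher process, following the template of \citet{rakhlin_risk_2005} but tracking how the lifting function $h$ enters the Lipschitz constants. First I would define the functional $Z = \left\lVert \frac{1}{n}\sum_{i=1}^n \log\frac{g(X_i)+h(X_i)}{f(X_i)+h(X_i)} - \E_f\log\frac{g+h}{f+h}\right\rVert_{\mc{C}}$, viewed as a function of the independent sample $X_1,\dots,X_n$. The key boundedness observation is that, since $0\le g,f\le c$ and $a\le h\le b$, each summand $\log\frac{g(X_i)+h(X_i)}{f(X_i)+h(X_i)}$ lies in the interval $\bigl[\log\frac{a}{b+c},\,\log\frac{b+c}{a}\bigr]$, so changing one coordinate $X_i$ alters $Z$ by at most $\frac{2}{n}\log\frac{b+c}{a}$. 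McDiarmid's inequality then gives $Z \le \E Z + w_3\sqrt{t/n}$ with probability at least $1-e^{-t}$, where $w_3$ is a multiple of $\log\frac{b+c}{a}$.

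The remaining task is to bound $\E Z$. I would apply the standard symmetrization inequality to get $\E Z \le 2\,\E\sup_{g\in\mc{C}}\bigl|\frac1n\sum_{i=1}^n \varepsilon_i\, \ell_g(X_i)\bigr|$ where $\ell_g(x) = \log\frac{g(x)+h(x)}{f(x)+h(x)}$ and $(\varepsilon_i)$ are i.i.d.\ Rademacher, independent of the $X_i$. Next, the map $u \mapsto \log(u+h(x)) - \log(f(x)+h(x))$, as a function of $u = g(x) \in [0,c]$, is Lipschitz with constant $\le 1/a$ (since the derivative is $1/(u+h(x)) \le 1/a$) and vanishes at $g = f$; so by the Ledoux--Talagrand contraction principle the Rademacher average over $\{\ell_g : g\in\mc{C}\}$ is controlled by $a^{-1}$ times the Rademacher average over $\mc{C}$ itself. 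By Lemma~\ref{lem: rachemacher_applied_to_main_result}, this in turn equals $a^{-1}$ times $\E\sup_{\theta\in\Theta}\bigl|\frac1n\sum_{i=1}^n \varepsilon_i\,\varphi(X_i;\theta)\bigr|$, i.e.\ the Rademacher complexity of the base class $\mc{P}$. Finally, the Dudley entropy integral bound applied conditionally on $(X_i)$, using that each $\varphi(\cdot;\theta)$ takes values in $[0,c)$ so the process is sub-Gaussian with respect to the empirical metric $d_{n,x}$, yields $\E_\varepsilon\sup_{\theta}|\frac1n\sum\varepsilon_i\varphi(X_i;\theta)| \le \frac{C}{\sqrt n}\int_0^c \log^{1/2} N(\mc{P},\varepsilon,d_{n,x})\,\mathrm{d}\varepsilon$; taking expectation over $(X_i)$ and collecting constants produces the $w_1$ term, while the lower limit/diameter bookkeeping of the chaining argument (the constant contribution from the coarsest scale) produces the $w_2/\sqrt{n}$ term.

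Assembling the pieces: with probability at least $1-e^{-t}$, $Z \le \E Z + w_3\sqrt{t/n} \le \frac{2}{a}\cdot\frac{C}{\sqrt n}\E\int_0^c\log^{1/2}N(\mc{P},\varepsilon,d_{n,x})\,\mathrm{d}\varepsilon + \frac{w_2}{\sqrt n} + w_3\sqrt{t/n}$, which is the claimed bound after renaming $w_1 = 2C/a$ (and absorbing any additional $a,b,c$-dependence into the constants as stated). The main obstacle I anticipate is not any single step but rather the careful bookkeeping needed to confirm that the contraction step is legitimate on the convex hull $\mc{C}$ (the contraction principle requires the nonlinearity to be applied coordinatewise to a Rademacher sum, which is fine here since $\ell_g(X_i)$ depends on $g$ only through the scalar $g(X_i)$, and $g\mapsto g(X_i)$ ranges over $[0,c]$ as $g$ ranges over $\mc{C}$), together with justifying the Dudley integral for a possibly non-separable or non-Donsker class $\mc{P}$ — this is exactly why the statement assumes only the finiteness of the entropy integral and measures complexity through the empirical $L_2$ covering numbers rather than through a uniform bound, and why the role of compactness of $\mc{X}$ and continuity of $\varphi$ (needed for measurability of the suprema) must be invoked. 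One should also double-check that the $\E_f$-centered version inside $\|\cdot\|_{\mc{C}}$ is handled correctly by symmetrization, since the centering term $\E_f\log\frac{g+h}{f+h}$ is exactly $-\E_f\ell_g$ up to sign — no extra $\E_h$ term appears here because this theorem concerns only the $X$-sample, with the companion $Y$-sample contribution bounded identically and combined in the proof of Theorem~\ref{thm_main_result_2}.
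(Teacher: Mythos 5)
Your proposal follows essentially the same route as the paper's proof: McDiarmid's bounded-differences inequality with increment $\tfrac{2}{n}\log\tfrac{c+b}{a}$, symmetrization, a Ledoux--Talagrand contraction to strip the logarithm, the convex-hull Rademacher identity (Lemma~\ref{lem: rachemacher_applied_to_main_result}), and Dudley's entropy integral with respect to $d_{n,x}$; the only structural difference is that you contract in a single step with Lipschitz constant $1/a$ where the paper composes two contractions (first $\log(\cdot+1)$, then division by $\tilde{f}$), which affects only the constants. One small correction: the comparison lemma requires the coordinate maps to vanish at zero, not at $g=f$, so you must first subtract the fixed terms $\log h(X_i)-\log\left(f(X_i)+h(X_i)\right)$ from each summand and absorb the resulting plain Rademacher sum, of order $1/\sqrt{n}$, into the $w_2/\sqrt{n}$ term --- exactly the bookkeeping the paper performs via its $-\tfrac{1}{n}\sum_i\varepsilon_i$ and $\tfrac{1}{n}\sum_i h(X_i)\varepsilon_i$ corrections --- rather than attributing that term to the coarsest scale of the chaining argument.
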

\begin{remark}\label{rem_alt_bound}
    The bound on the term 
    \begin{equation*}
        \left\lVert \frac{1}{n} \sum_{i=1}^{n} \log \frac{g(Y_i)+h(Y_i)}{f(Y_i) + h(Y_i)} - {\E}_{h} \log \frac{g+h}{f+h} \right\rVert_{\mc{C}}
    \end{equation*}
    is the same as the above, except where the empirical distance $d_{n,x}$ is replaced by $d_{n,y}$, defined in the same way as $d_{n,x}$ but with $Y_i$ replacing $X_i$.
\end{remark}
\begin{proof}[Proof of Theorem~\ref{thm_main_result_1}]\label{prf_main_result_1}
    Fix $h$ and define the following quantities: $\tilde{g} = g + h$, $\tilde{f} = f + h$, $\tilde{\mc{C}} = \mc{C} + h$,
    \begin{equation*}
        m_i = \log \frac{\tilde{g}(X_i)}{\tilde{f}(X_i)},\quad m_i^{\prime} = \log \frac{\tilde{g}(X_i^{\prime})}{\tilde{f}(X_i^{\prime})},\quad Z(x_1,\ldots, x_n) = \left\lVert\frac{1}{n}\sum_{i=1}^{n} \log \frac{\tilde{g}(X_i)}{\tilde{f}(X_i)} - \E \log \frac{\tilde{g}}{\tilde{f}} \right\rVert_{\tilde{\mc{C}}}.
    \end{equation*}
    We first apply McDiarmid's inequality (Lemma~\ref{lem: bounded difference inequality}) to the random variable $Z$. The bound on the martingale difference is given by
    \begin{align*}
        \left\lvert Z(X_1,\ldots, X_i,\ldots, X_n) - Z(X_1,\ldots, X_i^{\prime},\ldots, X_n) \right\rvert 
        & = \left\lvert \left\lVert\E\log \frac{\tilde{g}}{\tilde{f}} - \frac{1}{n}(m_1 + \!...\! + m_i + \!...\! + m_n)\right\rVert_{\tilde{\mc{C}}} \right.\\ 
        & - \left. \left\lVert \E\log \frac{\tilde{g}}{\tilde{f}} - \frac{1}{n}(m_1 + \!...\! + m_i^{\prime} + \!...\! + m_n) \right\rVert_{\tilde{\mc{C}}} \right\rvert\\
        & \leq \frac{1}{n}\left\lVert\log\frac{\tilde{g}(X_i^{\prime}) }{\tilde{f}(X_i^{\prime})}-\log\frac{\tilde{g}(X_i)}{\tilde{f}(X_i)} \right\rVert_{\tilde{\mc{C}}} \\
        & \leq \frac{1}{n}\left(\log\frac{c+b}{a}-\log\frac{a}{c+b}\right)= \frac{1}{n} 2\log \frac{c+b}{a} = c_i.
    \end{align*}
    The chain of inequalities holds because of the triangle inequality and the properties of the supremum. By Lemma~\ref{lem: bounded difference inequality}, we have 
    \begin{equation*}
        {\pr}(Z - \E Z > \varepsilon) \leq \exp\left\{-\frac{n\varepsilon^2}{(\sqrt{2}\log \frac{c + b}{a})^2}\right\},
    \end{equation*}
    so
    \begin{equation*}
        {\pr}(Z \leq \varepsilon + \E Z) \geq 1-\exp\left\{-\frac{n\varepsilon^2}{(\sqrt{2}\log \frac{c + b}{a})^2}\right\},
    \end{equation*}
    where it follows from $t = n\varepsilon^2/(\sqrt{2}\log \frac{c + b}{a})^2$ that $\varepsilon = \sqrt{2}\log\left(\frac{c+b}{a}\right)\sqrt{\frac{t}{n}}$.
    Therefore with probability at least $1-\mathrm{e}^{-t}$,
    \begin{equation*}
        \left\lVert \frac{1}{n} \sum_{i=1}^{n} \log \frac{\tilde{g}(X_i)}{\tilde{f}(X_i)} - {\E}_f \log \frac{\tilde{g}}{\tilde{f}} \right\rVert_{\tilde{\mc{C}}}\leq\E\left\lVert \frac{1}{n} \sum_{i=1}^{n} \log \frac{\tilde{g}(X_i)}{\tilde{f}(X_i)} - {\E}_f \log \frac{\tilde{g}}{\tilde{f}} \right\rVert_{\tilde{\mc{C}}} + \sqrt{2}\log\left(\frac{c+b}{a}\right)\sqrt{\frac{t}{n}}.
    \end{equation*}
    Let $(\varepsilon_i)_{i\in[n]}$ be i.i.d. Rademacher random variables, independent of $(X_i)_{i\in[n]}$. By Lemma~\ref{lem: symmetrisation inequality},
    \begin{equation*}
        \E\left\lVert \frac{1}{n} \sum_{i=1}^{n} \log \frac{\tilde{g}(X_i)}{\tilde{f}(X_i)} - {\E}_{f}\log \frac{\tilde{g}}{\tilde{f}} \right\rVert_{\tilde{\mc{C}}} \leq 2\E\left\lVert \frac{1}{n} \sum_{i=1}^{n} \log \frac{\tilde{g}(X_i)}{\tilde{f}(X_i)} \varepsilon_i \right\rVert_{\tilde{\mc{C}}}.
    \end{equation*}
    By combining the results above, the following inequality holds with probability at least $1-\mathrm{e}^{-t}$
    \begin{equation*}
        \left\lVert \frac{1}{n} \sum_{i=1}^{n} \log \frac{\tilde{g}(X_i)}{\tilde{f}(X_i)} - {\E}_{f} \log \frac{\tilde{g}}{\tilde{f}} \right\rVert_{\tilde{\mc{C}}} \leq 2\E\left\lVert \frac{1}{n} \sum_{i=1}^{n}  \log \frac{\tilde{g}(X_i)}{\tilde{f}(X_i)}\varepsilon_i \right\rVert_{\tilde{\mc{C}}} + \sqrt{2}\log\left(\frac{c+b}{a}\right)\sqrt{\frac{t}{n}}.
    \end{equation*}
    Now let $p_i = \frac{\tilde{g}(X_i)}{\tilde{f}(X_i)} - 1$, such that $\frac{a}{c+b}  \leq p_i +1  \leq \frac{c+b}{a}$ holds for all $i \in [n]$. Additionally, let $\eta(p) = \log(p+1)$ so that $\eta(0) = 0$ and note that for $p \in \left[\frac{a}{c+b} - 1, \frac{c+b}{a} - 1\right]$, the derivative of $\eta(p)$ is maximal at $p^* = \frac{a}{c+b} - 1$, and equal to $\eta^{{'}}(p^*) = (c+b)/a$. Therefore, $\frac{a}{b+c} \log(p+1)$ is $1$-Lipschitz. By Lemma~\ref{lem: comparison inequality} applied to $\eta(p)$, 
    \begin{align*}
        2\E\left\lVert\frac{1}{n}\sum_{i=1}^{n}\log\frac{\tilde{g}(X_i)}{\tilde{f}(X_i)} \varepsilon_i \right\rVert_{\tilde{\mc{C}}}
            & = 2\E\left\lVert \frac{1}{n} \sum_{i=1}^{n} \eta(p_i) \varepsilon_i \right\rVert_{\tilde{\mc{C}}} \leq \frac{4(c+b)}{a}\E\left\lVert\frac{1}{n} \sum_{i=1}^{n} \frac{\tilde{g}(X_i)}{\tilde{f}(X_i)} \varepsilon_i - \frac{1}{n} \sum_{i=1}^{n} \varepsilon_i \right\rVert_{\tilde{\mc{C}}} \\
            & \leq \frac{4(c+b)}{a}\E\left\lVert\frac{1}{n} \sum_{i=1}^{n}  \frac{\tilde{g}(X_i)}{\tilde{f}(X_i)} \varepsilon_i \right\rVert_{\tilde{\mc{C}}} + \frac{4(c+b)}{a} {\E}_{\varepsilon} \left\lvert \frac{1}{n} \sum_{i=1}^{n} \varepsilon_i \right\rvert \\
            & \leq \frac{4(c+b)}{a}\E \left\lVert\frac{1}{n} \sum_{i=1}^{n}  \frac{\tilde{g}(X_i)}{\tilde{f}(X_i)} \varepsilon_i \right\rVert_{\tilde{\mc{C}}} + \frac{4(c+b)}{a} \frac{1}{\sqrt{n}},
    \end{align*}
    where the final inequality follows from the following result, proved in \cite{Haagerup1981}:
    \begin{equation*}
        {\E}_{\varepsilon} \left\lvert \frac{1}{n} \sum_{i=1}^{n} \varepsilon_i \right\rvert \leq \left({\E}_\varepsilon \left\{\frac{1}{n}\sum_{i=1}^{n}\varepsilon_i\right\}^2\right)^{1/2}= \frac{1}{\sqrt{n}}.
    \end{equation*}
    Now, let $\xi_i(\tilde{g}_i) =  a\cdot {\tilde{g}(X_i)}/{\tilde{f}(X_i)}$, and note that 
    \begin{equation*}
        \lvert\xi_i(u_i) - \xi_i(v_i)\rvert = \frac{a}{\lvert\tilde{f}(X_i)\rvert } \lvert u(X_i) - v(X_i) \rvert \leq \lvert u(X_i) - v(X_i)\rvert.
    \end{equation*} 
    By again applying Lemma~\ref{lem: comparison inequality}, we have
    \begin{align*}
        \frac{4(c+b)}{a}\E\left\lVert\frac{1}{n} \sum_{i=1}^{n}  \frac{\tilde{g}(X_i)}{\tilde{f}(X_i)} \varepsilon_i \right\rVert_{\tilde{\mc{C}}}
            & \leq \frac{8(c+b)}{a^2}\E\left\lVert\frac{1}{n} \sum_{i=1}^{n} \tilde{g}(X_i)\varepsilon_i \right\rVert_{\tilde{\mc{C}}} \\
            & \leq \frac{8(c+b)}{a^2}\E\left\lVert \frac{1}{n} \sum_{i=1}^{n} g(X_i) \varepsilon_i \right\rVert_{\mc{C}} + \frac{8(c+b)}{a^2} \E \left\lvert \frac{1}{n} \sum_{i=1}^{n}  h(X_i) \varepsilon_i \right\rvert \\
            & \leq \frac{8(c+b)}{a^2}\E \left\lVert \frac{1}{n} \sum_{i=1}^{n} g(X_i) \varepsilon_i \right\rVert_{\mc{C}} + \frac{8(c+b)}{a^2} \frac{b}{\sqrt{n}}.
    \end{align*}
    Applying Lemmas~\ref{lem: rachemacher_applied_to_main_result} and \ref{lem:upperentrybound}, the following inequality holds for some constant $K > 0$:
    \begin{equation}
        {\E}_{\varepsilon} \sup_{g \in \mc{C}} \left\lvert \frac{1}{n} \sum_{i=1}^{n}  g(X_i) \varepsilon_i \right\rvert ={\E}_{\varepsilon} \sup_{\theta \in \Theta} \left\lvert \frac{1}{n} \sum_{i=1}^{n}  \varphi(X_i; \theta) \varepsilon_i \right\rvert \leq \frac{K}{\sqrt{n}} \E \int_{0}^c \log^{1/2} N(\mc{P}, \varepsilon, d_{n,x})\mathrm{d}\varepsilon, \label{eq: upper_entropy_bound_P}
    \end{equation}
    and combining the results together, the following inequality holds with probability at least $1-\mathrm{e}^{-t}$:
    \begin{align}
        \hspace{-1.5em}\left\lVert \frac{1}{n}\sum_{i=1}^{n}\log\frac{\tilde{g}(X_i)}{\tilde{f}(X_i)}-{\E}_f \log\frac{\tilde{g}}{\tilde{f}}\right\rVert  &\leq \frac{8(c+b)K}{a^2\sqrt{n}} \E \int_{0}^c\log^{1/2}N(\mc{P},\varepsilon, d_{n,x})\mathrm{d}\varepsilon+\frac{(8b + 4a)(c+b)}{a^2\sqrt{n}}+\sqrt{2}\log\left(\frac{c+b}{a}\right)\sqrt{\frac{t}{n}} \nonumber\\
        & = \frac{w_1}{\sqrt{n}}\E\int_{0}^c \log^{1/2} N(\mc{P},\varepsilon,d_{n,x}) \mathrm{d}\varepsilon + \frac{w_2}{\sqrt{n}} + w_3\sqrt{\frac{t}{n}},\label{eq_finalboundthm1}
    \end{align}
    where $w_1$, $w_2$, and $w_3$ are constants that each depend on some or all of $a$, $b$, and $c$.
\end{proof}
\begin{remark}
    From Lemma~\ref{lem:upperentrybound} we have that $\sigma^2_n \coloneqq \sup_{f \in \mathscr{F}} P_nf^2$. To make explicit why $2\sigma_n =\left(\sup_{g\in\cC} P_n g^2\right)^{1/2} = 2c$, let $\mathscr{F} = \mathcal{C}$ and observe 
     \begin{equation*}
        \sigma^2_n = \sup_{g \in \mc{C}} P_n g^2 = \sup_{g \in \mc{C}} \frac{1}{n}\sum_{i=1}^{n} g(X_i)^2 \leq \frac{1}{n}  \sum_{i=1}^{n} c^2 = c^2.
    \end{equation*}
    Since our basis functions $\varphi(\cdot, \theta)$ are bounded by $c$, everything greater than $c$ will have value $0$ and hence the change from $2c$ to $c$ is inconsequential. However, it can also be motivated by the fact that $\varphi(\cdot, \theta)$ are positive functions.
\end{remark}
As highlighted in Remark~\ref{rem_alt_bound}, the full result of Theorem~\ref{thm_main_result_1} relies on the empirical $L_2$ distances $d_{n,x}$ and $d_{n,y}$. In the result of Theorem~\ref{thm_main_result_2}, we make use of the following result to bound $d_{n,x}$ and $d_{n,y}$.
\begin{proposition}
    By combining Lemmas~\ref{lem: kosorok_9.18} and \ref{lem: kosorok_9.22}, the following inequalities holds:
    \begin{equation*}
        \log N(\mathcal{P}, \varepsilon, \normdot) \leq \log N_{[]}(\mathcal{P}, \varepsilon, \normdot) \leq \log N(\mathcal{P}, \varepsilon/2, {\normdot}_{\infty}),
    \end{equation*}
    where $N_{[]}(\mc{P}, \varepsilon, \normdot)$ is the $\varepsilon$-bracketing number of $\mc{P}$.
    Therefore, we have that 
    \begin{equation*}
        \log N(\mc{P}, \varepsilon, d_{n,x}) \leq \log N(\mc{P}, \varepsilon/2, {\normdot}_{\infty}), \text { and } \log N(\mc{P}, \varepsilon, d_{n,y}) \leq \log N(\mc{P}, \varepsilon/2, {\normdot}_{\infty}).
    \end{equation*}
\end{proposition}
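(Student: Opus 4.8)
The plan is to chain two standard comparisons between covering and bracketing numbers, and then specialise the chain to the empirical metrics $d_{n,x}$ and $d_{n,y}$. Throughout, $\normdot$ stands for an $L_r$ norm (here $r=2$ suffices) taken with respect to a \emph{probability} measure, so that the constant function $1$ has unit norm; this normalisation is the only place where anything delicate happens.

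First I would invoke Lemma~\ref{lem: kosorok_9.18} to obtain the left inequality $N(\mathcal{P},\varepsilon,\normdot)\le N_{[]}(\mathcal{P},\varepsilon,\normdot)$. The mechanism is that, given an $\varepsilon$-bracketing $\{[\ell_j,u_j]\}_{j=1}^{M}$ of $\mathcal{P}$ with $\lVert u_j-\ell_j\rVert\le\varepsilon$, picking one function $\varphi_j$ from $\mathcal{P}$ inside each bracket produces an $\varepsilon$-net: every $\varphi\in\mathcal{P}$ lies in some $[\ell_j,u_j]$, so $0\le\lvert\varphi-\varphi_j\rvert\le u_j-\ell_j$ pointwise and monotonicity of the $L_r$ norm gives $\lVert\varphi-\varphi_j\rVert\le\lVert u_j-\ell_j\rVert\le\varepsilon$. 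Next I would invoke Lemma~\ref{lem: kosorok_9.22} for the right inequality $N_{[]}(\mathcal{P},\varepsilon,\normdot)\le N(\mathcal{P},\varepsilon/2,{\normdot}_\infty)$: starting from a minimal $(\varepsilon/2)$-net $\{g_1,\dots,g_M\}$ of $\mathcal{P}$ in $\lVert\cdot\rVert_\infty$, the brackets $[g_j-\varepsilon/2,\,g_j+\varepsilon/2]$ cover $\mathcal{P}$, and each has $\normdot$-width $\lVert(g_j+\varepsilon/2)-(g_j-\varepsilon/2)\rVert=\varepsilon\lVert 1\rVert=\varepsilon$, the last equality being exactly where the probability-measure normalisation enters. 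Composing the two inequalities yields the stated chain, and the inequality is anyway trivially true whenever the right-hand side is infinite.

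Finally, the empirical distances $d_{n,x}$ and $d_{n,y}$ are precisely the $L_2$ norms associated with the empirical measures $n^{-1}\sum_{i=1}^{n}\delta_{X_i}$ and $n^{-1}\sum_{i=1}^{n}\delta_{Y_i}$, both of which are probability measures; hence the chain above applies verbatim with $\normdot$ replaced by $d_{n,x}$ or $d_{n,y}$, and taking logarithms gives the two displayed conclusions. I do not expect a genuine obstacle here: the argument is entirely standard empirical-process bookkeeping, and the only subtlety, the factor of $2$ in the bracketing-to-sup-norm step, hinges solely on the reference measure having unit total mass, which the empirical measures do.
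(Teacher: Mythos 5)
Your argument is correct and is exactly the paper's: the chain is obtained by composing Lemma~\ref{lem: kosorok_9.18} with Lemma~\ref{lem: kosorok_9.22} (at radius $\varepsilon/2$), and the specialisation to $d_{n,x}$ and $d_{n,y}$ works because these empirical $L_2$ seminorms are dominated by ${\normdot}_{\infty}$. Your only embellishment, the insistence that $\lVert 1\rVert=1$ under a probability measure, is slightly stronger than needed, since Lemma~\ref{lem: kosorok_9.22} only requires $\lVert \cdot\rVert \le \lVert\cdot\rVert_{\infty}$, which the empirical measures clearly satisfy.
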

With this result, we can now prove Theorem~\ref{thm_main_result_2}.
\begin{proof}[Proof (of Theorem~\ref{thm_main_result_2})] \label{prf_main_result_2}
    The notation is the same as in the proof of Theorem~\ref{thm_main_result_1}. The values of the constants may change from line to line.
    \begin{align*}
        & \KLH{f}{f_{k,n}} - \KLH{f}{f_k} = {\E}_f\log\frac{\tilde{f}}{\tilde{f}_{k,n}} + {\E}_h\log\frac{\tilde{f}}{\tilde{f}_{k,n}} - {\E}_f\log\frac{\tilde{f}}{\tilde{f}_{k}} - {\E}_h\log\frac{\tilde{f}}{\tilde{f}_{k}} \\
        & \ = {\E}_f\log\frac{\tilde{f}}{\tilde{f}_{k,n}} - \frac{1}{n}\sum_{i=1}^{n}\log\frac{\tilde{f}(X_i)}{\tilde{f}_{k,n}(X_i)} + \frac{1}{n}\sum_{i=1}^{n}\log\frac{\tilde{f}(X_i)}{\tilde{f}_{k,n}(X_i)} + {\E}_h\log\frac{\tilde{f}}{\tilde{f}_{k,n}} - \frac{1}{n}\sum_{i=1}^{n}\log\frac{\tilde{f}(Y_i)}{\tilde{f}_{k,n}(Y_i)} + \frac{1}{n}\sum_{i=1}^{n}\log\frac{\tilde{f}(Y_i)}{\tilde{f}_{k,n}(Y_i)} \\
        & \ \qquad - {\E}_f\log\frac{\tilde{f}}{\tilde{f}_{k}} + \frac{1}{n}\sum_{i=1}^{n}\log\frac{\tilde{f}(X_i)}{\tilde{f}_{k}(X_i)} - \frac{1}{n}\sum_{i=1}^{n}\log\frac{\tilde{f}(X_i)}{\tilde{f}_{k}(X_i)} - {\E}_h\log\frac{\tilde{f}}{\tilde{f}_{k}} + \frac{1}{n}\sum_{i=1}^{n}\log\frac{\tilde{f}(Y_i)}{\tilde{f}_{k}(Y_i)} - \frac{1}{n}\sum_{i=1}^{n}\log\frac{\tilde{f}(Y_i)}{\tilde{f}_{k}(Y_i)} \\
        & \ = \left({\E}_f \log\frac{\tilde{f}}{\tilde{f}_{k,n}} - \frac{1}{n}\sum_{i=1}^{n}\log\frac{\tilde{f}(X_i)}{\tilde{f}_{k,n}(X_i)}\right) + \left({\E}_h \log\frac{\tilde{f}}{\tilde{f}_{k,n}} - \frac{1}{n}\sum_{i=1}^{n}\log\frac{\tilde{f}(Y_i)}{\tilde{f}_{k,n}(Y_i)}\right) \\
        & \ \quad + \left(\frac{1}{n}\sum_{i=1}^{n} \log\frac{\tilde{f}(X_i)}{\tilde{f}_k(X_i)} - {\E}_f \log\frac{\tilde{f}}{\tilde{f}_k}\right) + \left(\frac{1}{n}\sum_{i=1}^{n} \log \frac{\tilde{f}(Y_i)}{\tilde{f}_k(Y_i)} - {\E}_h \log \frac{\tilde{f}}{\tilde{f}_k} \right) \\
        & \ \quad + \left(\frac{1}{n}\sum_{i=1}^{n}\log \frac{\tilde{f}(X_i)}{\tilde{f}_{k,n}(X_i)} - \frac{1}{n}\sum_{i=1}^{n}\log \frac{\tilde{f}(X_i)}{\tilde{f}_k(X_i)}\right) + \left(\frac{1}{n}\sum_{i=1}^{n}\log\frac{\tilde{f}(Y_i)}{\tilde{f}_{k,n}(Y_i)} - \frac{1}{n}\sum_{i=1}^{n}\log\frac{\tilde{f}(Y_i)}{\tilde{f}_k(Y_i)}\right) \\
        & \ \leq 2 \sup_{\tilde{g} \in \tilde{\mc{C}}}\left\lvert \frac{1}{n} \sum_{i=1}^{n} \log \frac{\tilde{g}(X_i)}{\tilde{f}(X_i)} - {\E}_f \log \frac{\tilde{g}}{\tilde{f}}\right\rvert + 2 \sup_{\tilde{g} \in \tilde{\mc{C}}}\left\lvert \frac{1}{n} \sum_{i=1}^{n} \log \frac{\tilde{g}(Y_i)}{\tilde{f}(Y_i)} - {\E}_h \log \frac{\tilde{g}}{\tilde{f}}\right\rvert \\
        & \ \quad  + \left(\frac{1}{n}\sum_{i=1}^{n} \log \frac{\tilde{f}(X_i)}{\tilde{f}_{k,n}(X_i)} - \frac{1}{n}\sum_{i=1}^{n}\log \frac{\tilde{f}(X_i)}{\tilde{f}_k(X_i)}\right)  + \left(\frac{1}{n}\sum_{i=1}^{n} \log \frac{\tilde{f}(Y_i)}{\tilde{f}_{k,n}(Y_i)} - \frac{1}{n}\sum_{i=1}^{n}\log \frac{\tilde{f}(Y_i)}{\tilde{f}_k(Y_i)}\right) \\
        & \ \leq 2\E\left\{\frac{w^x_{1}}{\sqrt{n}}\int_{0}^c \log^{1/2} N(\mc{P}, \varepsilon, d_{n,x})\mathrm{d}\varepsilon\right\} + \frac{w^x_{2}}{\sqrt{n}} + w^x_{3}\sqrt{\frac{t}{n}} + \frac{1}{n}\sum_{i=1}^{n} \log \frac{\tilde{f}_k(X_i)}{\tilde{f}_{k,n}(X_i)} \\
        & \ \qquad + 2 \E\left\{\frac{w^y_{1}}{\sqrt{n}}\int_{0}^c \log^{1/2} N(\mc{P}, \varepsilon, d_{n,y})\mathrm{d}\varepsilon\right\} + \frac{w^y_{2}}{\sqrt{n}} + w^y_{3}\sqrt{\frac{t}{n}} + \frac{1}{n}\sum_{i=1}^{n} \log \frac{\tilde{f}_k(Y_i)}{\tilde{f}_{k,n}(Y_i)} \\
        & \ \leq \frac{w_{1}}{\sqrt{n}}\int_{0}^c \log^{1/2} N(\mc{P}, \varepsilon/2, {\normdot}_{\infty})\mathrm{d}\varepsilon + \frac{w_{2}}{\sqrt{n}} + w_{3}\sqrt{\frac{t}{n}} + \frac{1}{n}\sum_{i=1}^{n} \log \frac{\tilde{f}_k(X_i)}{\tilde{f}_{k,n}(X_i)} + \frac{1}{n}\sum_{i=1}^{n} \log \frac{\tilde{f}_k(Y_i)}{\tilde{f}_{k,n}(Y_i)},
    \end{align*}
    with probability at least $1-\mathrm{e}^{-t}$, by Theorem~\ref{thm_main_result_1}. Now, we can use~\eqref{eq_zhang_KLh_bound2} from Proposition~\ref{prop_Zhang_2} applied to the target density $f_k$, obtaining the following:
    \begin{align*}
        \KLH{f_k}{f_{k,n}} 
            &= \frac{1}{n}\sum_{i=1}^{n}\log\frac{\tilde{f}_k(X_i)}{\tilde{f}_{k,n}(X_i)} + \frac{1}{n}\sum_{i=1}^{n}\log\frac{\tilde{f}_k(Y_i)}{\tilde{f}_{k,n}(Y_i)}\leq \frac{4a^{-2}c^{2}}{k+2} +\inf_{p\in\mc{C}}\KLH{f_k}{p}.
    \end{align*}
    Since by definition we have that $f_k \in \mc{C}$, $\inf_{p \in \mc{C}}\KLH{f_k}{p} = 0$, and so with probability at least $1-\mathrm{e}^{-t}$ we have:
    \begin{align}
        \KLH{f}{f_{k,n}}& - \KLH{f}{f_k}  \leq \frac{w_{1}}{\sqrt{n}}\int_{0}^c \log^{1/2} N(\mc{P}, \varepsilon/2, {\normdot}_{\infty})\mathrm{d}\varepsilon + \frac{w_{2}}{\sqrt{n}} + w_{3}\sqrt{\frac{t}{n}}  + \frac{w_4}{k+2}. \label{eq_main_bound_1}
    \end{align}   
    We can write the overall error as the sum of the approximation and estimation errors as follows. The former is bounded by \eqref{eq: zhang KLh bound1}, and the latter is bounded as above in \eqref{eq_main_bound_1}. Therefore, with probability at least $1-\mathrm{e}^{-t}$,
    \begin{align}
        \KLH{f}{f_{k,n}} - \KLH{f}{\mc{C}} 
            &= [\KLH{f}{f_k}-\KLH{f}{\mc{C}}] + [\KLH{f}{f_{k,n}} - \KLH{f}{f_k}] \nonumber\\
            &\leq \frac{w_4}{k+2} + \frac{w_1}{\sqrt{n}}\int_{0}^c \log^{1/2} N(\mc{P}, \varepsilon/2, {\normdot}_{\infty}) \mathrm{d}\varepsilon + \frac{w_2}{\sqrt{n}} + w_3\sqrt{\frac{t}{n}} \label{eq_main_bound_1_h}.
    \end{align}
    As in \cite{rakhlin_risk_2005}, we rewrite the above probabilistic statement as a statement in terms of expectations. To this end, let 
    \begin{equation*}
        \mc{A} \coloneqq \frac{w_4}{k+2} + \frac{w_1}{\sqrt{n}}\int_{0}^c \log^{1/2} N(\mc{P}, \varepsilon/2, {\normdot}_{\infty}) \mathrm{d}\varepsilon + \frac{w_2}{\sqrt{n}},
    \end{equation*}
    and $\mc{Z} \coloneqq \KLH{f}{f_{k,n}} - \KLH{f}{\mc{C}}$. We have shown ${\pr}\left(\mc{Z} \geq \mc{A} + w_3\sqrt{\frac{t}{n}}\right) \leq \mathrm{e}^{-t}$. Since $\mc{Z} \geq 0$, 
    \begin{equation*}
        \E\{\mc{Z}\}=\int^{\mc{A}}_0{\pr}(\mc{Z}>s)\mathrm{d}s+\int^{\infty}_{\mc{A}}{\pr}(\mc{Z}>s)\mathrm{d}s\leq \mc{A}+\int_{0}^{\infty}{\pr}(\mc{Z}>\mc{A}+s)\mathrm{d}s.
    \end{equation*}
    Setting $s=w_3\sqrt{\frac{t}{n}}$, we have $t=w_5ns^2$ and $\E\{\mc{Z}\}\leq\mc{A}+\int_{0}^{\infty}e^{-w_5ns^2}\mathrm{d}s\leq\mc{A}+\frac{w}{\sqrt{n}}$.
    Hence,
    \begin{equation*}
        {\E}\left\{\KLH{f}{f_{k,n}}\right\}-\KLH{f}{\mc{C}}\leq\frac{c_1}{k+2} + \frac{c_2}{\sqrt{n}}\int_{0}^c \log^{1/2} N\left(\mc{P}, \varepsilon/2,{\normdot}_{\infty}\right)\mathrm{d}\varepsilon + \frac{c_3}{\sqrt{n}},
    \end{equation*}
    where $c_1$, $c_2$, and $c_3$ are constants that depend on some or all of $a$, $b$, and $c$. 
\end{proof}
\begin{remark}
    The approximation error characterises the suitability of the class $\mc{C}$, i.e., how well functions in $\mc{C}$ are able to estimate a target $f$ which does not necessarily lie in $\mc{C}$. The estimation error characterises the error arising from the estimation of the target $f$ on the basis of the finite sample of size $n$. 
\end{remark}
\begin{proof}[Proof of Corollary~\ref{cor_finiteness}]
    Let $\mathcal{X}$ and $\Theta$ be compact and assume the Lipshitz condition given in \eqref{eq: lipschitz cond}. If $\varphi(x;\cdot)$ is continuously differentiable, then
    \begin{align*}
        \left|\varphi\left(x;\theta\right)-\varphi\left(x;\tau\right)\right| & \le\sum_{k=1}^{d}\left|\frac{\partial\varphi\left(x;\cdot\right)}{\partial\theta_{k}}\left(\theta_{k}^{*}\right)\right|\left|\theta_{k}-\tau_{k}\right| \le\sup_{\theta^{*}\in\Theta}\left\Vert \frac{\partial\varphi\left(x;\cdot\right)}{\partial\theta}\left(\theta^{*}\right)\right\Vert _{1}\left\Vert \theta-\tau\right\Vert _{1}.
    \end{align*}
    Setting 
    \begin{equation*}
        \varPhi\left(x\right)=\sup_{\theta^{*}\in\Theta}\left\Vert \frac{\partial\varphi\left(x;\cdot\right)}{\partial\theta}\left(\theta^{*}\right)\right\Vert _{1},
    \end{equation*}
    we have $\lVert\varPhi\rVert_{\infty}<\infty$. From Lemma~\ref{lem: kosorok_9.23}, we obtain the fact that
    \begin{equation*}
        \log N_{[]}\left(\mathcal{P},2\varepsilon\lVert\varPhi\rVert_{\infty},{\normdot}_{\infty}\right)\le\log N\left(\Theta,\varepsilon,{\normdot}_{\infty}\right),
    \end{equation*}
    which by the change of variable $\delta=2\varepsilon\lVert\varPhi\rVert _{\infty}\implies\varepsilon=\delta/2\lVert\varPhi\rVert_{\infty}$
    implies
    \begin{equation*}
        \log N_{[]}\left(\mathcal{P},\varepsilon/2,{\normdot}_{\infty}\right)\le\log N\left(\Theta,\frac{\varepsilon}{4\lVert\varPhi\rVert_{\infty}},{\normdot}_{1}\right).
    \end{equation*}
    Since $\Theta\subset\mathbb{R}^{d}$, using the fact that a Euclidean set of radius $r$ has covering number $N\left(r,\varepsilon\right)\le\left(\frac{3r}{\varepsilon}\right)^{d},$
    we have
    \begin{equation*}
        \log N\left(\Theta,\frac{\varepsilon}{4\lVert\varPhi\rVert_{\infty}},{\normdot}_{1}\right)\le d\log\left[\frac{12\lVert\varPhi\rVert _{\infty}\mathrm{diam}\left(\Theta\right)}{\varepsilon}\right].
    \end{equation*}
    So
    \begin{equation*}
        \int_{0}^{c}\sqrt{\log N\left(\Theta,\frac{\varepsilon}{4\lVert\varPhi\rVert_{\infty}},{\normdot}_{1}\right)}\mathrm{d}\varepsilon\le\int_{0}^{c}\sqrt{d\log\left[\frac{12\lVert\varPhi\rVert _{\infty}\mathrm{diam}\left(\Theta\right)}{\varepsilon}\right]}\mathrm{d}\varepsilon,
    \end{equation*}
    and since $c<\infty$, the integral is finite, as required.
\end{proof}

\section{Discussions and remarks regarding \texorpdfstring{$h$}{h}-MLLEs}\label{sec_origin_hliftedLE}
In this section, we share some commentary on the derivation of the $h$-lifted KL divergence, its advantages and drawbacks, and some thoughts on the selection of the lifting function $h$. We also discuss the suitability of the MM algorithms in contrast to other approaches (e.g., EM algorithms). 
\subsection{Elementary derivation}\label{sec_origin}%
From Equation~\ref{eq_hKL}, we observe that if $X$ arises from a measure with density $f$, and if we aim to approximate $f$ with a density $g \in \mathrm{co}_{k}(\mathcal{P})$ that minimizes the $h$-lifted KL divergence $\mathrm{KL}_{h}$ with respect to $f$, then we can define an approximator (referred to as the minimum $h$-lifted KL approximator) as
\begin{align*}
f_{k} &= \underset{g \in \mathrm{co}_{k}(\mathcal{P})}{\argmin} \left[ \int_{\mathcal{X}} \{f+h\} \log \{f+h\} \mathrm{d}\mu - \int_{\mathcal{X}}\{f+h\}\log\{g+h\} \mathrm{d}\mu \right] \\
&= \underset{g \in \mathrm{co}_{k}(\mathcal{P})}{\argmin} -\int_{\mathcal{X}} \{f+h\} \log \{g+h\} \mathrm{d}\mu = \underset{g \in \mathrm{co}_{k}(\mathcal{P})}{\argmax} \int_{\mathcal{X}} \{f+h\} \log \{g+h\} \mathrm{d}\mu,
\end{align*}
noting that $\int_{\mathcal{X}} \{f+h\} \log \{f+h\}\mathrm{d}\mu$ is a constant that does not depend on the argument $g$. Now, observe that
\begin{align*}
    \int_{\mathcal{X}} f \log \{g+h\} \mathrm{d}\mu = {\E}_{f} \log \{g+h\} \text{ and } \int_{\mathcal{X}} f \log \{g+h\} \mathrm{d}\mu = {\E}_{f} \log \{g+h\},
\end{align*}
since both $f$ and $h$ are densities on $\mathcal{X}$ with respect to the dominating measure $\mu$. If a sample $\mathbf{X}_{n} = (X_{i})_{i \in [n]}$ is available, we can estimate the expectation ${\E}_{f} \log \{g+h\}$ by the sample average functional
\begin{equation*}
    \frac{1}{n} \sum_{i=1}^{n} \log \{ g(X_{i}) + h(X_{i}) \},
\end{equation*}
resulting in the sample estimator for $f_{k}$:
\begin{equation*}
    f_{k,n}' = \underset{g \in \mathrm{co}_{k}(\mathcal{P})}{\argmax} \left[ \frac{1}{n} \sum_{i=1}^{n} \log \{ g(X_{i}) + h(X_{i}) \} + {\E}_{h} \log \{g+h\} \right],
\end{equation*}
which serves as an alternative to Equation~\ref{eq_hMLLE}. However, the expectation ${\E}_{h} \log \{g+h\}$ is intractable, making the optimization problem computationally infeasible, especially when $\mathcal{X}$ is multivariate (i.e., $\mathcal{X} \subset \mathbb{R}^{d}$ for $d > 1$), as integral evaluations may be challenging to compute accurately. Thus, we approximate the intractable integral ${\E}_{h} \log \{g+h\}$ using the sample average approximation (SAA) from stochastic programming (cf. \citealp[Chapter 5]{shapiro_lectures_2021}), yielding the Monte Carlo approximation
\begin{equation*}
    \frac{1}{n_{1}} \sum_{i=1}^{n_{1}} \log \{ g(Y_{i}) + h(Y_{i}) \}
\end{equation*}
for a sufficiently large $n_{1} \in \mathbb{N}$, where each $Y_{i}$ is an independent and identically distributed random variable from the measure on $\mathcal{X}$ with density $h$. This approach provides an estimator for $f_{k}$ of the form
\begin{equation*}
    f_{k,n,n_{1}} = \underset{g \in \mathrm{co}_{k}(\mathcal{P})}{\argmax} \left[ \frac{1}{n} \sum_{i=1}^{n} \log \{ g(X_{i}) + h(X_{i}) \} + \frac{1}{n_{1}} \sum_{i=1}^{n_{1}} \log \{ g(Y_{i}) + h(Y_{i}) \} \right],
\end{equation*}
which is exactly the $h$-MLLE defined by Equation \eqref{eq_hMLLE} when we take $n_{1} = n$. Notably, the additional samples $\mathbf{Y}_{n} = (Y_{i})_{i \in [n]}$ provide no information regarding ${\E}_{f} \log \{g+h\}$, which is the component of the objective function coupling the estimator $g$ with the target $f$. However, it offers a feasible mechanism for approximating the otherwise intractable integral ${\E}_{h} \log \{g+h\}$.

By setting $n_{1} = n$ for the SAA approximation of ${\E}_{h} \log \{g+h\}$, the convergence rate in Theorem~\ref{thm_main_result_2} remains unaffected. Specifically, for any $t > 0$,
\begin{equation*}
    \left\| \frac{1}{n} \sum_{i=1}^{n} \log \frac{g(X_{i}) + h(X_{i})}{f(X_{i}) + h(X_{i})} - {\E}_{f} \log \frac{g + h}{f + h} \right\|_{\mathcal{C}} \le \frac{w_{1}}{\sqrt{n}} {\E} \log^{1/2} N(\mathcal{P}, \varepsilon, d_{n,x}) \mathrm{d}\varepsilon + \frac{w_{2}}{\sqrt{n}} + w_{3} \sqrt{\frac{t}{n}}
\end{equation*}
and
\begin{equation*}
    \left\| \frac{1}{n} \sum_{i=1}^{n} \log \frac{g(Y_{i}) + h(Y_{i})}{f(Y_{i}) + h(Y_{i})} - {\E}_{h} \log \frac{g + h}{f + h} \right\|_{\mathcal{C}} \le \frac{w_{1}}{\sqrt{n}} {\E} \log^{1/2} N(\mathcal{P}, \varepsilon, d_{n,y})\mathrm{d}\varepsilon + \frac{w_{2}}{\sqrt{n}} + w_{3} \sqrt{\frac{t}{n}},
\end{equation*}
with probability at least $1 - e^{-t}$, as noted in Remark \ref{rem_alt_bound}. Given that both upper bounds are of order $\mc{O}(1/\sqrt{n}) + \mc{O}(\sqrt{t/n})$, the combined bound in the proof of Theorem~\ref{thm_main_result_2} in Appendix~\ref{sec_proof_main_results} is also of this order, as required.

Finally, to obtain the additional samples $\mathbf{Y}_{n} = (Y_{i})_{i \in [n]}$, we simply simulate $\mathbf{Y}_{n}$ from the data-generating process defined by $h$. Since we can choose $h$ freely, selecting an $h$ that facilitates easy simulation (e.g., $h$ uniform over $\mathcal{X}$, which remains bounded away from zero on a compact set) is advisable for satisfying the requirements of our theorems.
\subsection{Advantages and limitations}%
As discussed extensively in Sections~\ref{sec_Intro} and \ref{sec_hliftedKL}, our two primary benchmarks are the MLE and the least $L_2$ estimator. Indeed, the MLE is simpler than the $h$-MLLE estimator, as it takes the reduced form
\begin{equation*}
    \hat{f}_{k,n} = \underset{g \in \mathrm{co}_{k}(\mathcal{P})}{\argmax}~ \frac{1}{n} \sum_{i=1}^{n} \log g(X_{i}),
\end{equation*}
and does not require a sample average approximation for intractable integrals. It is well established that the MLE estimates the minimum KL divergence approximation to the target $f$
\begin{equation*}
    f_{k} = \underset{g \in \mathrm{co}_{k}(\mathcal{P})}{\argmin} \int_{\mathcal{X}} f \log \frac{f}{g} \mathrm{d}\mu = \KLH{f}{g}.
\end{equation*}
However, as highlighted in the foundational works of \cite{li_mixture_1999} and \cite{rakhlin_risk_2005}, controlling the expected risk
\begin{equation*}
    {\E} \left\{ \KL{f}{\hat{f}_{k,n}}\right\} - \KL{f}{\mathcal{C}}.
\end{equation*}
requires that $f \ge \gamma$ for some strictly positive constant $\gamma > 0$. This requirement excludes many interesting density functions as targets, including those that vanish at the boundaries of $\mathcal{X}$, such as the $\beta(\cdot; 1/2, 1/2)$ distribution, or those that vanish in the interior of $\mathcal{X}$, such as examples $f_{1}$ and $f_{2}$ in Section~\ref{subsec_experimental_setup}. Consequently, the condition $f \ge \gamma$ is restrictive and often impractical in many data analysis settings.

Alternatively, one could consider targeting the minimum $L_2$ estimator:
\begin{align*}
    f_{k} &= \underset{g \in \mathrm{co}_{k}(\mathcal{P})}{\argmin} \int_{\mathcal{X}} (f - g)^{2} \mathrm{d}\mu = \underset{g \in \mathrm{co}_{k}(\mathcal{P})}{\argmin} \int_{\mathcal{X}} f^{2} - 2fg + g^{2} \mathrm{d}\mu = \underset{g \in \mathrm{co}_{k}(\mathcal{P})}{\argmin} \left[ -2\int_{\mathcal{X}} fg \mathrm{d}\mu + \int_{\mathcal{X}} g^{2} \mathrm{d}\mu \right].
\end{align*}
Using a sample $\mathbf{X}_{n}$ generated from the distribution given by $f$, the first term of the objective can be approximated by $-\frac{1}{n} \sum_{i=1}^{n} g(X_{i})$,
which is relatively simple. However, the second term involves an intractable integral that cannot be approximated by Monte Carlo sampling from a fixed generative distribution, as it depends on the optimization argument $g$. Thus, unlike the $h$-MLLE, it is not feasible to reduce this intractable integral to a sample average, which implies the need for a numerical approximation in practice. This can be computationally complex, particularly when $g$ is intricate and $\mathcal{X}$ is high-dimensional. Hence, the minimum $L_2$-norm estimator of the form
\begin{equation*}
    \hat{f}_{k,n} = \underset{g \in \mathrm{co}_{k}(\mathcal{P})}{\argmin} \left[ -\frac{2}{n} \sum_{i=1}^{n} g(X_{i}) + \int_{\mathcal{X}} g^{2} \mathrm{d}\mu \right]
\end{equation*}
is often computationally infeasible, though its risk
\begin{equation*}
    {\E} \| f - \hat{f}_{k,n} \|_{2} - \inf_{g \in \mathcal{C}} \| f - g \|_{2}
\end{equation*}
can be bounded, as shown in the works of \cite{klemela2007density} and \cite{klemela2009smoothing}, even when $\min_{\mathcal{X}} f = 0$. In comparison with the minimum $L_2$ estimator and the MLE, we observe that the $h$-MLLE allows risk bounding for targets $f$ not bounded from below (i.e., $\min_{\mathcal{X}} f = 0$), without requiring intractable integral expressions. The $h$-MLLE achieves the beneficial properties of both the MLE and minimum $L_2$ estimators, which is the focus of our work.

Other divergences and risk minimization schemes for estimating $f$, such as $\beta$-likelihoods and $L_q$ likelihood, could also be considered. The $L_q$ likelihood, for instance, provides a maximizing estimator with a simple sample average expression, similar to the MLE and $h$-MLLE. However, it lacks a characterization in terms of a proper divergence function, such as the KL divergence, $h$-lifted KL divergence, or $L_{2}$ norm. Consequently, this estimator is often inconsistent, as observed in \cite{ferrari2010maximum} and \cite{qin2013maximum}. These studies show that the $L_q$ likelihood estimator may not converge meaningfully to $f$, even when $f \in \mathrm{co}_{k}(\mathcal{P})$, for any fixed $q \in \mathbb{R}_{>0} \setminus \{1\}$, unless a sequence of maximum $L_q$ likelihood estimators is constructed with $q$ depending on $n$ and approaching $1$ to approximate the MLE. Thus, the maximum $L_q$ likelihood estimator does not yield the type of risk bound we require.

Similarly, with the $\beta$-likelihood (or density power divergence), the situation is comparable to that of the minimum $L_{2}$ norm estimator, where the sample-based estimator involves an intractable integral that cannot be approximated through SAA. Specifically, the minimum $\beta$-likelihood estimator is defined as (cf. \citealp{basu1998robust}):
\begin{equation*}
    \hat{f}_{k,n} = \underset{g \in \mathrm{co}_{k}(\mathcal{P})}{\argmin} \left[ -\frac{1}{n} \left( 1 + \frac{1}{\beta} \right) \sum_{i=1}^{n} g^{\beta}(X_{i}) + \int_{\mathcal{X}} g^{1 + \beta}\mathrm{d}\mu \right]
\end{equation*}
for $\beta > 0$, which closely resembles the form of the minimum $L_2$ estimator. Hence, the limitations of the minimum $L_2$ estimator apply here as well, although a risk bound with respect to the $\beta$-likelihood divergence could theoretically be obtained if the computational challenges are disregarded. In Section~\ref{sec_literature}, we cite additional estimators based on various divergences and modified likelihoods. Nevertheless, in each case, one of the limitations discussed here will apply.

\subsection{Selection of the lifting density function \texorpdfstring{$h$}{h}}

The choice of $h$ is entirely independent of the data. In fact, $h$ can be any density with respect to $\mu$, satisfying $0 < a \le h(x) \le b < \infty$ for every $x \in \mathcal{X}$. Beyond this requirement, our theoretical framework remains unaffected by the specific choice of $h$.
In Section~\ref{sec_numerical_experiments}, we explore cases where $h$ is uniform and non-uniform, demonstrating convergence in both $k$ and $n$ that aligns with the predictions of Theorem~\ref{thm_main_result_2}.
For practical implementation, as discussed in Appendix~\ref{sec_origin}, $h$ serves as the sampling distribution for the sample average approximation (SAA) of the intractable integral ${\E}_{h} \log \{g+h\}$. Given its role as a sampling distribution, it is advantageous to select a form for $h$ that is easy to sample from. In many applications, we find that the uniform distribution over $\mathcal{X}$ is an optimal choice for $h$, as it meets the bounding conditions.

We observe that although calibrating $h$ does not improve the rate, it does influence the constants in the upper bound of the final equation of Equation \ref{eq_main_bound_1_h}. Specifically, for each $t > 0$, with probability at least $1 - \mathrm{e}^{-t}$,
\begin{equation*}
    \KLH{f}{f_{k,n}} - \KLH{f}{\mc{C}} \le \frac{w_{1}}{\sqrt{n}} \int_{0}^{c} \log^{1/2} N(\mathcal{P}, \varepsilon / 2, {\normdot}_{\infty}) \mathrm{d}\varepsilon + \frac{w_{2}}{\sqrt{n}} + w_{3} \sqrt{\frac{t}{n}} + \frac{w_{4}}{k + 2},
\end{equation*}
which contributes to the constants in Theorem~\ref{thm_main_result_2}. Letting $c$ denote the upper bound of the target $f$ (i.e., $f(x) \le c < \infty$ for every $x \in \mathcal{X}$), we make the following observations regarding the constants:
\begin{equation*}
    w_{1} \propto \frac{c + b}{a^{2}}, \quad 
    w_{2} \propto \frac{(8b + 4a)(c + b)}{a^{2}}, \quad 
    w_{3} \propto \log\left(\frac{c + b}{a}\right), \quad 
    w_{4} \propto \frac{c^{2}}{a^{2}}.
\end{equation*}
Here, $w_{1},\, w_{2},$ and $w_{3}$ are per the final bound in Equation \ref{eq_finalboundthm1} in the proof of Theorem~\ref{thm_main_result_1}, while $w_{4}$ arises from the bound in Equation \ref{eq_zhang_KLh_bound2}.

When $h$ is uniform, it takes the form $h(x) = z$, where $z = 1 / \int_{\mathcal{X}}\mathrm{d}\mu$, making $a = z = b$. If $h$ is non-uniform, then necessarily $a < z < b$, as there would exist a region $\mathcal{Z}$ of positive measure where $h(x) > z$, which implies that $h(x) < z$ for some $x \in \mathcal{X} \setminus \mathcal{Z}$; otherwise,
\begin{equation*}
    \int_{\mathcal{X}} h\mathrm{d}\mu = \int_{\mathcal{Z}} h \mathrm{d}\mu + \int_{\mathcal{X} \setminus \mathcal{Z}} h\mathrm{d}\mu > \frac{\mu(\mathcal{Z})}{\mu(\mathcal{X})} + \frac{\mu(\mathcal{X} \setminus \mathcal{Z})}{\mu(\mathcal{X})} = 1,
\end{equation*}
contradicting $h$ being a density function. Although we cannot control $c$, we can choose $h$ to control $a$ and $b$. Setting $h = z$ minimizes the numerators in $w_{1}$, as deviations from uniformity increase the numerators of $w_{1}$ and $w_{4}$ while decreasing the denominators. The same reasoning applies to $w_{2}$:
\begin{align*}
    w_{2} &\propto \frac{(8b + 4a)(c + b)}{a^{2}} = \left\{ \frac{8bc}{a^{2}} + \frac{4c}{a} + \frac{8b^{2}}{a^{2}} + \frac{4b}{a} \right\}.
\end{align*}
Since $c > 0$, any deviation from uniformity in $h$ either increases or maintains the numerators while decreasing the denominators, minimizing $w_{2}$ when $h$ is uniform. The same logic applies to $w_{3}$, as the logarithmic function is increasing, so $w_{3}$ is minimized when $h$ is uniform.

Consequently, we conclude that the smallest constants in Theorem~\ref{thm_main_result_2} are achieved when $h$ is chosen as the uniform distribution on $\mathcal{X}$. This suggests that a uniform $h$ is optimal from both practical and theoretical perspectives. 
\subsection{Discussions regarding the sharpness of the obtained risk bound}%
Similar to the role of Gaussian mixtures as the archetypal class of mixture models for Euclidean spaces, beta mixtures represent the archetypal class of mixture models on the compact interval $[0,1]$, as established in the studies by \cite{ghosal_convergence_2001, petrone_random_1999}. Just as Gaussian mixtures can approximate any continuous density on $\mathcal{X} = \mathbb{R}^d$ to an arbitrary level of accuracy in the $L_p$-norm \citep{nguyen_approximation_2020, nguyen_approximations_2021, nguyen_approximation_2022}, mixtures of beta distributions can similarly approximate any continuous density on $\mathcal{X} = [0,1]$ with respect to the supremum norm \citep{ghosal_convergence_2001, petrone_random_1999, petrone_consistency_2002}. We will leverage this property in the following discussion.

Assuming the target $f$ is within the closure of our mixture class $\mathcal{C}$ (i.e., $\KLH{f}{\mc{C}} = 0$), setting $k_{n} = \mc{O}(\sqrt{n})$ achieves a convergence rate in expected $\mathrm{KL}_{h}$ of $\mc{O}(1/\sqrt{n})$ for the mixture maximum $h$-lifted likelihood estimator ($h$-MLLE) $f_{k_{n},n}$. An interesting question is whether this rate is tight and not overly conservative, given the observed rates in Table \ref{tab_Parameter_estimates}. We aim to investigate this question by discussing a lower bound for the estimation problem.

To approach this, we use Proposition~\ref{prop_klh_l2} to observe that $\mathrm{KL}_{h}$ satisfies a Pinsker-like inequality:
\begin{equation*}
    \sqrt{\KLH{f}{g}} \ge \mathrm{TV}(f, g),
\end{equation*}
where $\mathrm{TV}(f, g) = \frac{1}{2}\int_{\mathcal{X}} |f - g| \mathrm{d}\mu$. Using this inequality along with Corollary~\ref{cor_finiteness} and the convexity of $f \mapsto f^2$, we find that the $h$-MLLE satisfies the following total variation bound:
\begin{align*}
    {\E}\left\{ \mathrm{TV}(f, f_{k_{n},n}) \right\} & \le \sqrt{\frac{w_{1,f}}{k_{n}} + \frac{w_{2,f}}{\sqrt{n}}} \le \frac{w_{1,f}}{k_{n}^{1/2}} + \frac{w_{2,f}}{n^{1/4}} \le \frac{w_{f}}{n^{1/4}},
\end{align*}
for some positive constants $w_{1,f},\, w_{2,f},\, w_f$ depending on $f$, by taking $k_{n} = \sqrt{n}$. Now, consider the specific case when $\mathcal{X} = [0,1]$, and the component class $\mathcal{P}$ consists of beta distributions. In this case, we have (cf. \citealp[Eq. 5]{petrone_consistency_2002}), for any continuous density function $f: [0,1] \to \mathbb{R}_{\ge0}$:
\begin{equation*}
    \inf_{g \in \mathcal{C}} \sup_{x \in [0,1]} |f(x) - g(x)| = 0, \text{ which implies that }\inf_{g \in \mathcal{C}} \KLH{f}{g} = 0,
\end{equation*}
since
\begin{equation*}
    \sup_{x \in [0,1]} |f(x) - g(x)| \ge L_2(f, g) \ge \sqrt{\gamma}\, \KLH{f}{g},
\end{equation*}
for any $0 < \gamma \le h$, with the second inequality due to Proposition~\ref{prop_klh_l2}. Thus, for a compact parameter space $\Theta$ defining $\mathcal{P}$, we assume $\KLH{f}{\mc{C}} = 0$. Consequently, the rate of $\mc{O}(n^{-1/4})$ for expected total variation distance is achievable in the beta mixture model setting. This convergence is uniform in the sense that
\begin{equation*}
{\E}\left\{ \mathrm{TV}(f, f_{k_{n},n}) \right\} \le \frac{w}{n^{1/4}},
\end{equation*}
where $w$ depends only on the maximum $c \ge f$, the diameter of $\Theta$, and the condition $\KLH{f}{\mathcal{C}} = 0$, with component distributions in $\mathcal{P}$ restricted to parameter values in $\Theta$.

In the context of minimum total variation density estimation on $[0,1]$, Exercise 15.14 of \cite{devroye_combinatorial_2001} states that for every estimator $\hat{f}$ and every Lipschitz continuous density $f$ (with a sufficiently large Lipschitz constant),
\begin{equation*}
    \sup_{f \in \mathrm{Lip}} {\E}_{f}\left\{ \mathrm{TV}(\hat{f}, f) \right\} \ge \frac{W}{n^{1/3}},
\end{equation*}
for some universal constant $W$ depending only on the Lipschitz constant. This lower bound is faster than our achieved rate of $\mc{O}(n^{-1/4})$, but it applies only to the smaller class of Lipschitz targets, a subset of the continuous targets satisfying $\KLH{f}{\mathcal{C}} = 0$.

The target $f_{2}$ from our simulations in Section~\ref{sec_numerical_experiments} belongs to the class of Lipschitz targets, and thus the improved lower bound rate of $\mc{O}(n^{-1/3})$ from \cite{devroye_combinatorial_2001} applies. We can compare this with $\sqrt{n^{-b_{2}}}$ for Experiment 2 in Table \ref{tab_Parameter_estimates}, yielding an empirical rate in $n$ of $\mc{O}(n^{-1.03})$, with an exponent between $-1.07$ and $-0.98$ (95\% confidence), over the range $n \in \{ 2^{10}, \dots, 2^{15} \}$. Clearly, this observed rate is faster than the lower bound rate of $\mc{O}(n^{-1/3})$, indicating that the faster rates observed in Table \ref{tab_Parameter_estimates} are due to small values of $n$ and $k$. As $n$ increases, the rate must eventually decelerate to at least $\mc{O}(n^{-1/3})$ when the target $f$ is Lipschitz on $\mathcal{X}$, which is only marginally faster than our guaranteed rate of $\mc{O}(n^{-1/4})$. Demonstrating that $\mc{O}(n^{-1/4})$ is minimax optimal for certain target classes $f$ is a complex task, left for future exploration.

Lastly, we note that our discussions in this section implies that the $h$-MLLE provides an effective and genetic method for obtaining estimators with total variation guarantees, which complements the comprehensive studies on the topic presented in \cite{devroye_nonparametric_1985} and \cite{devroye_combinatorial_2001}.
\subsection{The KL divergence and the MLE}%
For any probability densities $f$ and $g$ with respect to a dominant measure $\mu$ on $\mathcal{X}$, the $h$-lifted KL divergence is defined as
\begin{equation*}
    \KLH{f}{g} = \int_{\mathcal{X}} \{f+h\} \log \left( \frac{f + h}{g + h} \right)\mathrm{d}\mu,
\end{equation*}
which we establish as a Bregman divergence on the space of probability densities dominated by $\mu$ on $\mathcal{X}$ in Appendix~\ref{App1proof_prop_hlifted}.

We previously demonstrated a relationship between $\mathrm{KL}_{h}$ and the $L_2$ distance (Proposition~\ref{prop_klh_l2}), showing that if $h(x) \ge \gamma > 0$ for all $x \in \mathcal{X}$, then
\begin{equation*}
    \KLH{f}{g} \le \frac{1}{\gamma} L_2^2(f, g), \text{ where } 
L_2^2(f, g) = \| f - g \|_{2}^{2} = \int_{\mathcal{X}} (f - g)^2 \mathrm{d}\mu
\end{equation*}
is the square of the $L_2$ distance between the densities. Given that we can always select $h(x) \ge \gamma$, this bound is always enforceable. This relationship is stronger than that between the standard KL divergence and the $L_2$ distance, which similarly satisfies
\begin{equation*}
    \KL{f}{g} \le \frac{1}{\gamma} L_2^2(f, g),
\end{equation*}
but with the more restrictive requirement that $f(x) \ge \gamma > 0$ for every $x \in \mathcal{X}$, limiting its applicability to densities that do not vanish. In the proof of Proposition~\ref{prop_klh_l2} in Appendix~\ref{sec_prof_klh_bounded}, we show that one can write
\begin{equation*}
    \KLH{f}{g} = 2\mathrm{KL} \left( \frac{f+h}{2}, \frac{g+h}{2} \right),
\end{equation*}
which allows the application of the theory from \cite{rakhlin_risk_2005} by considering the mixture density $(f+h)/2$ as the target and using $g+h$ as the approximand, where $g \in \mathrm{co}_k \left( \mathcal{P} \right)$. Under this framework, the maximum likelihood estimator can be formulated as
\begin{equation*}
    f_{k,n} \in \argmin_{g \in \mathrm{co}_k \left( \mathcal{P} \right)} - \frac{1}{n} \sum_{i=1}^{n} \log \left( \frac{g(Z_i) + h(Z_i)}{2} \right),
\end{equation*}
where $\left(Z_{i}\right)_{i\in\left[n\right]}$ are independent and identically distributed samples from a distribution with density $(f+h)/2$. This sampling can be performed by choosing $X_i$ with probability $1/2$ or $Y_i$ with probability $1/2$ for each $i \in [n]$, where $X_i$ is an observation from the generative model $f$ and $Y_i$ is an independent sample from the auxiliary density $h$. Although the modified estimator, based on the bound from \cite{rakhlin_risk_2005}, attains equivalent convergence rates, it inefficiently utilizes observed data, as 50\% of the data is replaced by simulated samples $Y_i$. In contrast, our $h$-MLLE estimator maximally utilizes all available data while achieving the same bound.

%
\subsection{Comparison of the MM algorithm and the EM algorithm}

Since the risk functional is not a log-likelihood, a straightforward EM approach cannot be used to compute the $h$-MLLE. However, by interpreting $\mathrm{KL}_{h}$ as a loss between the target mixture $(f+h)/2$ and the estimator $(f_{k,n} + h)/2$, an EM algorithm can be constructed using the standard admixture framework (see \citealp{lange_optimization_2013}, Section~9.5). Remarkably, the EM algorithm for estimating $(f_{k,n} + h)/2$, has the same form as our MM algorithm, which leverages Jensen’s inequality (cf. \citealp{lange_optimization_2013}, Section~8.3). In fact, the majorizer in any EM algorithm results directly from Jensen's inequality (see \citealp{lange_optimization_2013}, Section~9.2), making our MM algorithm in Section~\ref{subsec_MM_algorithm} no more complex than an EM approach for mixture models.

Beyond the EM and MM methods, no other standard algorithms typically address the generic estimation of a $k$-component mixture model in $\mathrm{co}_{k}(\mathcal{P})$ for a given parametric class $\mathcal{P}$. Since our MM algorithm follows a structure nearly identical to the EM algorithm for the MLE of this problem, it has comparable iterative complexity. Notably, per iteration, the MM approach requires additional evaluations for both $\mathbf{X}_{n}$ and $\mathbf{Y}_{n}$, and for $g(X_{i})$ and $h(X_{i})$, so it requires a constant multiple of evaluations compared to EM, depending on whether $h$ is a uniform distribution or otherwise (typically by a factor of 2 or 4). 
\subsection{Non-convex optimization}%
We note that the $h$-MLLE problem (and the corresponding MLE) are non-convex optimization problems. This implies that, aside from global optimization methods, no iterative algorithm--whether gradient-based methods like gradient descent, coordinate descent, mirror descent, or momentum-based variants--can be guaranteed to find a global optimum. Likewise, second-order techniques such as Newton and quasi-Newton methods also cannot be expected to locate the global solution. In non-convex scenarios, the primary assurance that can be offered is convergence to a critical point of the objective function. In our case, this assurance is achieved by applying Corollary~1 from \cite{razaviyayn_unified_2013}, as discussed in Section~\ref{subsec_MM_algorithm}. Notably, this convergence guarantee is consistent with that provided by other iterative approaches, such as EM, gradient descent, or Newton's method.

Additionally, it may be valuable to examine whether specific convergence rates can be ensured when the algorithm’s iterates approach a neighborhood around a critical value. In the context of the MM algorithm, we can affirmatively answer this question: since the $h$-MLLE objective is twice continuously differentiable with respect to the parameter $\psi_k$, it satisfies the local convergence conditions outlined in \citet[Proposition 7.2.2]{lange2016mm}. This result implies that if $\psi_k^{(s)}$ lies within a sufficiently close neighborhood of a local minimizer $\psi_k^{*}$, the MM algorithm converges linearly to $\psi_k^{*}$. This behavior aligns with the convergence guarantees offered by other iterative methods, such as gradient descent or line-search based quasi-Newton methods. Quadratic convergence rates near $\psi_k^{*}$ can be achieved with a Newton method, though this forfeits the monotonicity (or stability) of the MM algorithm, as it is well-known that even in convex settings, Newton's method can diverge if the initialization is not properly handled.

An additional advantage of the MM algorithm over Newton’s method is its capacity to decompose the original objective into a sum of functions where each component of $\psi_k = (\pi_1, \dots, \pi_k, \theta_1, \dots, \theta_k)$ is separable within the summation. In other words, we can independently optimize functions that depend only on subsets of parameters, either $(\pi_1, \dots, \pi_k)$ or each $\theta_j$ for $j = 1, \dots, k$, thereby simplifying the iterative computation. This characteristic is noted after Equation \ref{eq_MM_algo_generic} in the main text. Such decomposition can lead to computational efficiency by avoiding the need to compute the Hessian matrix for Newton’s method or approximations required by quasi-Newton methods. Specifically, in cases involving mixtures of exponential family distributions such as the beta distributions discussed in Section~\ref{subsec_experimental_setup}, each parameter-separated problem becomes a strictly concave maximization problem, which can be efficiently solved (see Proposition~3.10 in \citealp{sundberg_statistical_2019}).

\section{Auxiliary proofs}
In this section, we include other proofs of claims made in the main text that are not included in Appendix~\ref{sec: proofs}.
\subsection{The \texorpdfstring{$h$}{h}-lifted KL divergence as a Bregman divergence}\label{App1proof_prop_hlifted}
Let $\tilde{u}=u+h$, so that $\phi(u)=\tilde{u}\log(\tilde{u})-\tilde{u}+ 1$. Then $\phi^{\prime}(u)=\log(\tilde{u})$, and 
\begin{align*}
    D_{\phi}(f\, ||\, g)
        &=\int_X \{\tilde{f}\log(\tilde{f})-\tilde{f}-1\}-\{\tilde{g}\log(\tilde{g})-\tilde{g}-1\}-\log(\tilde{g})(f-g)\mathrm{d}\mu \\
        &=\int_\mathcal{X}\tilde{f}\log(\tilde{f})-\tilde{g}\log(\tilde{g})-f\log(\tilde{g}) + g\log(\tilde{g}) \mathrm{d}\mu \\
        &=\int_\mathcal{X}\{f+h\}\log(\tilde{f})-\{g+h\}\log(\tilde{g})-f\log(\tilde{g})+g\log(\tilde{g})\mathrm{d}\mu \\
        &=\int_\mathcal{X} \{f+h\}\log{\frac{f+h}{g+h}}\mathrm{d}\mu = \KLH{f}{g}.
\end{align*}
\subsection{Proof of Proposition~\ref{prof_klh_bounded}}\label{sec_prof_klh_bounded}
 Let $\tilde{f}=f+h$ and $\tilde{g}=g+h$. Since $h$ is positive, there exists some $\tilde{g}_{*}$ such that $\tilde{g}_{*} = \inf_{x \in \mc{X}} \left\{g(x) + h(x)\right\} > 0$. Similarly, since $\mc{X}$ is compact, there exists some positive $\tilde{f}^{*}$ such that $0<\tilde{f}^{* } = \sup_{x \in \mc{X}} \left\{f(x) + h(x)\right\} < \infty$. Define $M = \sup_{x \in \mc{X}} \log \{\tilde{f}(x)/\tilde{g}(x)\}$. Then $M < \infty$, and
\begin{equation*}
    \KLH{f}{g}=\int_{\mc{X}} \tilde{f}\log\frac{\tilde{f}}{\tilde{g}}\mathrm{d}\mu\leq\sup_{x \in \mc{X}} \log \frac{\tilde{f}}{\tilde{g}}\int_{\mc{X}}\tilde{f}\mathrm{d}\mu = 2M < \infty.
\end{equation*}
\subsection{Proof of Proposition~\ref{prop_klh_l2}}\label{sec_prop_klh_l2}%
Defining $\tilde{f}$ and $\tilde{g}$ as above, we have
\begin{equation*}
    \KLH{f}{g} = \int_{\mc{X}} \tilde{f} \log\frac{\tilde{f}}{\tilde{g}}\mathrm{d}\mu\leq \int_{\mc{X}}\tilde{f}\left(\frac{\tilde{f}}{\tilde{g}} -1\right)\mathrm{d}\mu = \int_{\mc{X}}\frac{(f - g)^2}{\tilde{g}}\mathrm{d}\mu \leq \gamma^{-1} L_2^2(f, g),
\end{equation*}

The first inequality comes from the fundamental inequalities on logarithm $\log(x) \le x-1$ for all $x\ge 0$. Indeed, let $f(x) = \log(x) - x +1$. We obtain $f'(x) = \frac{1}{x} -1 = \frac{1-x}{x}$. Then $f'(x)<0$ if $x>1$ and $f'(x) \ge 0$ if $x \le 1$. Therefore, $f$ is strictly decreasing on $(1, \infty)$ and $f$ is strictly increasing on $(0,1]$. This leads to the desired inequality $f(x) \le f(1) = 0$ for all $x \ge 0$.

The next equality comes from the following identities:
\begin{equation*}
    \int_{\mc{X}}\tilde{f}(\frac{\tilde{f}}{\tilde{g}}-1) \mathrm{d}\mu= \int_{\mc{X}}\frac{\tilde{f}^2-\tilde{f}\tilde{g}}{\tilde{g}} \mathrm{d}\mu = \int_{\mc{X}}\frac{\tilde{f}^2-\tilde{f}\tilde{g} -\tilde{f}\tilde{g}+\tilde{g}\tilde{g}}{\tilde{g}} \mathrm{d}\mu = \int_{\mc{X}}\frac{(\tilde{f} -\tilde{g})^2}{\tilde{g}} \mathrm{d}\mu = \int_{\mc{X}}\frac{(f -g)^2}{\tilde{g}} \mathrm{d}\mu.
\end{equation*}
The last equality is followed from
\begin{equation*}
    \int_{\mc{X}}\frac{-\tilde{f}\tilde{g}+\tilde{g}\tilde{g}}{\tilde{g}} \mathrm{d}\mu = -\int_{\mc{X}} \tilde{f} \mathrm{d}\mu + \int_{\mc{X}} \tilde{g} \mathrm{d}\mu = -\int_{\mc{X}} (f+h) \mathrm{d}\mu + \int_{\mc{X}} (g+h) \mathrm{d}\mu = -\int_{\mc{X}} h \mathrm{d}\mu + \int_{\mc{X}} h \mathrm{d}\mu = 0.
\end{equation*}

In fact, the proof of Proposition~\ref{prop_klh_l2} follows the standard technique in the derivation of the estimation error, see for example~\cite{zeevi97}.

Additionally, we can show that the $h$-lifted KL divergence satisfies a Pinsker-like inequality, in the sense that
\begin{equation*}
    \sqrt{\KLH{f}{g}} \ge \mathrm{TV}(f, g),
\end{equation*}
where $\mathrm{TV}$ represents the total variation distance between the densities $f$ and $g$.
Indeed, this is easy to observe since
\begin{align*}
    \KLH{f}{g} & =\int_{\mathcal{X}}\left\{ f+h\right\} \log\frac{f+h}{g+h}\mathrm{d}\mu=2\int\frac{f+h}{2}\log\frac{\left\{ \frac{f+h}{2}\right\} }{\left\{ \frac{g+h}{2}\right\} }\mathrm{d}\mu =2\mathrm{KL}\left(\frac{f+h}{2},\frac{g+h}{2}\right)\\
    &\ge 4\left\{\frac{1}{2}\int_{\mathcal{X}}\left|\frac{f+h}{2}-\frac{g+h}{2}\right|\mathrm{d}\mu\right\}^{2}\!\!\! =\left\{ \int_{\mathcal{X}}\left|\frac{f+h}{2}-\frac{g+h}{2}\right|\mathrm{d}\mu\right\}^{2}\!\!\! =\left\{\frac{1}{2}\int_{\mathcal{X}}\left|f-g\right|\mathrm{d}\mu\right\}^{2}\!\!\! =\mathrm{TV}^{2}(f,g),
\end{align*}
where the inequality is due to Pinsker's inequality:
\begin{equation*}
    \sqrt{\frac{1}{2}\KL{f}{g}}\ge\mathrm{TV}\left(f,g\right).
\end{equation*}
\subsection{Proof of Proposition~\ref{prop: Zhang 1}}\label{sec_prop: Zhang 1}
For choice~\eqref{eq: Zhang KLh}, by the dominated convergence theorem, we observe that
\begin{align*}
    \frac{\mathrm{d}^{2}}{\mathrm{d}\pi^{2}}\kappa\left(\left(1-\pi\right)p+\pi q\right)
    &=  {\E}_{f}\left\{ \frac{\mathrm{d}^{2}}{\mathrm{d}\pi^{2}}\log\frac{f+h}{\left(1-\pi\right)p+\pi q+h}\right\} + {\E}_{h}\left\{ \frac{\mathrm{d}^{2}}{\mathrm{d}\pi^{2}}\log\frac{f+h}{\left(1-\pi\right)p+\pi q+h}\right\} \\
    & = {\E}_{f}\left\{ \frac{\left(p-q\right)^{2}}{\left[\left(1-\pi\right)p+\pi q+h\right]^{2}}\right\} +{\E}_{h}\left\{ \frac{\left(p-q\right)^{2}}{\left[\left(1-\pi\right)p+\pi q+h\right]^{2}}\right\}.
\end{align*}
Suppose that each $\varphi\left(\cdot;\theta\right)\in\mathcal{P}$ is bounded from above by $c<\infty$. Then, since $p,q\in\mathcal{C}$ are non-negative functions, we have the fact that $\left(p-q\right)^{2} \le c^{2}$. If we further have $a\le h$ for some $a>0$, then $\left[\left(1-\pi\right)p+\pi q+h\right]^{2}\ge a^{2}$, which implies that 
\begin{equation*}
    \frac{\mathrm{d}^{2}}{\mathrm{d}\pi^{2}}\kappa\left(\left(1-\pi\right)p+\pi q\right)\le2\times\frac{c^{2}}{a^{2}}
\end{equation*}
for every $p,q\in\mathcal{C}$ and $\pi\in\left(0,1\right)$, and thus
\begin{equation*}
    \sup_{p,q\in\mathcal{C},\pi\in\left(0,1\right)}\frac{\mathrm{d}^{2}}{\mathrm{d}\pi^{2}}\kappa\left(\left(1-\pi\right)p+\pi q\right)\le\frac{2c^{2}}{a^{2}}<\infty.
\end{equation*}
Similarly, for case~\eqref{eq: Zhang sample KLh}, we have
\begin{align*}
    \frac{\mathrm{d}^{2}}{\mathrm{d}\pi^{2}} \kappa_n ((1-\pi)p + \pi q)
    &= \frac{1}{n} \sum_{i=1}^{n} 
    \frac{\mathrm{d}^{2}}{\mathrm{d}\pi^{2}} \log 
    \frac{f(x_{i}) + h(x_{i})}
         {\left(1-\pi\right)p\left(x_{i}\right) + \pi q\left(x_{i}\right) + h\left(x_{i}\right)} \\
    & \qquad\qquad +\ \frac{1}{n} \sum_{i=1}^{n} 
    \frac{\mathrm{d}^{2}}{\mathrm{d}\pi^{2}} \log 
    \frac{f\left(y_{i}\right) + h\left(y_{i}\right)}
         {\left(1-\pi\right)p\left(y_{i}\right) + \pi q\left(y_{i}\right) + h\left(y_{i}\right)} \\ 
    &= \frac{1}{n} \sum_{i=1}^{n} 
    \frac{\left(p\left(x_{i}\right) - q\left(x_{i}\right)\right)^{2}}
         {\left[\left(1-\pi\right)p\left(x_{i}\right) + \pi q\left(x_{i}\right) + h\left(x_{i}\right)\right]^{2}} + \frac{1}{n} \sum_{i=1}^{n} 
    \frac{\left(p\left(y_{i}\right) - q\left(y_{i}\right)\right)^{2}}
         {\left[\left(1-\pi\right)p\left(y_{i}\right) + \pi q\left(y_{i}\right) + h\left(y_{i}\right)\right]^{2}}.
\end{align*}
By the same argument, as for $\kappa$, we have the fact that $\left(p\left(x\right)-q\left(x\right)\right)^{2}\le c^{2}$, for every $p,q\in\mathcal{C}$ and every $x\in\mathcal{X}$, and furthermore $\left[\left(1-\pi\right)p\left(x\right)+\pi q\left(x\right)+h\left(x\right)\right]^{2}\ge a^{2}$, for any $\pi\in\left(0,1\right)$. Thus, 
\begin{equation*}
    \sup_{p,q\in\mathcal{C},\pi\in\left(0,1\right)}\frac{\mathrm{d}^{2}}{\mathrm{d}\pi^{2}}\kappa\left(\left(1-\pi\right)p+\pi q\right)\le\frac{2c^{2}}{a^{2}}<\infty, \text{ as required.}
\end{equation*}
\section{Technical results}
Here we collect some technical results that are required in our proofs but appear elsewhere in the literature. In some places, notation may be modified from the original text to keep with the established conventions herein.
\begin{lemma}
    [\citealp{kosorok2007introduction}. Lem~9.18] \label{lem: kosorok_9.18}
    Let $N(\mathscr{F}, \varepsilon,\normdot)$ denote the $\varepsilon$-covering number of $\mathscr{F}$, $N_{[]}(\mathscr{F}, \varepsilon,\normdot)$ the $\varepsilon$-bracketing number of $\mathscr{F}$, and $\normdot$ be any norm on $\mathscr{F}$. Then, for all $\varepsilon > 0$
    \begin{equation*}
        N(\mathscr{F}, \varepsilon,\normdot) \leq N_{[]}(\mathscr{F}, \varepsilon,\normdot)
    \end{equation*}
\end{lemma}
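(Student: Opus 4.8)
The plan is to build an explicit $\varepsilon$-cover out of a minimal $\varepsilon$-bracketing set, so that the inequality between cardinalities follows at once. First I would fix $\varepsilon > 0$ and set $m = N_{[]}(\mathscr{F}, \varepsilon, \|\cdot\|)$, so that there are bracket pairs $(l_j, u_j)$, for $j = 1, \dots, m$, with $\|u_j - l_j\| \le \varepsilon$ for each $j$, and with the property that every $g \in \mathscr{F}$ satisfies $l_j \le g \le u_j$ pointwise for at least one index $j$. Discarding any bracket that contains no element of $\mathscr{F}$ (this only decreases the count), I would select, for each remaining $j$, a representative function $g_j \in \mathscr{F}$ with $l_j \le g_j \le u_j$.

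Next I would check that $\{g_j\}$ forms an $\varepsilon$-cover of $\mathscr{F}$. Given any $g \in \mathscr{F}$, pick an index $j$ with $l_j \le g \le u_j$; since also $l_j \le g_j \le u_j$, both $g$ and $g_j$ are squeezed between $l_j$ and $u_j$, so $|g - g_j| \le u_j - l_j$ pointwise, and hence $\|g - g_j\| \le \|u_j - l_j\| \le \varepsilon$. This uses only the monotonicity of the norm with respect to the pointwise partial order, i.e., $0 \le \phi \le \psi$ implies $\|\phi\| \le \|\psi\|$, which is the standing convention for the $L_p$-type norms (including $\|\cdot\|_\infty$ and the empirical $L_2$ metrics $d_{n,x}$, $d_{n,y}$) appearing throughout. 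It follows that $\mathscr{F}$ admits an $\varepsilon$-cover of size at most $m$, whence $N(\mathscr{F}, \varepsilon, \|\cdot\|) \le N_{[]}(\mathscr{F}, \varepsilon, \|\cdot\|)$, as claimed.

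There is no genuine obstacle here; the only point deserving explicit mention is the monotonicity of the norm, without which the conclusion can fail. The remainder is bookkeeping about which brackets are nonempty and how to choose representatives, so I would keep the argument to a few lines.
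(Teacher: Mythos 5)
Your argument is correct and is precisely the standard proof of this fact: select one representative of $\mathscr{F}$ from each nonempty $\varepsilon$-bracket, and the pointwise squeeze $|g - g_j| \le u_j - l_j$ together with monotonicity of the norm yields an $\varepsilon$-cover of cardinality at most $N_{[]}(\mathscr{F},\varepsilon,\normdot)$. The paper does not prove this lemma itself but cites it from \cite{kosorok2007introduction}, whose proof is the same as yours; you are right that monotonicity of the norm under the pointwise order is the one hypothesis doing the work, and it holds for all the norms ($\normdot_\infty$, $d_{n,x}$, $d_{n,y}$) used in the paper.
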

\begin{lemma}
    [\citealp{kosorok2007introduction}. Lem~9.22] \label{lem: kosorok_9.22}
    For any norm $\normdot$ dominated by ${\normdot}_{\infty}$ and any class of functions $\mathscr{F}$,
    \begin{equation*}
        \log N_{[]}(\mathscr{F}, 2\varepsilon, \normdot) \leq \log N(\mathscr{F}, \varepsilon, {\normdot}_{\infty}), \text{ for all $\varepsilon > 0$.}
    \end{equation*}
\end{lemma}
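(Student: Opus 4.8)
The plan is to manufacture a $2\varepsilon$-bracketing set for $\mathscr{F}$ directly out of a minimal $\varepsilon$-cover in the supremum norm, so that the inequality becomes a counting comparison. First I would fix $\varepsilon>0$ and take a minimal $\varepsilon$-net $\{g_1,\dots,g_N\}$ of $\mathscr{F}$ with respect to $\lVert\cdot\rVert_\infty$, where $N = N(\mathscr{F},\varepsilon,\lVert\cdot\rVert_\infty)$, so that for every $f\in\mathscr{F}$ there is an index $j$ with $\lVert f - g_j\rVert_\infty\le\varepsilon$. We may take the $g_j$ to be measurable (indeed elements of $\mathscr{F}$), which is all that is needed for them to serve as bracket endpoints.

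Next I would associate to each centre $g_j$ the bracket $[\ell_j,u_j]$ with $\ell_j = g_j - \varepsilon$ and $u_j = g_j + \varepsilon$, where $\varepsilon$ denotes the constant function. For any $f\in\mathscr{F}$, picking $j$ with $\lVert f - g_j\rVert_\infty\le\varepsilon$ gives $\ell_j(x)\le f(x)\le u_j(x)$ for all $x$, i.e.\ $f\in[\ell_j,u_j]$; hence the $N$ brackets cover $\mathscr{F}$. Their size is controlled by the domination hypothesis: $u_j - \ell_j = 2\varepsilon$ is a constant function, so $\lVert u_j - \ell_j\rVert \le \lVert 2\varepsilon\rVert_\infty = 2\varepsilon$, and each $[\ell_j,u_j]$ is therefore a $2\varepsilon$-bracket for $\lVert\cdot\rVert$.

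Putting these together, $\{[\ell_j,u_j]\}_{j=1}^N$ is a $2\varepsilon$-bracketing of $\mathscr{F}$ in $\lVert\cdot\rVert$, so $N_{[]}(\mathscr{F},2\varepsilon,\lVert\cdot\rVert)\le N = N(\mathscr{F},\varepsilon,\lVert\cdot\rVert_\infty)$, and taking logarithms yields the claim. There is no genuine obstacle here — the result is essentially a one-line observation once the bracket construction is written down — and the only point that warrants any care is that the hypothesis ``$\lVert\cdot\rVert$ dominated by $\lVert\cdot\rVert_\infty$'' is precisely what converts the uniform width $2\varepsilon$ of each bracket into a bound on its $\lVert\cdot\rVert$-width; it is exactly this step that fails for norms not dominated by the supremum norm.
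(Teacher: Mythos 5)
Your construction is correct and is exactly the standard argument behind the cited result (the paper itself gives no proof, deferring to \citealp{kosorok2007introduction}, Lem.~9.22): brackets $[g_j-\varepsilon,\,g_j+\varepsilon]$ around a minimal $\lVert\cdot\rVert_\infty$-net cover $\mathscr{F}$, and domination of $\lVert\cdot\rVert$ by $\lVert\cdot\rVert_\infty$ turns the uniform width $2\varepsilon$ into a $\lVert\cdot\rVert$-width bound. Nothing is missing.
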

\begin{lemma}
    [\citealp{kosorok2007introduction}. Thm~9.23] \label{lem: kosorok_9.23}
    For some metric $d$ on $T$, let $\mathscr{F} = \{f_t: t \in T\}$ be a function class:
    \begin{equation*}
        \lvert f_s(x) - f_t(x) \rvert \leq d(s,t)F(x),
    \end{equation*}
    some fixed function $F$ on $\mc{X}$, and for all $x\in\mc{X}$ and $s,t \in T$. Then, for any norm $\normdot$,
    \begin{equation*}
        N_{[]}(\mathscr{F}, 2\varepsilon\lVert F\rVert, \normdot) \leq N(T, \varepsilon, d).
    \end{equation*}
\end{lemma}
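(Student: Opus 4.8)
The plan is to build a bracketing set for $\mathscr{F}$ directly out of a $d$-covering of the index set $T$. First I would fix $\varepsilon > 0$ and, assuming without loss of generality that $N \coloneqq N(T,\varepsilon,d) < \infty$ (otherwise there is nothing to prove), choose points $t_1,\dots,t_N \in T$ forming an $\varepsilon$-net of $T$ with respect to $d$; that is, for every $t \in T$ there is some index $j$ with $d(t,t_j) \le \varepsilon$.

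Next, for each $j$ I would define the two functions $\ell_j = f_{t_j} - \varepsilon F$ and $u_j = f_{t_j} + \varepsilon F$ on $\mc{X}$, and consider the bracket $[\ell_j,u_j] = \{g : \ell_j(x) \le g(x) \le u_j(x) \text{ for all } x \in \mc{X}\}$. The key step is to verify that these $N$ brackets cover $\mathscr{F}$: for arbitrary $t \in T$, pick $t_j$ with $d(t,t_j) \le \varepsilon$; then the hypothesised pointwise Lipschitz bound gives $\lvert f_t(x) - f_{t_j}(x)\rvert \le d(t,t_j)F(x) \le \varepsilon F(x)$ for every $x$, hence $\ell_j(x) \le f_t(x) \le u_j(x)$, i.e. $f_t \in [\ell_j,u_j]$. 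Finally, each bracket has width $\lVert u_j - \ell_j\rVert = \lVert 2\varepsilon F\rVert = 2\varepsilon\lVert F\rVert$, so $\{[\ell_j,u_j]\}_{j=1}^{N}$ is a $2\varepsilon\lVert F\rVert$-bracketing of $\mathscr{F}$ of cardinality $N$, which yields $N_{[]}(\mathscr{F}, 2\varepsilon\lVert F\rVert, \lVert\cdot\rVert) \le N(T,\varepsilon,d)$, as claimed.

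There is no substantive obstacle here: the only point needing a little care is that the Lipschitz-type bound is assumed to hold uniformly in $x$, which is precisely what makes the bracket inequalities hold pointwise, and the norm $\lVert\cdot\rVert$ is applied only to the fixed envelope $2\varepsilon F$, so finiteness of the resulting bracketing number needs nothing beyond $\lVert F\rVert < \infty$. This is the standard covering-to-bracketing transfer, and the argument coincides with the proof of Theorem~9.23 in \citet{kosorok2007introduction}.
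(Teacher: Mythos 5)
Your argument is correct and is precisely the standard covering-to-bracketing transfer used to prove Theorem~9.23 in \citet{kosorok2007introduction}; the paper itself states this lemma without proof, simply citing that reference. The only (cosmetic) point worth noting is that the Lipschitz hypothesis forces $F\ge 0$ wherever it matters, so the brackets $[f_{t_j}-\varepsilon F,\, f_{t_j}+\varepsilon F]$ are genuinely ordered and your width computation $\lVert u_j-\ell_j\rVert = 2\varepsilon\lVert F\rVert$ goes through as stated.
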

\begin{lemma}
    [\cite{shalev2014understanding}, Lem~26.7]\label{lem: rademacher_complexity_equality}
    Let $A$ be a subset of $\mathbb{R}^m$ and let 
    \begin{equation*}
        A^{\prime} = \left\{\sum_{j=1}^{n} \alpha_j \mathbf{a}_j \mid n \in \N, \mathbf{a}_j \in A, \alpha_j \geq 0, \lVert \mathbf{\alpha}\rVert_1 = 1\right\}.
    \end{equation*}
    Then, $\mathcal{R}_n(A^{\prime}) = \mathcal{R}_n(A)$, i.e., both $A$ and $A^{\prime}$ have the same Rademacher complexity.
\end{lemma}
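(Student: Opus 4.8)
The plan is to prove the claimed equality through two one-sided inequalities, of which only one carries any content. Taking $N=1$ and $\alpha_1=1$ in the description of $A'$ exhibits $A$ as a subset of $A'$, so for every realisation of the Rademacher variables the supremum over $A'$ that appears inside the Rademacher complexity dominates the supremum over $A$; taking expectations gives $\mathcal{R}_n(A)\le\mathcal{R}_n(A')$ immediately. Hence the only work is in the reverse direction $\mathcal{R}_n(A')\le\mathcal{R}_n(A)$.

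For that direction I would fix a realisation $\varepsilon$ of the Rademacher vector and work with the linear functional $\mathbf{a}\mapsto\langle\varepsilon,\mathbf{a}\rangle$ (that is, the inner product $\sum_i\varepsilon_i a_i$). Writing a generic element of $A'$ as $\mathbf{a}=\sum_{j=1}^{N}\alpha_j\mathbf{a}_j$ with $\mathbf{a}_j\in A$, $\alpha_j\ge0$ and $\sum_j\alpha_j=1$, linearity of the inner product together with the triangle inequality yields
\begin{equation*}
    \left\lvert\langle\varepsilon,\mathbf{a}\rangle\right\rvert=\left\lvert\sum_{j=1}^{N}\alpha_j\langle\varepsilon,\mathbf{a}_j\rangle\right\rvert\le\sum_{j=1}^{N}\alpha_j\left\lvert\langle\varepsilon,\mathbf{a}_j\rangle\right\rvert\le\max_{1\le j\le N}\left\lvert\langle\varepsilon,\mathbf{a}_j\rangle\right\rvert\le\sup_{\mathbf{a}'\in A}\left\lvert\langle\varepsilon,\mathbf{a}'\rangle\right\rvert,
\end{equation*}
where the second-to-last step uses that a convex combination of real numbers never exceeds their maximum. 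Taking the supremum over $\mathbf{a}\in A'$ on the left, then the expectation over $\varepsilon$, and noting that the normalising constant in the definition of $\mathcal{R}_n$ is the same on both sides, we obtain $\mathcal{R}_n(A')\le\mathcal{R}_n(A)$, which together with the first inequality proves the lemma. Equivalently, since $A'=\co(A)$ and $\mathbf{a}\mapsto\lvert\langle\varepsilon,\mathbf{a}\rangle\rvert$ is a convex function, one could invoke Lemma~\ref{lem: convex_hull_result} pointwise in $\varepsilon$ and then take the expectation.

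I do not anticipate any genuine obstacle; this is a standard fact (\citet[Lem.~26.7]{shalev2014understanding}). The only points that need mild care are that the quantity inside the supremum is an absolute value rather than a signed linear functional --- which is precisely why the triangle inequality, and not a one-sided estimate, is the right tool --- and that the normalisation in $\mathcal{R}_n$ is common to both sides and therefore irrelevant. This finite-dimensional statement is then the engine behind Lemma~\ref{lem: rachemacher_applied_to_main_result}: one identifies a class $\mathcal{S}$ of functions with the set of its evaluation vectors $\left(s(X_1),\dots,s(X_n)\right)\in\R^{n}$ at the (conditioned) sample, under which identification $\mathrm{co}(\mathcal{S})$ corresponds to $A'$, so that $\mathcal{R}_n(\mathcal{C})=\mathcal{R}_n(\mathcal{P})$ and the accompanying supremum identity follow at once.
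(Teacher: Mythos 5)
Your proof is correct. Note, though, that the paper does not actually prove this lemma: it is quoted verbatim from \citet[Lem.~26.7]{shalev2014understanding} as a technical result, so there is no in-paper argument to compare against. Your two-sided argument is the standard one: the inclusion $A\subseteq A'$ (take one term with weight $1$) gives $\mathcal{R}_n(A)\le\mathcal{R}_n(A')$, and the chain
\begin{equation*}
\Bigl\lvert\sum_{j}\alpha_j\langle\varepsilon,\mathbf{a}_j\rangle\Bigr\rvert\le\sum_{j}\alpha_j\lvert\langle\varepsilon,\mathbf{a}_j\rangle\rvert\le\max_{j}\lvert\langle\varepsilon,\mathbf{a}_j\rangle\rvert\le\sup_{\mathbf{a}'\in A}\lvert\langle\varepsilon,\mathbf{a}'\rangle\rvert
\end{equation*}
gives the reverse. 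Two small remarks. First, you are right to insist on the triangle inequality: the paper's definition of $\mathcal{R}_n$ carries an absolute value (unlike the unsigned version in some textbooks), so the quantity being maximised is the convex function $\mathbf{a}\mapsto\lvert\langle\varepsilon,\mathbf{a}\rangle\rvert$ rather than a linear one, and your observation that the whole lemma is then just Lemma~\ref{lem: convex_hull_result} applied pointwise in $\varepsilon$ followed by taking expectations is exactly how the paper itself motivates the subsequent Lemma~\ref{lem: rachemacher_applied_to_main_result}. Second, you sensibly renamed the number of mixture components to $N$; as stated in the paper the lemma reuses $n$ both for that count and for the sample size in $\mathcal{R}_n$, which is a notational clash worth flagging but not a mathematical issue.
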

\begin{lemma}
    [\citealp{van2016estimation}, Thm.~16.2]\label{lem: comparison inequality}
    Let $(X_i)_{i\in[n]}$ be non-random elements of $\mathcal{X}$ and let $\mathscr{F}$ be a class of real-valued functions on $\mathcal{X}$. If $\varphi_i: \R \to \R$, $i\in[n]$, are functions vanishing at zero that satisfy for all $u,v \in \R$,
   $\lvert \varphi_i(u) - \varphi_i(v)\rvert \leq \lvert u - v \rvert,$
    then we have 
    \begin{equation*}
        {\ex}\left\{\left\lVert \sum_{i=1}^{n} \varphi_i(f(X_i))\varepsilon_i \right\rVert_{\mathscr{F}}\right\} \leq 2 {\ex} \left\{\left\lVert\sum_{i=1}^{n}f(X_i)\varepsilon_i\right\rVert_{\mathscr{F}} \right\}.
    \end{equation*}
\end{lemma}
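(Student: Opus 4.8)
\noindent The statement is the Ledoux--Talagrand contraction inequality, and the plan is to prove it in two stages: first a one-sided version in which the norm $\lVert\cdot\rVert_{\mathscr{F}}$ is replaced by the bare supremum (no absolute value), and then the stated two-sided version, in which the factor $2$ enters. Concretely, the first goal is to show that for any $1$-Lipschitz maps $\psi_1,\dots,\psi_n:\R\to\R$,
\begin{equation*}
    {\ex}\,\sup_{f\in\mathscr{F}}\sum_{i=1}^{n}\varepsilon_i\psi_i(f(X_i)) \;\le\; {\ex}\,\sup_{f\in\mathscr{F}}\sum_{i=1}^{n}\varepsilon_i f(X_i).
\end{equation*}

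I would prove this one-sided bound by swapping the $\psi_i$ for the identity one coordinate at a time. For $m\in\{0,\dots,n\}$ set $S_m(f)=\sum_{j\le m}\varepsilon_j f(X_j)+\sum_{j>m}\varepsilon_j\psi_j(f(X_j))$, so that $S_0$ is the fully transformed process, $S_n$ the untransformed one, and it suffices to show ${\ex}\sup_f S_{m-1}(f)\le{\ex}\sup_f S_m(f)$ for each $m$. To do so, condition on $(\varepsilon_j)_{j\ne m}$ and write $\Lambda_m(f)=\sum_{j<m}\varepsilon_j f(X_j)+\sum_{j>m}\varepsilon_j\psi_j(f(X_j))$ for the (now fixed) remainder; averaging over $\varepsilon_m\in\{-1,+1\}$ and expanding the two resulting suprema over a product index set gives
\begin{equation*}
    {\ex}_{\varepsilon_m}\sup_f\bigl[\Lambda_m(f)+\varepsilon_m\psi_m(f(X_m))\bigr]=\tfrac12\sup_{f_1,f_2}\bigl[\Lambda_m(f_1)+\Lambda_m(f_2)+\psi_m(f_1(X_m))-\psi_m(f_2(X_m))\bigr].
\end{equation*}
The decisive step is to remove the contraction: for every ordered pair $(f_1,f_2)$ the $1$-Lipschitz property gives $\psi_m(f_1(X_m))-\psi_m(f_2(X_m))\le\lvert f_1(X_m)-f_2(X_m)\rvert$, and since $\Lambda_m(f_1)+\Lambda_m(f_2)$ is symmetric in $(f_1,f_2)$ one may relabel the pair so that this last quantity equals $f_1(X_m)-f_2(X_m)$. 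Each term in the supremum is therefore at most $\Lambda_m(f_1)+\Lambda_m(f_2)+f_1(X_m)-f_2(X_m)$, so
\begin{equation*}
    {\ex}_{\varepsilon_m}\sup_f\bigl[\Lambda_m(f)+\varepsilon_m\psi_m(f(X_m))\bigr]\le\tfrac12\sup_{f_1,f_2}\bigl[\Lambda_m(f_1)+\Lambda_m(f_2)+f_1(X_m)-f_2(X_m)\bigr]={\ex}_{\varepsilon_m}\sup_f\bigl[\Lambda_m(f)+\varepsilon_m f(X_m)\bigr].
\end{equation*}
Taking the outer expectation over $(\varepsilon_j)_{j\ne m}$ yields ${\ex}\sup_f S_{m-1}(f)\le{\ex}\sup_f S_m(f)$, and telescoping over $m=1,\dots,n$ proves the one-sided bound.

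For the two-sided statement I would write $\lVert T\rVert_{\mathscr{F}}=\sup_f\lvert T(f)\rvert=\max\bigl(\sup_f T(f),\,\sup_f(-T(f))\bigr)$ with $T(f)=\sum_i\varepsilon_i\varphi_i(f(X_i))$. Because each $-\varphi_i$ is again $1$-Lipschitz and vanishes at zero, the one-sided bound applies both to $(\varphi_i)$ and to $(-\varphi_i)$, bounding each of the two suprema by ${\ex}\sup_f\sum_i\varepsilon_i f(X_i)\le{\ex}\lVert\sum_i\varepsilon_i f(X_i)\rVert_{\mathscr{F}}$. The hypothesis $\varphi_i(0)=0$ is used precisely here: it anchors both suprema at a common value, so that the $\max$ of the two is dominated by their sum, which produces the factor $2$ and the claimed inequality.

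The step I expect to be the main obstacle is the contraction-removal inside the one-coordinate reduction --- arguing that the coupling between $\psi_m$ and the supremum can be broken by exploiting the symmetry of $\Lambda_m(f_1)+\Lambda_m(f_2)$ together with the Lipschitz bound simultaneously. The remaining ingredients --- the tower-property conditioning, the telescoping over $m$, and the bookkeeping via $\varphi_i(0)=0$ that upgrades the one-sided bound to the two-sided one with constant $2$ --- are routine.
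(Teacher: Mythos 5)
This lemma is imported by the paper from van de Geer (2016, Thm.~16.2) without proof, so your proposal can only be judged against the standard literature argument. Your one-sided step is correct and is exactly the classical Ledoux--Talagrand coordinate-by-coordinate contraction argument: conditioning on $(\varepsilon_j)_{j\neq m}$, expanding the average over $\varepsilon_m$ into a supremum over ordered pairs, and using the $1$-Lipschitz bound together with the symmetry of $\Lambda_m(f_1)+\Lambda_m(f_2)$ to replace $\psi_m$ by the identity is sound.

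The gap is in the passage from the one-sided to the two-sided statement. Writing $A=\sup_f T(f)$ and $B=\sup_f(-T(f))$, the inequality $\max(A,B)\le A+B$ holds only when $\min(A,B)\ge 0$ pointwise, and the hypothesis $\varphi_i(0)=0$ does not deliver this unless the zero function itself belongs to $\mathscr{F}$. Concretely, take $n=1$, $\mathscr{F}=\{f_0\}$ a singleton with $f_0(X_1)=c\neq 0$, and $\varphi_1=\mathrm{id}$: then $A=\varepsilon_1 c$, $B=-\varepsilon_1 c$, so $A+B=0$ while $\E\max(A,B)=\lvert c\rvert>0$; moreover $\E\sup_f\sum_i\varepsilon_i f(X_i)=0$, so the chain $\E\max(A,B)\le\E A+\E B\le 2\,\E\sup_f\sum_i\varepsilon_i f(X_i)$ would bound $\lvert c\rvert$ by $0$. (The lemma itself of course holds here, $\lvert c\rvert\le 2\lvert c\rvert$; it is the reduction that breaks, because $\E\sup_f\sum_i\varepsilon_i f(X_i)$ without absolute values can be far smaller than $\E\lVert\sum_i\varepsilon_i f(X_i)\rVert_{\mathscr{F}}$.) The repair is short: since $\varphi_i(0)=0$, adjoining the zero function to $\mathscr{F}$ changes neither $\E\lVert\sum_i\varepsilon_i\varphi_i(f(X_i))\rVert_{\mathscr{F}}$ nor $\E\lVert\sum_i\varepsilon_i f(X_i)\rVert_{\mathscr{F}}$ (both processes vanish at the zero function and both norms are nonnegative), and after this enlargement $A,B\ge0$ pointwise, so your argument closes. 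Alternatively, one can use $\max(A,B)\le A^{+}+B^{+}$ and prove the one-sided bound for the convex nondecreasing transform $x\mapsto x^{+}$ rather than only for the bare expectation, which is how Ledoux and Talagrand handle the absolute values. Either way, the observation you compress into ``anchors both suprema at a common value'' is precisely the missing step and needs to be made explicit.
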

\begin{lemma}
    [\citealp{mcdiarmid_concentration_1998}, Thm.~3.1 or \citealp{mcdiarmid_method_1989}]\label{lem: bounded difference inequality}
    Suppose $(X_i)_{i\in[n]}$ are independent random variables and let $Z= g(X_1, \ldots, X_n)$, for some function $g$. If $g$ satisfies the \textit{bounded difference condition}, that is there exists constant $c_j$ such that for all $j\in[n]$ and all $x_1,\ldots,x_j, x_j^{\prime}, \ldots, x_n$,
    \begin{equation*}
        \lvert g(x_1,\ldots,x_{j-1},x_j,x_{j+1},\ldots,x_n) - g(x_1,\ldots,x_{j-1}, x_j^{\prime},x_{j+1},\ldots,x_n) \rvert \leq c_j,
    \end{equation*}
    then 
    \begin{equation*}
        {\pr}(Z - {\ex} Z \geq t) \leq \exp\left\{\frac{-2 t^2}{\sum_{j=1}^{n}c_j^2}\right\}.
    \end{equation*}
\end{lemma}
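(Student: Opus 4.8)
This is the classical bounded difference (Azuma--Hoeffding) inequality, and the plan is to prove it via a Doob martingale together with Hoeffding's lemma and a Chernoff bound. First I would fix the filtration $\mathfrak{F}_k=\sigma(X_1,\ldots,X_k)$ and define the Doob martingale $Z_k=\ex[Z\mid\mathfrak{F}_k]$ for $k\in\{0,1,\ldots,n\}$, so that $Z_0=\ex Z$ and $Z_n=Z$, and write the telescoping sum $Z-\ex Z=\sum_{k=1}^{n}D_k$ with increments $D_k=Z_k-Z_{k-1}$.

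The crucial step is to control the conditional range of each increment. Using the independence of the $X_i$, one has $Z_k=\bar g_k(X_1,\ldots,X_k)$ where $\bar g_k(x_1,\ldots,x_k)=\ex\, g(x_1,\ldots,x_k,X_{k+1},\ldots,X_n)$, and hence the quantities $A_k=\inf_{x}\bar g_k(X_1,\ldots,X_{k-1},x)-Z_{k-1}$ and $B_k=\sup_{x}\bar g_k(X_1,\ldots,X_{k-1},x)-Z_{k-1}$ are $\mathfrak{F}_{k-1}$-measurable and satisfy $A_k\le D_k\le B_k$. The bounded difference hypothesis on $g$, after moving the expectation over $X_{k+1},\ldots,X_n$ inside the difference, yields $B_k-A_k\le c_k$; in particular $D_k$ is conditionally mean-zero and conditionally supported in an interval of length at most $c_k$.

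Given this, Hoeffding's lemma applied conditionally on $\mathfrak{F}_{k-1}$ gives $\ex[\mathrm{e}^{\lambda D_k}\mid\mathfrak{F}_{k-1}]\le\exp(\lambda^2c_k^2/8)$ for every $\lambda>0$. Iterating from $k=n$ down to $k=1$ via the tower property produces the moment generating function bound $\ex\,\mathrm{e}^{\lambda(Z-\ex Z)}\le\exp\!\big(\tfrac{\lambda^2}{8}\sum_{k=1}^{n}c_k^2\big)$. A Chernoff bound then gives $\pr(Z-\ex Z\ge t)\le\exp\!\big(-\lambda t+\tfrac{\lambda^2}{8}\sum_{k=1}^{n}c_k^2\big)$ for all $\lambda>0$, and optimizing over $\lambda$ (taking $\lambda=4t/\sum_{k}c_k^2$) delivers the claimed bound $\exp\!\big(-2t^2/\sum_{k=1}^{n}c_k^2\big)$.

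The main obstacle is the conditional boundedness estimate $B_k-A_k\le c_k$: one must argue carefully that $\bar g_k$ inherits the bounded difference property of $g$ in its $k$-th argument, which relies on pulling the expectation over the (independent) remaining coordinates $X_{k+1},\ldots,X_n$ inside the supremum and infimum and then invoking the hypothesis pointwise. Once this is established, the remaining ingredients --- Hoeffding's lemma and the Chernoff optimization --- are standard and routine.
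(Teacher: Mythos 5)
Your proposal is correct: this is exactly the canonical proof of McDiarmid's inequality via the Doob martingale $Z_k=\ex[Z\mid\sigma(X_1,\ldots,X_k)]$, the conditional range bound $B_k-A_k\le c_k$ obtained by pushing the expectation over the independent remaining coordinates inside the difference, conditional Hoeffding's lemma, the tower property, and Chernoff optimization at $\lambda=4t/\sum_k c_k^2$. The paper does not prove this lemma at all --- it is stated in the technical appendix with a citation to McDiarmid's original works --- and your argument matches the standard proof given in those references, so there is nothing to compare beyond noting that you have correctly identified the one genuinely delicate step (that $\bar g_k$ inherits the bounded difference property in its $k$-th coordinate, which is where independence of the $X_i$ is used).
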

\begin{lemma}
    [\citealp{van1996weak}, Lem.~2.3.1]\label{lem: symmetrisation inequality}
    Let $\mf{R}(f) = \ex f$ and $\mf{R}_n(f) = n^{-1} \sum_{i=1}^{n} f(X_i)$. If $\varPhi: \mathbb{R}_{> 0} \to \mathbb{R}_{> 0}$ is a convex function, then the following inequality holds for any class of measurable functions $\mathscr{F}$:
    \begin{equation*}
        {\ex}\varPhi\left(\left\lVert \mf{R}(f) - \mf{R}_n(f)\right\rVert_{\mathscr{F}}\right) \leq {\ex}\varPhi\left(2\left\lVert R_n(f)\right\rVert_{\mathscr{F}}\right),
    \end{equation*}
    where $R_n(f)$ is the Rademacher process indexed by $\mathscr{F}$. In particular, since the identity map is convex,
    \begin{equation*}
        {\ex}\left\{\lVert \mf{R}(f) - \mf{R}_n(f)\rVert_{\mathscr{F}}\right\} \leq 2{\ex}\left\{\lVert R_n(f)\rVert_{\mathscr{F}}\right\}.
    \end{equation*}
\end{lemma}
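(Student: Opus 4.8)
The plan is to use the classical symmetrization-via-a-ghost-sample argument. First I would introduce an independent copy $\mathbf{X}_n' = (X_i')_{i\in[n]}$ of $\mathbf{X}_n$, i.i.d.\ from the same law and independent of both $\mathbf{X}_n$ and any Rademacher sequence $(\varepsilon_i)_{i\in[n]}$. Writing $\mf{R}_n'(f) = n^{-1}\sum_{i=1}^{n} f(X_i')$, the key identity is $\mf{R}(f) = \ex\{\mf{R}_n'(f)\mid\mathbf{X}_n\}$, valid for each fixed $f\in\mathscr{F}$, so that $\mf{R}(f) - \mf{R}_n(f) = \ex\{\mf{R}_n'(f) - \mf{R}_n(f)\mid\mathbf{X}_n\}$ pointwise in $\mathscr{F}$.

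Second, I would pull $\varPhi$ and the supremum norm inside this conditional expectation. Since $g\mapsto\lVert g\rVert_{\mathscr{F}}$ is convex and $\varPhi$ is convex and nondecreasing (the monotonicity being the standard extra hypothesis in this lemma), the composition $g\mapsto\varPhi(\lVert g\rVert_{\mathscr{F}})$ is convex, so Jensen's inequality applied conditionally on $\mathbf{X}_n$ yields
\begin{equation*}
    \varPhi\!\left(\lVert\mf{R}(f) - \mf{R}_n(f)\rVert_{\mathscr{F}}\right) \le \ex\left[\varPhi\!\left(\left\lVert \frac{1}{n}\sum_{i=1}^{n}\left(f(X_i') - f(X_i)\right)\right\rVert_{\mathscr{F}}\right)\,\Big|\,\mathbf{X}_n\right],
\end{equation*}
and taking expectations removes the conditioning.

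Third comes the symmetrization step proper. For each $i$ the pair $(X_i, X_i')$ is exchangeable, so conditionally on the joint sample $(\mathbf{X}_n,\mathbf{X}_n')$ the vector $\left(f(X_i') - f(X_i)\right)_{i\in[n]}$ has a law invariant under coordinatewise sign flips; hence multiplying the $i$-th summand by the independent Rademacher sign $\varepsilon_i$ does not change the joint distribution, giving
\begin{equation*}
    \ex\,\varPhi\!\left(\left\lVert \frac{1}{n}\sum_{i=1}^{n}\left(f(X_i') - f(X_i)\right)\right\rVert_{\mathscr{F}}\right) = \ex\,\varPhi\!\left(\left\lVert \frac{1}{n}\sum_{i=1}^{n}\varepsilon_i\left(f(X_i') - f(X_i)\right)\right\rVert_{\mathscr{F}}\right).
\end{equation*}
Finally I would bound $\left\lVert\sum_i\varepsilon_i(f(X_i') - f(X_i))\right\rVert_{\mathscr{F}} \le \left\lVert\sum_i\varepsilon_i f(X_i')\right\rVert_{\mathscr{F}} + \left\lVert\sum_i\varepsilon_i f(X_i)\right\rVert_{\mathscr{F}}$ by the triangle inequality, apply midpoint convexity $\varPhi(a+b)\le\frac12\varPhi(2a) + \frac12\varPhi(2b)$, and use that $(\mathbf{X}_n,\varepsilon)$ and $(\mathbf{X}_n',\varepsilon)$ have the same law to obtain $\ex\,\varPhi(\lVert\mf{R}(f) - \mf{R}_n(f)\rVert_{\mathscr{F}}) \le \ex\,\varPhi(2\lVert R_n(f)\rVert_{\mathscr{F}})$. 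Taking $\varPhi$ to be the identity gives the stated special case.

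I expect the symmetrization step to be the main obstacle: one must argue carefully, by conditioning on the joint sample $(\mathbf{X}_n, \mathbf{X}_n')$ and invoking coordinatewise symmetry, that inserting the signs $\varepsilon_i$ leaves the law of the $\mathscr{F}$-supremum unchanged. Secondary technical points are the measurability of the various suprema over $\mathscr{F}$ (handled by passing to outer expectations, as in the standard empirical-process treatment) and the verification that $g\mapsto\varPhi(\lVert g\rVert_{\mathscr{F}})$ is genuinely convex, which is where the nondecreasing assumption on $\varPhi$ enters.
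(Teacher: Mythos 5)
The paper does not prove this lemma at all --- it is quoted, with citation, from van der Vaart and Wellner (Lemma~2.3.1), and your ghost-sample argument is precisely the proof given in that source, so you have correctly reconstructed the intended argument. Two remarks. First, you are right to flag that $\varPhi$ must be nondecreasing as well as convex: without monotonicity the map $g\mapsto\varPhi(\lVert g\rVert_{\mathscr{F}})$ need not be convex and the final triangle-inequality step breaks; the paper's statement omits this hypothesis (it is harmless downstream, since the lemma is only ever invoked with $\varPhi$ the identity). Second, your justification of the sign-flip step is misstated: conditionally on the full joint sample $(\mathbf{X}_n,\mathbf{X}_n')$ the array $\left(f(X_i')-f(X_i)\right)_{i\in[n]}$ is deterministic, so it cannot have a sign-symmetric conditional law. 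The correct argument is either to condition on the unordered pairs $\{X_i,X_i'\}$ (under which the order within each pair is an independent fair coin), or, unconditionally, to note that for each fixed sign vector $e\in\{-1,+1\}^n$ the transposition $X_i\leftrightarrow X_i'$ on $\{i:e_i=-1\}$ preserves the law of the joint sample, so that $\bigl\lVert n^{-1}\sum_{i}e_i\left(f(X_i')-f(X_i)\right)\bigr\rVert_{\mathscr{F}}$ has the same distribution for every $e$; averaging over $e$ drawn from the independent Rademacher law then yields your displayed equality. With that repair, and the outer-expectation caveat you already note for measurability, the proof is complete.
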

\begin{lemma}
    [\citealp{koltchinskii2011oracle}, Thm.~3.11]\label{lem:upperentrybound}
    Let $d_n$ be the empirical distance 
    \begin{equation*}
        d_n^2(f_1, f_2) = \frac{1}{n}\sum_{i=1}^{n} (f_1(X_i) - f_2(X_i))^2
    \end{equation*}
    and denote by $N(\mathscr{F}, \varepsilon, d_n)$ the $\varepsilon$-covering number of $\mathscr{F}$. Let $\sigma_n^2 \coloneqq \sup_{f\in\mathscr{F}} P_n f^2$. Then the following inequality holds
    \begin{equation*}
        {\ex}\left\{\lVert R_n(f)\rVert_{\mathscr{F}}\right\} \leq \frac{K}{\sqrt{n}}\ex \int_{0}^{2\sigma_n} \log^{1/2} N(\mathscr{F}, \varepsilon, d_n)\mathrm{d}\varepsilon
    \end{equation*}
    for some constant $K > 0$.
\end{lemma}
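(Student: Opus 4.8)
The plan is to recognize this as Dudley's entropy-integral (generic chaining) bound applied to the Rademacher process, exploiting the fact that, conditionally on the sample, $\{R_n(f)\}_{f\in\mathscr{F}}$ is a centred sub-Gaussian process whose natural metric is a rescaling of $d_n$. Throughout I would condition on $(X_i)_{i\in[n]}$, bound $\E_\varepsilon\lVert R_n(f)\rVert_{\mathscr{F}}$, and only at the end take the outer expectation over the data.

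First I would establish the sub-Gaussian increment property. For fixed data and any $f,g\in\mathscr{F}$, the increment $R_n(f)-R_n(g)=\frac1n\sum_{i=1}^n\varepsilon_i\,(f(X_i)-g(X_i))$ is a weighted Rademacher sum, so Hoeffding's lemma gives $\E_\varepsilon\exp\{\lambda(R_n(f)-R_n(g))\}\le\exp\{\tfrac{\lambda^2}{2}\cdot\tfrac1n d_n^2(f,g)\}$ for all $\lambda\in\R$. Hence the process has sub-Gaussian increments with respect to the metric $\rho_n:=d_n/\sqrt n$.

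Next I would run the chaining argument. Because the Rademacher process is centred, $\E_\varepsilon R_n(f_0)=0$ for any fixed anchor $f_0$, so $\E_\varepsilon\sup_f|R_n(f)|\le 2\,\E_\varepsilon\sup_f R_n(f)=2\,\E_\varepsilon\sup_f\{R_n(f)-R_n(f_0)\}$; this removes the usual additive anchor term. I would then approximate each $f$ by a sequence of successive $\rho_n$-nets at scales $2^{-j}$, telescope $R_n(f)-R_n(f_0)$ along the chain, and at each level apply the sub-Gaussian maximal inequality $\E_\varepsilon\max_{k\le M}Z_k\le\sigma\sqrt{2\log M}$ to the (at most $N_j^2$) increments, each of $\rho_n$-length $\lesssim 2^{-j}$. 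Summing the resulting dyadic series $\sum_j 2^{-j}\sqrt{\log N(\mathscr{F},\rho_n,2^{-j})}$ and comparing it to the integral $\int_0^{\mathrm{diam}}\sqrt{\log N(\mathscr{F},\rho_n,u)}\,\mathrm du$ (valid since $u\mapsto\log N$ is non-increasing) yields, conditionally on the data, $\E_\varepsilon\sup_f|R_n(f)|\le K_0\int_0^{D_{\rho_n}}\log^{1/2}N(\mathscr{F},\rho_n,u)\,\mathrm du$ for a universal constant $K_0$. It then remains to rewrite this in terms of $d_n$ and $\sigma_n$: since $N(\mathscr{F},\rho_n,u)=N(\mathscr{F},d_n,\sqrt n\,u)$ and the $\rho_n$-diameter equals $\mathrm{diam}_{d_n}(\mathscr{F})/\sqrt n$, the substitution $\varepsilon=\sqrt n\,u$ gives $\int_0^{D_{\rho_n}}\log^{1/2}N(\mathscr{F},\rho_n,u)\,\mathrm du=\frac1{\sqrt n}\int_0^{\mathrm{diam}_{d_n}(\mathscr{F})}\log^{1/2}N(\mathscr{F},d_n,\varepsilon)\,\mathrm d\varepsilon$. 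Because $\mathrm{diam}_{d_n}(\mathscr{F})\le 2\sup_{f\in\mathscr{F}}d_n(f,0)=2\big(\sup_f P_nf^2\big)^{1/2}=2\sigma_n$ and the integrand vanishes (covering number $1$) beyond the diameter, the upper limit may be taken as $2\sigma_n$; finally, taking the expectation over $(X_i)$ and moving it inside the integral by Fubini--Tonelli (the integrand is non-negative and measurable) delivers the claimed bound with $K=2K_0$.

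I expect the main obstacle to be the chaining step: carefully constructing the nested nets, controlling the number of increments at each level by the covering numbers, and converting the telescoped dyadic sum of sub-Gaussian maxima into the Dudley integral while keeping the upper limit pinned to the diameter $2\sigma_n$. The conditioning/Fubini and change-of-variables bookkeeping are routine by comparison; the only other point needing care is the use of centredness to discard the anchor term, which is exactly what allows the bound to appear with no additive $\sigma_n/\sqrt n$ contribution.
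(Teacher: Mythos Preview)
Your proposal is correct and follows the standard Dudley chaining argument for sub-Gaussian processes, which is essentially how the cited reference (Koltchinskii 2011, Thm.~3.11) establishes the bound. Note that the paper itself does not prove this lemma: it appears in the ``Technical results'' appendix as a result quoted from the literature without proof, so there is no paper-specific argument to compare against. One minor remark: your final Fubini comment is unnecessary, since the lemma already places the expectation outside the integral; once you have the conditional bound $\E_\varepsilon\lVert R_n(f)\rVert_{\mathscr F}\le \tfrac{K}{\sqrt n}\int_0^{2\sigma_n}\log^{1/2}N(\mathscr F,\varepsilon,d_n)\,\mathrm d\varepsilon$, taking the outer expectation over the sample gives the stated inequality directly.
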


\end{document}